\newcommand{\gv}[1]{\ensuremath{\mbox{\boldmath$ #1 $}}} 
\newcommand{\grad}[1]{\gv{\nabla} #1} 
\let\baraccent=\= 
\renewcommand{\=}[1]{\stackrel{#1}{=}} 
\newcommand{\vect} [1] {\boldsymbol{#1}}
\begin{document}

\markboth{A. Acharya et al.}{An Optimization Framework for Semi-Supervised and Transfer Learning using Multiple Classifiers and Clusterers}

\title{An Optimization Framework for Semi-Supervised and Transfer Learning using Multiple Classifiers and Clusterers}
\author{Ayan Acharya
\affil{University of Texas at Austin, USA}
Eduardo R. Hruschka
\affil{University of Texas at Austin, USA; University of Sao Paulo at Sao Carlos, Brazil}
Joydeep Ghosh
\affil{University of Texas at Austin, USA}
Sreangsu Acharyya
\affil{University of Texas at Austin, USA}}

\begin{abstract}
Unsupervised models can provide supplementary soft constraints to help classify new, ``target'' data since similar instances in the target set are more 
likely to share the same class label. Such models can also help detect possible differences between training and target distributions, which is useful in 
applications where concept drift may take place, as in transfer learning settings. This paper describes a general optimization framework that takes as input class 
membership estimates from existing classifiers learnt on previously encountered ``source'' data, as well as a similarity matrix from a 
cluster ensemble operating solely on the target data to be classified, and yields a consensus labeling of the target data. This framework admits a wide range of loss functions and classification/clustering methods. It exploits properties of Bregman divergences in conjunction with Legendre duality
 to yield a principled and scalable approach. A variety of experiments show that the proposed framework can yield results substantially superior  to those provided by popular transductive learning techniques or by na\"{\i}vely applying classifiers learnt on the original task to the target data.

\end{abstract}

\category{I.5.2}{Pattern Recognition}{Design Methodology}[Classifier design and evaluation]
\category{I.5.3}{Pattern Recognition}{Clustering}[Algorithms]
\category{I.5.4}{Pattern Recognition}{Applications}[Computer vision \and Text processing]

\terms{Algorithms, Design, Performance, Theory}

\keywords{Classification; Clustering; Ensembles; Transductive Learning; Semisupervised Learning; Transfer Learning}

\acmformat{Acharya, A., Hruschka, E.R., Ghosh, J., Acharyya, S. 2012.}

\begin{bottomstuff}
This work has been supported by NSF Grants (IIS-0713142 and IIS-1016614) and by the Brazilian Research Agencies FAPESP and CNPq. 
Author's addresses: A. Acharya, Department of Electrical and Computer Engineering,
University of Texas at Austin; 
\end{bottomstuff}

\maketitle

\section{Introduction}
\label{Intro}

In several data mining applications, ranging from identifying distinct control regimes in complex plants to characterizing different types 
of stocks in terms of price and volume movements, one builds an initial classification model that needs to be applied to unlabeled data acquired 
subsequently. Since the statistics of the underlying phenomena being modeled often changes with time, these classifiers may also need to be occasionally 
rebuilt if performance degrades beyond an acceptable level. In such situations, it is desirable that the classifier functions well with as little 
labeling of new data as possible, since labeling can be expensive in terms of time and money, and it is a potentially error-prone process. Moreover, the 
classifier should be able to adapt to changing statistics to some extent, given the afore-mentioned constraints. 

This paper addresses the problem of combining multiple classifiers and clusterers in a fairly general setting, that includes the scenario sketched above.  
An  ensemble of classifiers is first learnt on an initial labeled training dataset which can conveniently be denoted by ``source'' dataset. 
At this point, the training data can be discarded.  
Subsequently, when new, unlabeled target data is encountered, a cluster ensemble is applied to it to yield a similarity 
matrix. In addition, the previously learnt classifier(s) can be used to obtain an estimate of the  class probability distributions 
for this data. The heart of our technique is an optimization framework that combines both sources of information to yield a 
consensus labeling of the target data. General properties of a large class of loss functions described by Bregman divergences are 
exploited in this framework in conjunction with Legendre duality and  a
notion of variable splitting that is also used in alternating direction method of multipliers \cite{bopc11}) to yield a principled and scalable solution. 

Note that the setting described above is different from transductive learning setups where both labeled and unlabeled data are available 
at the same time for model building \cite{sibe08}, as well as online methods where decisions are made on one new example at a time, and after each such decision,  the true label of 
the example is obtained and used to update the model parameters \cite{blum98}. Additional differences from existing approaches are described in the 
section on related works. For the moment we note that 
the underlying assumption is that similar new instances in the target set are more likely to share the same class label. Thus, the supplementary 
constraints provided by the cluster ensemble can be useful for improving the generalization capability of the resulting classifier system, specially 
when labeled data for training the base classifiers is scarce. Also, these supplementary constraints provided by unsupervised models can be useful for 
designing learning methods that help determine differences between training and target distributions, making the overall system more robust against concept 
drift. To highlight these additional capabilities that are useful for transfer learning, we provide a separate set of empirical studies where the target 
data is related to but significantly different from the initial training data.
 
The remainder of this paper is organized as follows. After addressing related work in Section \ref{sec:relatedwork}, the proposed optimization framework 
and its associated algorithm --- named \textbf{OAC\textsuperscript{3}}, from \textbf{O}ptimization \textbf{A}lgorithm for \textbf{C}ombining \textbf{C}lassifiers and \textbf{C}lusterers --- are described in 
Section \ref{sec:C3E}. 
This particular algorithm has been briefly introduced in \cite{achg11}.
A convergence analysis of \textbf{OAC\textsuperscript{3}} is reported in Section \ref{sec:convergence}, while Section \ref{rateofconv} 
analyses its convergence rate. An experimental study illustrating the potential of the proposed framework for a 
variety of applications is reported in Section \ref{sec:exp}. Finally, Section \ref{sec:conc} concludes the paper.

\textbf{Notation}. Vectors and matrices are denoted by bold faced lowercase and capital letters, respectively. Scalar 
variables are written in italic font. A set is denoted by a calligraphic uppercase letter. The effective domain of a function $f(y)$, \textit{i.e.}, the set of 
all $y$ such that $f(y)<+\infty$ is denoted by $\text{dom}(f)$, while the interior and the relative interior of a set $\mathcal{Y}$ are denoted by int($\mathcal{Y}$) 
and ri($\mathcal{Y}$), respectively. For $\mathbf{y}_{i}, \mathbf{y}_{j}\in\mathbb{R}^{k}$, $\langle\mathbf{y}_{i},\mathbf{y}_{j}\rangle$ denotes their inner product.
A function $f\in C^{k^{\prime}}$ if all of its first $k^{\prime}$ derivatives exist and are continuous.

\section{Related Work}
\label{sec:relatedwork}

This contribution leverages the theory of classifier and cluster ensemble to solve transfer and semi-supervised learning problems. 
Also, the underlying optimization framework inherits properties from alternating optimization type of algorithms. 
In this section, a brief introduction to each of these different research areas is provided. 


The combination of multiple single or base classifiers to generate a more capable ensemble classifier has been an active area of research for the past two decades [\citeNP{kunc04}; \citeNP{oztu08}].  Several papers provide both 
theoretical results \cite{tugh96} and empirical evidence showing the utility of such approaches for solving difficult classification problems. 
For instance, an analytical framework to mathematically quantify the improvements in classification results due to combining multiple models has been addressed in \cite{tugh96}. 
A survey of traditional ensemble techniques --- including their applications to many difficult real-world problems such as remote sensing, person recognition, one vs. all recognition, and medicine --- is presented in \cite{oztu08}. 
In summary, the extensive literature on the subject has shown that an ensemble created from diversified classifiers is typically more accurate than its individual components. 

Analogously, 
several research efforts have shown that cluster ensembles can improve the quality of results as compared to a single clustering solution --- {\it e.g.}, see \cite{hoha11,ghac11} and references therein.
Indeed,   the potential motivations and benefits for using cluster ensembles are much broader than those for using classifier ensembles, 
for which improving the predictive accuracy is usually the primary goal. More specifically, cluster ensembles can be used to generate more robust and stable clustering results (compared to a single clustering approach), 
perform distributed computing under privacy or sharing constraints, or reuse existing knowledge \cite{stgh02b}.
We note however that:
\begin{itemize}
 \item Like single classifiers/clusterers, with very few exceptions \cite{poli07}, 
 ensemble methods assume that the test or scoring data comes from the same underlying distribution as the training (and validation) data. Thus their performance degrades if the underlying input-output map changes over time.
 \item There is relatively little work in incorporating both labeled and unlabeled data while building ensembles, in contrast to the substantial amount of recent interest in semi-supervised learning - including semi-supervised 
clustering, semi-supervised classification, clustering with constraints and transductive learning methods - using a single model [\citeNP{chsz06}; \citeNP{zhgo09}; \citeNP{cacz09}; \citeNP{fogw10}; \citeNP{chgc09}].
\end{itemize}

Transfer learning emphasizes the transfer of knowledge across related domains, tasks and distributions that are similar but not the same. 
The domain from which the knowledge is transferred is called the ``source'' domain and the domain to which the knowledge is transferred is 
called the ``target'' domain. In transfer learning scenarios, the source and 
target distributions are somewhat different, as they represent (potentially) related but not identical tasks.
The literature on transfer learning is fairly rich and varied (\textit{e.g.}, see [\citeNP{paya10}; \citeNP{sibe08}] and references therein), 
with much work done in the past 15 years \cite{thpr97}. The tasks may be learnt simultaneously \cite{caru97} 
or sequentially \cite{bogh00}. 

The novelty of our approach lies in the utilization of the theory of 
both classifier and cluster ensembles to address the challenge when there is very few labeled examples from the target class.
There are certain application domains such as the problem of land-cover classification
of spatially separated regions, where the setting is appropriate. 
Moreover, one does not always need to know \textit{a priori} whether the target is similar to the source domain. 
Though there is a recent paper that uses a single clustering to modify the weights of base classifiers in an ensemble 
in order to provide some transfer learning capability \cite{gafj08}, that algorithm is completely different from 
ours. 

Semi-supervised learning is a domain of machine learning where both labeled and unlabeled data are used to train a model -- typically with lot of unlabeled data and only a small amount 
of labeled data (see [\citeNP{bedl06}; \citeNP{zhgo09}] and the references therein for more details). There are several graph-based semi-supervised algorithms that use either 
the graph structure to spread labels from labeled to unlabeled samples, or optimize 
a loss function that includes a smoothness constraint derived from the graph [\citeNP{zhpd06}; \citeNP{subi09}; \citeNP{subi11}]. These approaches are typically non-parametric and transductive,
needing both the labeled and unlabeled data to be simultaneously available for the entire training process. 
\textbf{OAC\textsuperscript{3}} can use parametric classifiers so that old labeled data can be discarded once the classifier parameters are obtained,
leading to additional savings in speed and storage.

 A majority of previously proposed graph-based semi-supervised 
algorithms [\citeNP{zhgh02b}; \citeNP{joachims03}; \citeNP{bens05}; \citeNP{bedl06}] are based on minimizing squared-loss,  
while in \cite{subi11} (Measure Propagation -- \textbf{MP}), \cite{coja03} and \cite{tsuda05}, the authors used KL divergence. 
\textbf{OAC\textsuperscript{3}} uses certain Bregman divergences \cite{Censor-Zenios}, among which the KL divergence and squared loss constitute just a 
subset (further details are provided later, in Section \ref{sec:convergence}). This facilitates one to use
well-defined functions of measures for a specific problem in order to improve performance. 
Additionally, the techniques of variable splitting \cite{bopc11} and alternating minimization procedure \cite{beha02} are invoked to 
provide a more scalable solution.

The work that comes closest  
to ours is by Gao
\emph{et al}. \cite{galf09,Gao_TKDE}, which also combines the outputs of \textit{multiple}
supervised and unsupervised models. Here, it is assumed that each
model partitions the target dataset $\mathcal{X}$ into groups, so that the
instances in the same group share either the same predicted class label or the
same cluster label. The data, models and outputs are summarized by a bipartite
graph with connections only between group nodes and instance nodes.  A group node and
an instance node are connected if the instance is assigned to the group --- no
matter if it comes from a supervised or unsupervised model. The authors  cast the final consensus labeling  as an
optimization problem on this bipartite graph. To solve the optimization
problem, they introduce the Bipartite Graph-based Consensus Maximization
(\textbf {BGCM}) Algorithm, which is essentially a block coordinate descent
based algorithm that performs an iterative propagation of probability estimates
among neighboring nodes.
Note that their formulation requires {\em hard} classification and clustering inputs.  In contrast,  
\textbf{OAC\textsuperscript{3}}  essentially processes only two
fused models, namely an ensemble of classifiers and an ensemble of
clusterers, the constituents of both of which can be either hard or soft. 
Moreover, \textbf{OAC\textsuperscript{3}} avoids solving a difficult
correspondence problem --- \textit{i.e.}, aligning cluster labels to class
labels  --- implicitly tackled by \textbf {BGCM}, and has a lower computational complexity as well.

\section{Description of \textbf{OAC\textsuperscript{3}}}
\label{sec:C3E}

The proposed framework that combines classifiers and clusterers to generate a more consolidated classification is depicted in Fig. \ref{fig:1}. 
It is assumed that a set of classifiers (consisting of one or more classifiers) have been previously induced from a training set. Such classifiers could have been derived from labeled and unlabeled data, and they are part of the framework that will 
be used for classifying new data --- \textit{i.e.}, instances from the target set $\mathcal{X}=\{\mathbf{x}_{i}\}_{i=1}^{n}$. 
The target set is a test set that has not been used to build the classifiers. The classifiers are employed to estimate initial class probabilities for every instance $\mathbf{x}_{i}\in\mathcal{X}$. These probability distributions are stored as a set of vectors $\{\boldsymbol{\pi}_{i}\}_{i=1}^{n}$ and will be refined with the help of the clusterer(s). From this point of view, the clusterers provide supplementary constraints for classifying the instances of $\mathcal{X}$, 
with the rationale that similar instances are more likely to share the same class label. 

Given $k$ classes, denoted by $C=\{C_{\ell}\}_{\ell=1}^{k}$\footnote{C, with an overload of notation, is used here to denote a collection of classes and should not be confused with $C^{{k^{\prime}}}$ which 
is used to denote smoothness of a function.}, 
each of $\boldsymbol{\pi}_{i}$'s is of dimension $k$. In order to capture the similarities between the 
instances of $\mathcal{X}$, \textbf{OAC\textsuperscript{3}} also takes as input a similarity matrix \textbf{S}, which can be computed from a cluster ensemble, in such 
a way that each matrix entry corresponds to the relative co-occurrence of two instances in the same cluster \cite{stgh02b} --- considering all the data partitions that 
form the cluster ensemble induced from $\mathcal{X}$. Alternatively, \textbf{S} can be obtained from computing pair-wise similarities between instances, or   from a 
cophenetic matrix resulting from running a hierarchical clustering algorithm. To summarize, \textbf{OAC\textsuperscript{3}} receives as inputs a set of vectors 
$\{\boldsymbol{\pi}_{i}\}_{i=1}^{n}$ and a similarity matrix \textbf{S} for the target set. After processing these inputs, \textbf{OAC\textsuperscript{3}} outputs a 
consolidated classification --- represented by a set of vectors $\{\mathbf{y}_{i}\}_{i=1}^n \in \mathcal{S}\subseteq \mathbb{R}^{k}$, where $\mathbf{y}_{i}\propto\hat{P}(C\mid\mathbf{x}_{i})$ (estimated posterior class probability assignment)
--- for every instance in $\mathcal{X}$. This procedure is described in more detail in the sequel.

\begin{figure*}[ht]
 \centering
 \includegraphics[bb=0 0 902 489,scale=0.4]{./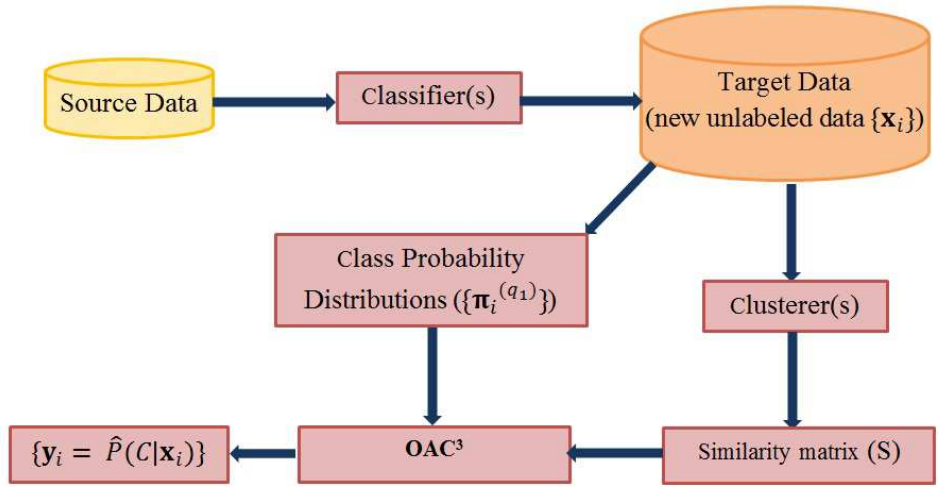}
 \caption{Overview of \textbf{OAC\textsuperscript{3}}.}
 \label{fig:1}
\end{figure*}

\subsection{Optimization Algorithm --- \textbf{OAC\textsuperscript{3}}}
\label{Algorithm}

Consider that $r_{1}$ ($r_{1}\geq1$) classifiers, indexed by $q_{1}$, and $r_{2}$ ($r_{2}\geq1$) clusterers, indexed by $q_{2}$, are employed to obtain a consolidated classification. The following steps (I-III) outline the proposed approach. Steps I and II can be seen as preliminary steps to get the inputs for \textbf{OAC\textsuperscript{3}}, while Step III is the optimization algorithm, which will be discussed in more detail.

{\bf Step I - Obtain input from classifiers.} The output of classifier $q_{1}$ for instance $\mathbf{x}_{i}$ is a $k$-dimensional class probability vector $\boldsymbol{\pi}_{i}^{(q_{1})}$. This probability vector denotes the probabilities for $\mathbf{x}_{i}$ being assigned to the corresponding classes (which might be soft or hard assignments).
From the set of such vectors $\{\boldsymbol{\pi}_{i}^{(q_{1})}\}_{q_{1}=1}^{r_{1}}$, an average vector can be computed for $\mathbf{x}_{i}$ as:

\begin{equation}
\label{eqn:t3}
\boldsymbol{\pi}_{i}=\displaystyle\frac{1}{r_{1}}\sum_{q_{1}=1}^{r_{1}}\boldsymbol{\pi}_{i}^{(q_{1})}.
\end{equation}

{\bf Step II - Obtain a similarity matrix.} A similarity matrix can be obtained in a number of ways, such as computing pair-wise similarities between instances from the original space of features. For high-dimensional data, it is usually more appropriate to use a cluster ensemble for computing similarities between instances of the target set. In this case, after applying $r_{2}$ clustering algorithms (clusterers) to $\mathcal{X}$, a similarity matrix $\mathbf{S}$ is computed. 
Assuming that each clustering is a hard data partition (possibly obtained from a particular subspace), the similarity between two instances is simply the fraction of the $r_{2}$ clustering solutions in which those two instances lie in the same cluster\footnote{A similarity matrix can also be defined for soft clusterings --- \textit{e.g.}, see \cite{pugh08b}.}. Note that such similarity matrices are byproducts of several cluster ensemble solutions, e.g., the \textbf{CSPA} algorithm in \cite {stgh02b}.


{\bf Step III - Obtain consolidated results from \textbf{OAC\textsuperscript{3}}}. Having defined the inputs for \textbf{OAC\textsuperscript{3}}, namely the set $\{\boldsymbol{\pi}_{i}\}_{i=1}^{n}$ and the similarity matrix, $\mathbf{S}$, the problem of combining classifiers and clusterers can be posed as an optimization problem whose objective is to minimize $J$ in (\ref{eqn:4}) with respect to the set of probability vectors $\{\mathbf{y}_{i}\}_{i=1}^{n}$, 
where $\mathbf{y}_{i}$ is the new and hopefully improved estimate of the aposteriori class probability distribution for a given instance in $\mathcal{X}$.\footnote{From now on, for generality, we assume that we have two ensembles (a classifier ensemble and a cluster ensemble), but note that 
each of these ensembles may be formed by a single component.}

\begin{equation}
\label{eqn:4}
J^{\text{original}}=\displaystyle\sum_{i\in\mathcal{X}}\mathcal{L}(\boldsymbol{\pi}_{i}, \mathbf{y}_{i})+\alpha \displaystyle\sum_{(i,j)\in\mathcal{X}}
s_{ij}\mathcal{L}(\mathbf{y}_{i}, \mathbf{y}_{j})
\end{equation}

The quantity $\mathcal{L}(\cdot,\cdot)$ denotes a loss function. Informally, the first term in Eq. (\ref{eqn:4}) captures dissimilarities between the class 
probabilities provided by the ensemble of classifiers and the output vectors $\{\mathbf{y}_{i}\}_{i=1}^{n}$. The second term encodes the cumulative weighted 
dissimilarity between all possible pairs $(\mathbf{y}_{i}, \mathbf{y}_{j})$. The weights to these pairs are assigned in proportion to the similarity values 
$s_{ij} \in[0,1]$ of matrix \textbf{S}. The coefficient $\alpha\in\mathbb{R_{+}}$ controls the relative importance of classifier and cluster ensembles. Therefore, 
minimizing the objective function over $\{\mathbf{y}_{i}\}_{i=1}^{n}$ involves combining the evidence provided by the ensembles in order 
to build a more consolidated classification.

The approach taken in this paper is quite general in the sense that any Bregman divergence that satisfies some specific 
properties (these properties will be introduced 
in more detail in section \ref{sec:convergence} where the discussion is more relevant)
can be used as a loss function $\mathcal{L}(\cdot,\cdot)$ in Eq. (\ref{eqn:4}).
So, before going into further details, the formal definition of Bregman divergence is provided.
\begin{definition}[\cite{breg67}, \cite{bamd05}] 
\label{def:1} 
Let $\phi:\mathcal{S}\rightarrow\mathbb{R},\mathcal{S}=\text{dom}(\phi)$ be a strictly
convex function defined on a convex set $\mathcal{S}\subseteq\mathbb{R}^{k}$ such that $\phi$ is differentiable on $\text{ri}(\mathcal{S})$, which is assumed to
be nonempty. The Bregman divergence $d_{\phi}:\mathcal{S}\times\text{ri}(\mathcal{S})\rightarrow[0,\infty)$ is defined as
$d_{\phi} (p, q) = \phi(p)-\phi(q)-\langle p-q, \grad_{\phi}(q)\rangle$,
where $\grad_{\phi}(q)$ represents the gradient vector of $\phi$ evaluated at $q$. 
\end{definition}

A specific Bregman Divergence (\textit{e.g.} KL-divergence) 
between two vectors $\mathbf{y}_{i}\text{ and }\mathbf{y}_{j}$ 
can be identified by a corresponding strictly convex function $\phi$ (\textit{e.g.} negative entropy for KL-divergence), and hence be written as $d_{\phi}(\mathbf{y}_{i}, \mathbf{y}_{j})$.
Following from Definition \ref{def:1}, $d_{\phi}(\mathbf{y}_{i}, \mathbf{y}_{j})\geq0\text{ }\forall\mathbf{y}_{i}\in\mathcal{S}, \mathbf{y}_{j}\in\text{ri}(\mathcal{S})$ and equality holds if and only if $\mathbf{y}_{i}=\mathbf{y}_{j}$. 
Using this notation, the objective function of \textbf{OAC\textsuperscript{3}}, that is going to be minimized over $\{\mathbf{y}_{i}\}_{i=1}^{n}$, can be rewritten as: 

\begin{equation}
\label{eqn:5}
J_{0} = \left[\displaystyle\sum_{i\in\mathcal{X}}d_{\phi}(\boldsymbol{\pi}_{i}, \mathbf{y}_{i})+\alpha\displaystyle\sum_{(i,j)\in\mathcal{X}} s_{ij}d_{\phi}(\mathbf{y}_{i}, \mathbf{y}_{j})\right].
\end{equation}

All Bregman divergences have the remarkable property that the single best (in terms of minimizing the net loss) representative of 
a set of vectors, is simply the expectation of this set (!) provided the divergence is computed with this representative as the second 
argument of $d_{\phi}(\cdot, \cdot)$ --- see Theorem \ref{mainthmbanerjee} in the sequel for a more formal statement of this result. 
Unfortunately, this simple form of the optimal solution is not valid if the variable to be optimized occurs as the first argument. 
In that case, however, one can work in the (Legendre) dual space, where the optimal solution has a simple form --- see \cite{bamd05} 
for details. Re-examining Eq. (\ref{eqn:5}), we notice that the ${\mathbf{y}_{i}}$'s to be minimized over occur both as first and second 
arguments of a Bregman divergence.
Hence optimization over ${\{\mathbf{y}_{i}\}_{i=1}^{n}}$ is not available in closed form. 
We circumvent this problem by creating two copies for each ${\mathbf{y}_{i}}$ --- the 
left copy, $\mathbf{y}^{(l)}_{i}$, and the right copy, $\mathbf{y}^{(r)}_{i}$. The left (right) copies are used whenever the variables are encountered in the first 
(second) argument of the Bregman divergences. 
In what follows, it will be clear that the right and left copies are updated iteratively, and an additional soft constraint 
is used to ensure that the two copies of a variable remain ``close 
enough'' during the updates. With this modification, we propose minimizing the following objective $J:\mathcal{S}^{n}\times\mathcal{S}^{n}\to [0, \infty)$:

\begin{equation}
\label{eqn:mainobj}
 J(\mathbf{y}^{(l)}, \mathbf{y}^{(r)}) 
= \left[\displaystyle\sum_{i=1}^{n}d_{\phi}(\boldsymbol{\pi}_{i}, \mathbf{y}_{i}^{(r)})
+ \alpha\displaystyle\sum_{i,j=1}^{n} s_{ij}d_{\phi}(\mathbf{y}_{i}^{(l)}, \mathbf{y}_{j}^{(r)})
+ \lambda\displaystyle\sum_{i=1}^{n}d_{\phi}(\mathbf{y}_{i}^{(l)}, \mathbf{y}_{i}^{(r)})\right],
\end{equation}
where, $\mathbf{y}^{(l)} = \left(\mathbf{y}_{i}^{(l)}\right)_{i=1}^{n}\in\mathcal{S}^{n}$
and $\mathbf{y}^{(r)} = \left(\mathbf{y}_{i}^{(r)}\right)_{i=1}^{n}\in\mathcal{S}^{n}$.

To solve the optimization problem in an efficient way, we first keep $\{\mathbf{y}^{(l)}_{i}\}_{i=1}^n$ 
and $\{\mathbf{y}^{(r)}_{i}\}_{i=1}^n\setminus\{\mathbf{y}^{(r)}_{j}\}$ fixed, 
and minimize the objective w.r.t. $\mathbf{y}^{(r)}_{j}$ only. The problem can, therefore, be written as:

\begin{equation}
\label{eqn:7} 
\min_{\mathbf{y}^{(r)}_{j}}\Biggl[d_{\phi}(\boldsymbol{\pi}^{(r)}_{j}, \mathbf{y}^{(r)}_{j}) +\alpha\displaystyle\sum_{i^{(l)}\in{\mathcal{X}}} s_{i^{(l)}j^{(r)}}d_{\phi}(\mathbf{y}^{(l)}_{i}, 
\mathbf{y}^{(r)}_{j})+\lambda_{j}^{(r)}d_{\phi}(\mathbf{y}^{(l)}_{j}, \mathbf{y}^{(r)}_{j})\Biggr],
\end{equation}

where $\lambda_{j}^{(r)}$ is the corresponding penalty parameter that is used to keep $\mathbf{y}^{(r)}_{j}$ and 
$\mathbf{y}^{(l)}_{j}$ close to each other.
For every valid assignment of $\{\mathbf{y}^{(l)}_{i}\}_{i=1}^{n}$, it can be shown that there is a unique 
minimizer ${\mathbf{y}^{(r)}_{j}}^{\ast}$ for the optimization problem in (\ref{eqn:7}). For that purpose,
a new Corollary is developed from the results of Theorem \ref{mainthmbanerjee} \cite{bamd05} that is stated below. 

\begin{theorem}[\cite{bamd05}]
\label{mainthmbanerjee}
Let $Y$ be a random variable that takes values in $\mathcal{Y}=\{\mathbf{y}_{i}\}_{i=1}^{n}\subset\mathcal{S}\subseteq \mathbb{R}^{k}$ following a probability measure $v$ such that $\mathbb{E}_{v}[Y]\in\text{ri}(\mathcal{S})$. 
 Given a Bregman divergence $d_\phi\colon\mathcal{S}\times\text{ri}(\mathcal{S})
 \rightarrow [0,\infty)$, the optimization problem $\min_{\mathbf{s}\in\text{ri}(\mathcal{S})}\mathbb{E}_{v}[d_\phi(Y,\mathbf{s})]$ 
 has a unique minimizer given by $\mathbf{s}^{\ast}=\boldsymbol{\mu}=\mathbb{E}_{v}[Y]$. 
\end{theorem}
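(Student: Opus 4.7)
The plan is to show that for any candidate $\mathbf{s} \in \text{ri}(\mathcal{S})$, the expected Bregman loss admits a clean Pythagorean-style decomposition
\[
\mathbb{E}_{v}[d_\phi(Y,\mathbf{s})] = \mathbb{E}_{v}[d_\phi(Y,\boldsymbol{\mu})] + d_\phi(\boldsymbol{\mu},\mathbf{s}),
\]
after which strict convexity of $\phi$ does the rest. Concretely, I would first note that the hypothesis $\boldsymbol{\mu} = \mathbb{E}_{v}[Y] \in \text{ri}(\mathcal{S})$ guarantees that $\phi$ is differentiable at $\boldsymbol{\mu}$, so $d_\phi(\,\cdot\,,\boldsymbol{\mu})$ is well-defined. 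Then I would expand the definition of the divergence inside the expectation and use linearity of $\mathbb{E}_{v}$ on the inner product term.

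Second, I would carry out the algebraic manipulation: starting from
\[
\mathbb{E}_{v}[d_\phi(Y,\mathbf{s})] = \mathbb{E}_{v}[\phi(Y)] - \phi(\mathbf{s}) - \langle \boldsymbol{\mu} - \mathbf{s},\, \nabla\phi(\mathbf{s})\rangle,
\]
add and subtract $\phi(\boldsymbol{\mu})$, and group the last three terms as $\phi(\boldsymbol{\mu}) - \phi(\mathbf{s}) - \langle \boldsymbol{\mu} - \mathbf{s},\nabla\phi(\mathbf{s})\rangle = d_\phi(\boldsymbol{\mu},\mathbf{s})$. The remaining $\mathbb{E}_{v}[\phi(Y)] - \phi(\boldsymbol{\mu})$ equals $\mathbb{E}_{v}[d_\phi(Y,\boldsymbol{\mu})]$, because the cross term $\langle Y - \boldsymbol{\mu}, \nabla\phi(\boldsymbol{\mu})\rangle$ has zero expectation under $v$ by the very definition of $\boldsymbol{\mu}$. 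This gives the desired decomposition.

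Finally, since the first summand $\mathbb{E}_{v}[d_\phi(Y,\boldsymbol{\mu})]$ does not depend on $\mathbf{s}$, minimization reduces to minimizing $d_\phi(\boldsymbol{\mu},\mathbf{s})$. By Definition \ref{def:1} together with strict convexity of $\phi$, $d_\phi(\boldsymbol{\mu},\mathbf{s}) \geq 0$ with equality if and only if $\mathbf{s} = \boldsymbol{\mu}$. Therefore $\mathbf{s}^{\ast} = \boldsymbol{\mu}$ is the unique minimizer.

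I do not anticipate a serious obstacle: the argument is essentially bookkeeping once the Pythagorean identity is identified. The one point requiring mild care is justifying that $\boldsymbol{\mu} \in \text{ri}(\mathcal{S})$ (so that $\nabla\phi(\boldsymbol{\mu})$ exists and the expression $d_\phi(\boldsymbol{\mu},\mathbf{s})$ is legitimate), which is handed to us directly by the theorem's hypothesis. Uniqueness rests entirely on strict convexity of $\phi$, which is part of the definition of a Bregman divergence.
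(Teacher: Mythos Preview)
The paper does not actually prove this theorem; it is stated with attribution to \cite{bamd05} and used as a black box. Your argument is correct and is precisely the standard proof from that reference: the Pythagorean identity $\mathbb{E}_{v}[d_\phi(Y,\mathbf{s})] = \mathbb{E}_{v}[d_\phi(Y,\boldsymbol{\mu})] + d_\phi(\boldsymbol{\mu},\mathbf{s})$ followed by strict convexity. It is also worth noting that the paper's own proof of the immediately following Corollary~\ref{corofrommainthm} uses exactly the same mechanism you describe --- expanding the difference $J_\phi(\mathbf{s}) - J_\phi(\boldsymbol{\mu})$ and collapsing it to a single Bregman term $d_\phi(\boldsymbol{\mu},\mathbf{s})\sum_i\alpha_i$ --- so your approach is fully aligned with the paper's style of argument.
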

   
To solve the problem formulated in Eq.~(\ref{eqn:7}), the following corollary is required:

\begin{corollary}
\label{corofrommainthm}
Let $\{Y_{i}\}_{i=1}^{n}$ be a set of random variables, each of which takes values in $\mathcal{Y}_{i}=\{\mathbf{y}_{ij}\}_{j=1}^{n_{i}}\subset \mathcal{S}\subseteq \mathbb{R}^{k}$ following a probability measure
$v_{i}$ such that $\mathbb{E}_{v_{i}}[Y_{i}]\in\text{ri}(\mathcal{S})$. Consider a Bregman divergence 
$d_\phi\colon\mathcal{S}\times\text{ri}(\mathcal{S})\rightarrow [0,\infty)$ and an objective function of the form $J_{\phi}(\mathbf{s})=\displaystyle\sum_{i=1}^{m}\alpha_{i}\mathbb{E}_{v_{i}}[d_\phi(Y_{i},\mathbf{s})]$
with $\alpha_{i}\in\mathbb{R_{+}}\text{ }\forall i$. This objective function has a unique minimizer given by 
$\mathbf{s}^{\ast}=\boldsymbol{\mu}=\left[\displaystyle\sum_{i=1}^{m}\alpha_{i}\mathbb{E}_{v_{i}}[Y_{i}]\right]/\left[\displaystyle\sum_{i=1}^{m}\alpha_{i}\right]$. 
\end{corollary}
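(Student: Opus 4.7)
The plan is to reduce the corollary to a direct application of Theorem \ref{mainthmbanerjee} via the well-known ``Bregman-Pythagoras'' decomposition of expected divergences. First, for each index $i$, I would expand $d_\phi(Y_i,\mathbf{s})$ using Definition \ref{def:1}, take expectations under $v_i$, and use linearity together with the differentiability of $\phi$ on $\text{ri}(\mathcal{S})$ to obtain the identity
\begin{equation*}
\mathbb{E}_{v_i}\!\left[d_\phi(Y_i,\mathbf{s})\right] \;=\; \mathbb{E}_{v_i}\!\left[d_\phi(Y_i,\boldsymbol{\mu}_i)\right] + d_\phi(\boldsymbol{\mu}_i,\mathbf{s}),
\end{equation*}
where $\boldsymbol{\mu}_i = \mathbb{E}_{v_i}[Y_i] \in \text{ri}(\mathcal{S})$. (Indeed this identity is essentially the content of Theorem \ref{mainthmbanerjee}, since the $\mathbf{s}$-independent term is nonnegative and vanishes only when $\mathbf{s}=\boldsymbol{\mu}_i$.)

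Next, I would substitute this decomposition into the definition of $J_\phi(\mathbf{s})$ and separate the $\mathbf{s}$-dependent and $\mathbf{s}$-independent pieces:
\begin{equation*}
J_\phi(\mathbf{s}) \;=\; \underbrace{\sum_{i=1}^{m}\alpha_i\,\mathbb{E}_{v_i}\!\left[d_\phi(Y_i,\boldsymbol{\mu}_i)\right]}_{\text{constant in }\mathbf{s}} \;+\; \sum_{i=1}^{m}\alpha_i\, d_\phi(\boldsymbol{\mu}_i,\mathbf{s}).
\end{equation*}
Minimizing $J_\phi$ over $\mathbf{s}\in\text{ri}(\mathcal{S})$ is therefore equivalent to minimizing only the second sum.

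Now I would introduce the discrete probability measure $w$ on the finite set $\{\boldsymbol{\mu}_1,\ldots,\boldsymbol{\mu}_m\}$ defined by $w_i = \alpha_i / \sum_{j=1}^{m}\alpha_j$ (well-defined since each $\alpha_i>0$). Letting $M$ denote a random variable with law $w$, the residual objective becomes $\bigl(\sum_j \alpha_j\bigr)\,\mathbb{E}_{w}[d_\phi(M,\mathbf{s})]$, which is exactly the setting of Theorem \ref{mainthmbanerjee} provided $\mathbb{E}_w[M]\in\text{ri}(\mathcal{S})$. This last point is the only subtlety: each $\boldsymbol{\mu}_i$ already lies in $\text{ri}(\mathcal{S})$, and convex combinations of points in $\text{ri}(\mathcal{S})$ remain in $\text{ri}(\mathcal{S})$ (since the relative interior of a convex set is itself convex), so $\mathbb{E}_w[M] = \sum_i w_i\boldsymbol{\mu}_i\in\text{ri}(\mathcal{S})$.

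Applying Theorem \ref{mainthmbanerjee} then yields the unique minimizer
\begin{equation*}
\mathbf{s}^{\ast} \;=\; \mathbb{E}_w[M] \;=\; \frac{\sum_{i=1}^{m}\alpha_i\,\mathbb{E}_{v_i}[Y_i]}{\sum_{i=1}^{m}\alpha_i},
\end{equation*}
and uniqueness is inherited directly from the strict convexity of $\phi$ invoked in Theorem \ref{mainthmbanerjee}. The main obstacle, such as it is, lies in step one: one must be careful that the Bregman-Pythagoras identity can be applied termwise, which requires $\boldsymbol{\mu}_i\in\text{ri}(\mathcal{S})$ so that $\nabla\phi(\boldsymbol{\mu}_i)$ exists---but this is exactly what the hypotheses guarantee. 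Everything else is bookkeeping.
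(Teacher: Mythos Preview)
Your proof is correct, but it takes a somewhat different route from the paper's. You proceed modularly: first apply the Bregman--Pythagoras identity termwise to replace each $\mathbb{E}_{v_i}[d_\phi(Y_i,\mathbf{s})]$ by a constant plus $d_\phi(\boldsymbol{\mu}_i,\mathbf{s})$, and then invoke Theorem~\ref{mainthmbanerjee} a second time on the weighted collection $\{\boldsymbol{\mu}_i\}$ via the auxiliary random variable $M$. The paper instead performs a single direct expansion: it computes $J_\phi(\mathbf{s})-J_\phi(\boldsymbol{\mu})$ from the definition of $d_\phi$ and simplifies in one pass to $(\sum_i\alpha_i)\,d_\phi(\boldsymbol{\mu},\mathbf{s})\ge 0$, essentially re-deriving the theorem's argument in the weighted-sum setting rather than reducing to it. Your approach is cleaner in that it treats Theorem~\ref{mainthmbanerjee} as a black box and avoids repeating its internal algebra; the paper's approach is more self-contained and slightly shorter, since it never needs to name the intermediate means $\boldsymbol{\mu}_i$ or the auxiliary measure $w$. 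Both rely on the same convexity fact for $\text{ri}(\mathcal{S})$ to place $\boldsymbol{\mu}$ in the right domain.
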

    
\begin{proof}
Since $\mathbb{E}_{v_{i}}[Y_{i}]\in\text{ri}(\mathcal{S})\text{ }\forall i$, their convex combination should also belong to $\text{ri}(\mathcal{S})$,
implying that $\vect{\mu}\in\text{ri}(\mathcal{S})$. Now $\forall s\in\text{ri}(\mathcal{S})$ we have:
\begin{equation*}
J_{\phi}(\vect{s})-J_{\phi}(\vect{\mu}) = \displaystyle\sum_{i=1}^{m}\alpha_{i}\mathbb{E}_{v_{i}}[d_{\phi}(Y_{i},\vect{s})] - 
\displaystyle\sum_{i=1}^{m}\alpha_{i}\mathbb{E}_{v_{i}}[d_{\phi}(Y_{i},\vect{\mu})]
\end{equation*}
%
\begin{eqnarray*}
=\displaystyle\sum_{i=1}^{m}\alpha_{i}\left[\phi(\vect{\mu})-\phi(\vect{s})\right]
&-&\displaystyle\sum_{i=1}^{m}\alpha_{i}\big\langle\displaystyle\sum_{j=1}^{n}v_{ij}y_{ij}-\vect{s},\grad_{\phi}(\vect{s})\big\rangle\nonumber\\ 
&+&\displaystyle\sum_{i=1}^{m}\alpha_{i}\big\langle\displaystyle\sum_{j=1}^{n}v_{ij}y_{ij}-\vect{\mu},\grad_{\phi}(\vect{\mu})\big\rangle\nonumber\\
\end{eqnarray*}
\begin{equation*}
=\displaystyle\sum_{i=1}^{m}\alpha_{i}\left[\phi(\vect{\mu})-\phi(\vect{s})-\langle\vect{\mu}-\vect{s},\grad_{\phi}(\vect{s})\rangle\right]
=d_{\phi}(\vect{\mu},\vect{s})\displaystyle\sum_{i=1}^{m}\alpha_{i}\geq0
\end{equation*}
with equality only when $\vect{s}=\vect{\mu}$ following the strict convexity of $\phi$. Hence, $\vect{\mu}$ is the unique minimizer of the objective function $J_{\phi}$.
\end{proof}

From the results of Corollary \ref{corofrommainthm}, the unique minimizer of the optimization problem in (\ref{eqn:7}) is obtained as: 

\begin{equation}
\label{eqn:8} 
{\mathbf{y}^{(r)}_{j}}^{\ast} 
=\frac{\boldsymbol{\pi}_{j}^{(r)}+\gamma_{j}^{(r)}\displaystyle\sum_{i^{(l)}\in\mathcal{X}}\delta_{i^{(l)}j^{(r)}}\mathbf{y}^{(l)}_{i}
+\lambda_{j}^{(r)}\mathbf{y}^{(l)}_{j}}{1+\gamma_{j}^{(r)}+\lambda_{j}^{(r)}},
\end{equation}

where $\gamma_{j}^{(r)}=\alpha\sum_{i^{(l)}\in\mathcal{X}}s_{i^{(l)}j^{(r)}}$ and 
$\delta_{i^{(l)}j^{(r)}}=s_{i^{(l)}j^{(r)}}/\left[\sum_{i^{(l)}\in\mathcal{X}}s_{i^{(l)}j^{(r)}}\right]$. The same  
optimization in (\ref{eqn:7}) is repeated over all the $\mathbf{y}^{(r)}_{j}$'s. 
After the right copies are updated, the objective function is (sequentially) optimized with respect to all the $\mathbf{y}^{(l)}_{i}$'s. Like in the first step, $\{\mathbf{y}^{(l)}_{j}\}_{j=1}^n\setminus\{\mathbf{y}^{(l)}_{i}\}$ and $\{\mathbf{y}^{(r)}_{j}\}_{j=1}^n$ are kept fixed, and the difference between 
the left and right copies of $\mathbf{y}_{i}$ is penalized, so that the optimization with respect to $\mathbf{y}^{(l)}_{i}$ can be rewritten as:  

\begin{equation}
\label{eqn:9} 
\min_{\mathbf{y}^{(l)}_{i}}
\left[\alpha\displaystyle\sum_{j^{(r)}\in{\mathcal{X}}} s_{i^{(l)}j^{(r)}}d_{\phi}(\mathbf{y}^{(l)}_{i}, \mathbf{y}^{(r)}_{j})+\lambda_{i}^{(l)}d_{\phi}(\mathbf{y}^{(l)}_{i}, \mathbf{y}^{(r)}_{i})\right],
\end{equation}

where $\lambda_{i}^{(l)}$ is the corresponding penalty parameter. As mentioned earlier, one needs to work in the 
dual space now, using the convex function $\psi$ (Legendre dual of $\phi$) which is defined as:

\begin{equation}
\label{def:psi}
\psi(\mathbf{y}_{i})=\langle \mathbf{y}_{i}, \grad_{\phi}^{-1}(\mathbf{y}_{i})\rangle - 
\phi(\grad_{\phi}^{-1}(\mathbf{y}_{i})). 
\end{equation}

One can show that $\forall\mathbf{y}_{i}, \mathbf{y}_{j}\in \text{int}(\text{dom}(\phi))$,
$d_\phi(\mathbf{y}_{i},\mathbf{y}_{j})=d_\psi(\grad_{\phi}(\mathbf{y}_{j}),\grad_{\phi}(\mathbf{y}_{i}))$ 
--- see \cite{bamd05} for more details.
Thus, the optimization problem in (\ref{eqn:9}) can be rewritten in terms of the Bregman divergence associated with $\psi$ as follows:

\begin{eqnarray}
\label{eqn:t2} 
\min_{\grad_{\phi}(\mathbf{y}^{(l)}_{i})}
\Bigg[\alpha\displaystyle\sum_{j^{(r)}\in{\mathcal{X}}} s_{i^{(l)}j^{(r)}} d_{\psi}(\grad_{\phi}(\mathbf{y}^{(r)}_{j}),\grad_{\phi}(\mathbf{y}^{(l)}_{i}))
+\lambda_{i}^{(l)}d_{\psi}(\grad_{\phi}(\mathbf{y}^{(r)}_{i}), \grad_{\phi}(\mathbf{y}^{(l)}_{i}))\Bigg].
\end{eqnarray}

The unique minimizer of the problem in (\ref{eqn:t2}) can be computed using Corollary \ref{corofrommainthm}.  
$\grad_{\phi}$ is monotonic and invertible for $\phi$
being strictly convex and hence the inverse of the unique minimizer for the problem in (\ref{eqn:t2}) is also unique and 
equals to the unique minimizer for the problem in (\ref{eqn:9}). Therefore, the unique minimizer of the problem in (\ref{eqn:9}) 
with respect to ${\mathbf{y}^{(l)}_{i}}$ is given by:
\begin{equation}
\label{eqn:10} 
{\mathbf{y}^{(l)}_{i}}^{\ast} =
\grad_{\phi}^{-1}\left[
\frac{\gamma_{i}^{(l)}\displaystyle\sum_{j^{(r)}\in\mathcal{X}}\delta_{i^{(l)}j^{(r)}}\grad_{\phi}(\mathbf{y}^{(r)}_{j})
+\lambda_{i}^{(l)}\grad_{\phi}(\mathbf{y}^{(r)}_{i})}{\gamma_{i}^{(l)}+\lambda_{i}^{(l)}}\right],
\end{equation}
where $\gamma_{i}^{(l)}=\alpha\sum_{j^{(r)}\in\mathcal{X}}s_{i^{(l)}j^{(r)}}$ and 
$\delta_{i^{(l)}j^{(r)}}=s_{i^{(l)}j^{(r)}}/\left[\sum_{j^{(r)}\in\mathcal{X}}s_{i^{(l)}j^{(r)}}\right]$. 
For the experiments reported in this paper, the generalized I-divergence, defined as:
\begin{equation}
\label{eqn:dphi} 
d_{\phi}(\mathbf{y}_{i},\mathbf{y}_{j})=\displaystyle\sum_{\ell=1}^{k}y_{i\ell}\text{log}\left(\frac{y_{i\ell}}{y_{j\ell}}\right)-\displaystyle\sum_{\ell=1}^{k}(y_{i\ell}-y_{j\ell}), \forall \mathbf{y}_{i},\mathbf{y}_{j}\in \mathbb{R}^{k}_{+},
\end{equation}
has been used. The underlying convex function is then given by $\phi(\mathbf{y}_{i})=\displaystyle\sum_{\ell=1}^{k}y_{i\ell}\text{log}(y_{i\ell})$ so that 
$\grad_{\phi}(\mathbf{y}_{i})=\left(1+\text{log}({y}_{i\ell})\right)_{\ell=1}^{k}$. 
Thus, Eq. (\ref{eqn:10}) can be rewritten as:
\begin{equation}
\label{eqn:11} 
{\mathbf{y}^{(l)}_{i}}^{\ast,I}=\textit{exp}\left(\frac{\gamma_{i}^{(l)}\displaystyle\sum_{j^{(r)}\in\mathcal{X}}\delta_{i^{(l)}j^{(r)}}\grad_{\phi}(\mathbf{y}^{(r)}_{j})+\lambda_{i}^{(l)}\grad_{\phi}(\mathbf{y}^{(r)}_{i})}
{\gamma_{i}^{(l)}+\lambda_{i}^{(l)}}
\right)-\mathbf{1},
\end{equation}
where part of the superscript ``$,I$'' indicates that the optimal value corresponds to I-divergence.
Optimization over the left and right arguments of all the instances constitutes one pass (iteration) of the algorithm, and 
these two steps are repeated till convergence (a detailed proof for convergence will be given in Section \ref{sec:convergence}).
Upon convergence, all the $\mathbf{y}_{i}$'s are normalized to unit $L_{1}$ norm after averaging over the respective left and right copies, 
to yield the individual class probability distributions for every instance $\mathbf{x}_{i}\in\mathcal{X}$. The main steps of \textbf{OAC\textsuperscript{3}} are summarized in Algorithm ~\ref{algo:C3E}. 

\begin{algorithm}[tb]
\caption{--- \textbf{OAC\textsuperscript{3}}}
\label{algo:C3E}
 \textbf{Inputs}: $\{\vect{\pi}_{i}\}, \mathbf{S}$. \textbf{Output}: $\{\vect{y}_{i}\}$.\\
 \textbf{Step 0:} Initialize $\{\vect{y}_{i}^{(r)}\}, \{\vect{y}_{i}^{(l)}\}$ so that $\vect{y}_{i\ell}^{(r)}=\vect{y}_{i\ell}^{(l)}=\frac{1}{k}$ $\forall i\in\{1,2, \cdots,n\}$, $\forall \ell\in\{1,2,\cdots,k\}$.\\
  Loop until convergence:\\
  \textbf{Step 1:} Update $\vect{y}_{j}^{(r)}$ using Eq. \eqref{eqn:8} $\forall j\in\{1,2,\cdots,n\}$.\\
  \textbf{Step 2:} Update $\vect{y}_{i}^{(l)}$ using Eq. \eqref{eqn:10} $\forall i\in\{1,2,\cdots,n\}$.\\
  End Loop\\
  \textbf{Step 3:} Compute $\vect{y}_{i}=0.5[\vect{y}_{i}^{(l)}+\vect{y}_{i}^{(r)}]$ $\forall i\in\{1,2,\cdots,n\}$.\\ 
  \textbf{Step 4:} Normalize $\vect{y}_{i}$ $\forall i\in\{1,2,\cdots,n\}$.\\
\end{algorithm}

The update procedure captured by Eq. (\ref{eqn:10}) deserves some special attention. Depending on the divergence used, the update might 
not ensure that the left copies returned are in the correct domain. For example, if KL divergence is used, Eq. 
(\ref{eqn:10}) will not necessarily produce probabilities. In that case, one needs to use another Lagrangian multiplier to 
make sure that the returned values lie on simplex as has been done in \cite{subi11}.

\subsection{Time Complexity Analysis of OAC\textsuperscript{3}}
\label{TimeComplexity}

Considering that a trained ensemble of classifiers is available, the computation of the set of vectors $\{\boldsymbol{\pi}_{i}\}_{i=1}^{n}$ requires 
$O(nr_{1}k)$, where $n$ is the number of instances in the target set, $r_{1}$ is the number of components of the classifier ensemble, and $k$ is the number of class labels. 
Computing the similarity matrix, $\mathbf{S}$, is $O(r_{2}n^{2})$, where $r_{2}$ is the number of components of the cluster ensemble. Finally, 
having $\{\boldsymbol{\pi}_{i}\}_{i=1}^{n}$ and $\mathbf{S}$ available, the computational cost (per iteration) of  \textbf{OAC\textsuperscript{3}} is 
$O(kn^{2})$.
Actually, the computational bottleneck of \textbf{OAC\textsuperscript{3}} is not the optimization algorithm itself, whose main steps 
(1 and 2) can be parallelized (this can be identified by a careful inspection of Eq. (\ref{eqn:8}) and (\ref{eqn:10})), but the computation of the similarity matrix. Note that low values in the similarity matrix can 
often be zeroed out to further speed up the computation, without having much impact on the results.
\section{Convergence Analysis of OAC\textsuperscript{3}}
\label{sec:convergence}

We claim that \textbf{OAC\textsuperscript{3}} makes the objective $J$ in Eq. \ref{eqn:mainobj} converge to some unique 
minimizer when Bregman divergences with the following properties are used as loss functions:
\begin{enumerate}
 \item $d_{\phi} (\mathbf{p}, \mathbf{q})$ is strictly convex in $\mathbf{p}$ and $\mathbf{q}$ separately. 
 \item $d_{\phi} (\mathbf{p}, \mathbf{q})$ is jointly convex w.r.t $\mathbf{p}$ and $\mathbf{q}$. 
 \item The level sets $\{\mathbf{q}:d_{\phi} (\mathbf{p}, \mathbf{q})\le r\}$ are bounded for any given $\mathbf{p}\in\mathcal{S}$.
 \item $d_{\phi} (\mathbf{p}, \mathbf{q})$ is lower-semi-continuous in $\mathbf{p}$ and $\mathbf{q}$ jointly.
 \item If $d_{\phi} (\mathbf{p}^{t}, \mathbf{q}^{t})\to 0$ and $\mathbf{p}^{t}$ or $\mathbf{q}^{t}$ is bounded, then $\mathbf{p}^{t}\to \mathbf{q}^{t}$ and $\mathbf{q}^{t}\to \mathbf{p}^{t}$.
 \item If $\mathbf{p}\in\mathcal{S}$ and $\mathbf{q}^{t}\to \mathbf{p}$, then $d_{\phi} (\mathbf{p}, \mathbf{q}^{t})\to 0$.
\end{enumerate}

Bregman divergences that satisfy the above properties include a large number of useful loss functions 
such as the well-known squared loss, KL-divergence, generalized I-divergence, logistic loss, 
Itakura-Saito distance and  Bose-Einstein entropy \cite{wada03}. These divergences along with their associated
strictly convex functions $\phi(.)$ and domains are listed in Table \ref{table:BregmanDiv}. 

\begin{table}
\tbl{Examples of Bregman divergences that satisfy properties (a) to (f)}
{
\begin{tabular}{|| c | c | c | c ||}
\hline
\hline
  Domain & $\phi(\mathbf{p})$ & $d_{\phi}(\mathbf{p},\mathbf{q})$ & Divergence \\\hline
  $\mathbb{R}$ &  $p^{2}$ & $(p-q)^2$ & Squared Loss\\ \hline
  $[0,1]$ &  $p\text{log}(p) + (1-p)\text{log}(1-p)$ & $p\text{log}\big(\frac{p}{q}\big) + (1-p)\text{log}\big(\frac{1-p}{1-q}\big)$ & Logistic Loss\\ \hline
  $\mathbb{R}_{+}$ &  $p\text{log}(p) - (1+p)\text{log}(1+p)$ & $p\text{log}\big(\frac{p}{q}\big) - (1+p)\text{log}\big(\frac{1+p}{1+q}\big)$ & Bose-Einstein Entropy\\ \hline
  $\mathbb{R}_{++}$ &  $-\text{log}(p)$& $\frac{p}{q} - \text{log}\big(\frac{p}{q}\big) - 1$ & Itakura-Saito Distance\\ \hline
  $\mathbb{R}^{k}$ &  $||p||^{2}$ & $||p-q||^{2}$ & Squared Euclidean Distance\\ \hline
  $k$-simplex & $\displaystyle\sum_{i=1}^{k}p_{i}\text{log}_{2}(p_{i})$ & $\displaystyle\sum_{i=1}^{k}
p_{i}\text{log}_{2}\big(\frac{p_{i}}{q_{i}}\big)$ & KL-Divergence\\ \hline
  $\mathbb{R}_{+}^{k}$ & $\displaystyle\sum_{i=1}^{k}p_{i}\text{log}(p_{i})$ & $\displaystyle\sum_{i=1}^{k}
p_{i}\text{log}\big(\frac{p_{i}}{q_{i}}\big) - \displaystyle\sum_{i=1}^{k} (p_{i}-q_{i})$ & Generalized I-Divergence\\ \hline
\hline
\end{tabular}}
\label{table:BregmanDiv}
\end{table}
 
An alternating optimization algorithm, in general, is not guaranteed to converge. Even if it converges it might not converge to 
the locally optimal solution. Some authors [\citeNP{chgo59}; \citeNP{zang69}; \citeNP{wu82}; \citeNP{beha03}] have shown that the convergence guarantee of alternating optimization 
can be analyzed using the topological properties of the objective and the space over which it is optimized. 
Others have used information geometry [\citeNP{cstu84}; \citeNP{wasc03}; \citeNP{subi11}] to analyze the 
convergence as well as a combination of both information geometry and topological properties of the objective \cite{guby05}. In this paper, the information geometry approach  
is utilized to show that the proposed optimization procedure converges to the global minima of the objective $J$ in \ref{eqn:mainobj}. 

At this point it is worth mentioning the connection of the optimization framework with other related approaches.
The algorithms in [\citeNP{zhgh02b}; \citeNP{bens05}] are based on minimizing squared-loss
and are only suitable for binary classification problems. Multi-class extension of these algorithms is entirely based on one-vs-all strategy. 
\textbf{MP} \cite{subi11}, on the other hand, is suitable for multi-class 
problems and additionally provides guard against degenerate solutions (those that assign equal confidence to all classes). 
\textbf{OAC\textsuperscript{3}} does not guard against degenerate solutions but can easily be extended to 
alleviate the same problem with the addition of a single tuning parameter. In the experiments reported, no significant difference in performance is 
observed with this extension and hence it is discarded to help tune one less model parameter. 
Label Propagation (\cite{zh05} -- \textbf{LP}) is another related algorithm and has been shown to converge to the optimal 
solution. In \cite{subi11}, the authors also proved that their algorithm converges but the convergence rate (for KL divergence) is not proven and only empirical evidence is given for a 
linear rate. In this paper, apart from generalizing these algorithms with a larger class of Bregman divergences, we provide
proofs for linear rate of convergence for generalized I divergence and KL divergence (the proof for squared loss follows directly from the analysis of \cite{subi11}).
Spectral graph transduction \citeNP{joachims03} is an approximate solution to the NP-hard
norm-cut problem. However, this algorithm requires eigen-decomposition of a matrix of size $n\times n$, where $n$ is the number of instances, 
which is inefficient for very large data sets. Manifold regularization \cite{bens05} is a 
general framework in which a parametric loss function is defined over the labeled samples and is regularized by graph
smoothness term defined over both the labeled and unlabeled samples. In the algorithms proposed therein, 
one either needs to invert an $n\times n$ matrix or use optimization techniques for general SVM in case there is no closed 
form solution. Both \textbf{OAC\textsuperscript{3}} and \textbf{MP}, on the other hand, have closed form solutions corresponding to each update and hence are 
perfectly suitable for large scale applications. Information regularization \cite{coja03}, in essence, works on the same 
intuition as \textbf{OAC\textsuperscript{3}}, but does not provide any proof of convergence and one of the steps
of the optimization does not have a closed form solution -- a concern for large data applications.
\cite{tsuda05} extended the works of \cite{coja03} to hyper-graphs and used 
closed form solutions in both steps of the alternating minimization procedure which, surprisingly, can be seen as a special 
case of \textbf{MP}.

We now give a sketch of the proof of convergence of \textbf{OAC\textsuperscript{3}}. 
The so-called 5-points property (5-pp) of the objective function $J$ is essential to analyze the convergence. 
If $J$ satisfies the 3-points property (3-pp) and the 4-points property (4-pp), then it satisfies the 5-pp.
Therefore, to prove 5-pp of $J$, we will try to prove that it satisfies both 3-pp and 4-pp. However, this proof is not easy for 
any arbitrary Bregman divergence. In \cite{subi11}, the authors followed the procedure of \cite{cstu84} 
to prove the convergence of a slightly different objective that involves KL-divergence as a loss function. The proof there is specific to KL-divergence  
and does not generalize to Bregman divergences with properties (a) to (f). Therefore, we take a more subtle route in proving 
the 3-pp and 4-pp of $J$. We show that the objective function $J$, which is a sum of Bregman divergences of different pairs of variables,
can itself be thought of as a Bregman divergence in some joint space. This Bregman divergence also satisfies the properties (a) to (f), 
which then allows one to use the convergence tools developed by \cite{wada03}. The formal proof for convergence is placed in 
appendix \ref{app:convergence} to facilitate an easy perusal of the paper.  
\section{Analysis of Rate of Convergence for OAC\textsuperscript{3}}
\label{rateofconv}
In practical applications, the rate of convergence of any optimization algorithm is of great importance.
To analyze the same, we use some formulations that were derived in \cite{beha03} 
to characterize the local convergence rate of alternating 
minimization type of algorithms in general. In this section, we will first explain the tools and then show 
that the analysis applies to the objective function $J$ seamlessly.
The details of the tools are skipped here though and only the main lemmata and theorems are provided. 

\subsection{Tools for Analyzing Local Rate of Convergence}
Let us consider a variable $\mathbf{z}\in\mathcal{S}^{2n}$ where 
$\mathbf{z} = (\mathbf{z}_{n^{\prime}})_{n^{\prime}=1}^{2n}$ and $\mathbf{z}_{n^{\prime}}\in\mathcal{S}\text{ }\forall n^{\prime}$. 
Assume functions $M_{n^{\prime}}: \mathcal{S}^{2n-1}\to \mathcal{S}$ $\forall n^{\prime}$ which are defined as:
\begin{equation}
 M_{n^{\prime}}(\tilde{\mathbf{z}}_{n^{\prime}})  
=  \underset{\mathbf{z}_{n^{\prime}}\in \mathcal{S}}{\operatorname{argmin }} 
f(\mathbf{z}_{1},\cdots,\mathbf{z}_{n^{\prime}-1}, \mathbf{z}_{n^{\prime}}, \mathbf{z}_{n^{\prime}+1}, \cdots, \mathbf{z}_{2n})
\end{equation}
Here, $\tilde{\mathbf{z}}_{n^{\prime}} = (\mathbf{z}_{1},\cdots,\mathbf{z}_{n^{\prime}-1}, \mathbf{z}_{n^{\prime}+1}, \cdots, \mathbf{z}_{2n})$.
Corresponding to each $M_{n^{\prime}}$ we also define a function 
$C_{n^{\prime}}:\mathcal{S}^{2n}\to \mathcal{S}^{2n}$ as:
\begin{equation}
 C_{n^{\prime}}(\mathbf{z}_{1},\cdots,\mathbf{z}_{n^{\prime}-1}, \mathbf{z}_{n^{\prime}}, 
\mathbf{z}_{n^{\prime}+1}, \cdots, \mathbf{z}_{2n}) =
\left(\mathbf{z}_{1},\cdots,\mathbf{z}_{n^{\prime}-1}, M_{n^{\prime}}(\tilde{\mathbf{z}}_{n^{\prime}}), 
\mathbf{z}_{n^{\prime}+1}, \cdots, \mathbf{z}_{2n}\right) 
\end{equation}
Moreover, one complete execution of alternating minimization step can conveniently 
be represented by a function $\mathbb{S}:\mathcal{S}^{2n}\to \mathcal{S}^{2n}$:
\begin{equation}
\label{eqn:mainmap}
\mathbb{S}(\mathbf{z}) = C_{1}\circ C_{2}\circ \cdots C_{2n}(\mathbf{z}). 
\end{equation}

\begin{lemma}
\label{convlemma2}
Let $f:\mathcal{S}^{2n}\to \mathbb{R}$ satisfy the following conditions:
\begin{enumerate}
 \item $f$ is C$^{2}$ in a neighborhood of $\mathbf{z}^{*}$, $\mathbf{z}^{*}$ being a local minimizer of $f$; 
 \item $\grad^{2}f(\mathbf{z}^{*})$ is positive definite;
 \item There is a neighborhood $\mathcal{N}$ of $\mathbf{z}^{*}$ on which $f$ is strictly convex, and such that for $n^{\prime}\in\{1,2,\cdots,2n\}$ if 
$\mathbf{z} = \mathbf{z}_{\mathbf{z}_{n^{\prime}}}^{*}$ locally minimizes $g_{n^{\prime}}(\mathbf{z}_{\mathbf{z}_{n^{\prime}}}) = f(\mathbf{z}_{\mathbf{z}_{n^{\prime}}})$
with $\mathbf{z}_{\mathbf{z}_{n^{\prime}}}$ indicating that all variables except $\mathbf{z}_{n^{\prime}}$
are held fixed, then $\mathbf{z}_{\mathbf{z}_{n^{\prime}}}^{*}$ is also the unique global minimizer of $g_{n^{\prime}}(\mathbf{z}_{\mathbf{z}_{n^{\prime}}})$.
\end{enumerate}
Then in some neighborhood of $\mathbf{z}^{*}$, the minimizing function
$M_{n^{\prime}}$ exists and is continuously differentiable $\forall n^{\prime}\in\{1,2,3,\cdots,2n\}$.
\end{lemma}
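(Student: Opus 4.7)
The plan is to recognize this as a standard application of the implicit function theorem to the first-order optimality condition for each block. Any minimizer of $g_{n^{\prime}}(\mathbf{z}_{n^{\prime}}) = f(\mathbf{z}_{1},\ldots,\mathbf{z}_{2n})$ with the other blocks held fixed must satisfy the equation $F_{n^{\prime}}(\mathbf{z}) := \nabla_{\mathbf{z}_{n^{\prime}}} f(\mathbf{z}) = 0$, viewed as a map $\mathcal{S}^{2n}\to \mathcal{S}$. Since $\mathbf{z}^{*}$ is a local minimizer of the full function $f$, it is a critical point in each coordinate block, so $F_{n^{\prime}}(\mathbf{z}^{*}) = 0$ for every $n^{\prime}$.

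The first key step is to verify that the partial Jacobian of $F_{n^{\prime}}$ in the variable $\mathbf{z}_{n^{\prime}}$ at $\mathbf{z}^{*}$ is invertible. This Jacobian is precisely the diagonal block $\nabla^{2}_{\mathbf{z}_{n^{\prime}}\mathbf{z}_{n^{\prime}}} f(\mathbf{z}^{*})$ of the full Hessian, which is a principal submatrix of the positive-definite matrix $\nabla^{2} f(\mathbf{z}^{*})$ supplied by hypothesis~(2); principal submatrices of positive-definite matrices are themselves positive definite, hence nonsingular. Combined with the $C^{2}$ regularity of $f$ from hypothesis~(1), which makes $F_{n^{\prime}}$ a $C^{1}$ map on a neighborhood of $\mathbf{z}^{*}$, the implicit function theorem yields open neighborhoods $\tilde{\mathcal{N}}_{n^{\prime}}$ of $\tilde{\mathbf{z}}^{*}_{n^{\prime}}$ and $\mathcal{V}_{n^{\prime}}$ of $\mathbf{z}_{n^{\prime}}^{*}$, together with a unique $C^{1}$ map $h_{n^{\prime}}:\tilde{\mathcal{N}}_{n^{\prime}}\to \mathcal{V}_{n^{\prime}}$ such that $h_{n^{\prime}}(\tilde{\mathbf{z}}_{n^{\prime}})$ is the unique zero of $F_{n^{\prime}}(\tilde{\mathbf{z}}_{n^{\prime}},\,\cdot\,)$ inside $\mathcal{V}_{n^{\prime}}$.

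The remaining task is to identify $h_{n^{\prime}}$ with the block-minimizer $M_{n^{\prime}}$. After shrinking $\tilde{\mathcal{N}}_{n^{\prime}}\times\mathcal{V}_{n^{\prime}}$, I can assume it lies inside the strict-convexity neighborhood $\mathcal{N}$ of hypothesis~(3). On this neighborhood $f$ is strictly convex, and therefore so is its restriction to the affine slice in which only $\mathbf{z}_{n^{\prime}}$ varies; thus the critical point $h_{n^{\prime}}(\tilde{\mathbf{z}}_{n^{\prime}})$ is automatically a local minimizer of $g_{n^{\prime}}$. Hypothesis~(3) then upgrades this local minimizer to the unique global minimizer of $g_{n^{\prime}}$, which is by definition $M_{n^{\prime}}(\tilde{\mathbf{z}}_{n^{\prime}})$. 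Hence $M_{n^{\prime}} = h_{n^{\prime}}$ exists and is continuously differentiable on a neighborhood of $\tilde{\mathbf{z}}^{*}_{n^{\prime}}$; taking the intersection of the $2n$ neighborhoods yields a common neighborhood of $\mathbf{z}^{*}$ on which all $M_{n^{\prime}}$ are simultaneously $C^{1}$.

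The main obstacle I anticipate is the bookkeeping among the several neighborhoods. The implicit function theorem only guarantees a unique solution to the first-order condition inside the small product set it produces; before this local uniqueness can be parlayed into an identification with the \emph{global} block-minimizer $M_{n^{\prime}}$, one must verify (possibly after further shrinking) that the IFT set sits inside the convexity neighborhood $\mathcal{N}$ so that the local-to-global upgrade in hypothesis~(3) is legitimately available. Once these inclusions are arranged, gluing together the $2n$ block-minimizers and reading off their $C^{1}$ regularity is routine.
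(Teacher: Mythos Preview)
Your proposal is correct and is precisely the standard implicit-function-theorem argument one would expect here. Note, however, that the paper does not actually supply a proof of this lemma: it is quoted (together with the subsequent Lemma~\ref{convlemma3} and Theorem~\ref{convthm}) from \cite{beha03} as a tool, with the remark that ``the details of the tools are skipped here though and only the main lemmata and theorems are provided.'' Your argument is exactly the proof one finds in that reference: exploit $C^{2}$ regularity to make $\nabla_{\mathbf{z}_{n'}}f$ a $C^{1}$ map, use positive definiteness of the full Hessian to get invertibility of each diagonal block, invoke the implicit function theorem, and then use strict convexity plus hypothesis~(c) to identify the resulting $C^{1}$ implicit function with the global block-minimizer $M_{n'}$. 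So there is nothing to compare against in the paper itself, and your write-up fills the gap correctly.
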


\begin{lemma}
\label{convlemma3}
 Let $f:\mathcal{S}^{n}\to \mathbb{R}$ be differentiable and satisfy the conditions of Lemma \ref{convlemma2}.
Then $\rho(\grad_{\mathbb{S}}(\mathbf{z}^{*}))< 1$ where $\grad_{\mathbb{S}}(\mathbf{z}^{*})$ is the Jacobian of 
the mapping $\mathbb{S}$ evaluated at $\mathbf{z}^{*}$ and $\rho$ is the spectral radius of the Jacobian.
\end{lemma}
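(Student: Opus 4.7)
The plan is to reduce the bound $\rho(\grad_{\mathbb{S}}(\mathbf{z}^{*}))<1$ to the classical Ostrowski--Reich theorem for the block Gauss--Seidel iteration matrix associated with the Hessian $H:=\grad^{2}f(\mathbf{z}^{*})$. The conditions carried over from Lemma \ref{convlemma2} already give all the regularity we need: $f\in C^{2}$ near $\mathbf{z}^{*}$, each coordinate minimizer $M_{n'}$ is $C^{1}$ in a neighborhood of $\mathbf{z}^{*}$, and $\mathbf{z}^{*}$ is a fixed point of every $C_{n'}$ (as the unique global minimizer along coordinate block $n'$). Consequently $\mathbb{S}$ is $C^{1}$ at $\mathbf{z}^{*}$, and since each intermediate argument in the composition reduces to $\mathbf{z}^{*}$, the chain rule yields
\[
\grad_{\mathbb{S}}(\mathbf{z}^{*})=\grad_{C_{1}}(\mathbf{z}^{*})\,\grad_{C_{2}}(\mathbf{z}^{*})\cdots\grad_{C_{2n}}(\mathbf{z}^{*}).
\]

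Next I would compute each factor explicitly via the implicit function theorem. The first-order optimality condition $\grad_{n'}f(\tilde{\mathbf{z}}_{n'},M_{n'}(\tilde{\mathbf{z}}_{n'}))=\mathbf{0}$ holds identically in $\tilde{\mathbf{z}}_{n'}$; differentiating with respect to $\mathbf{z}_{j}$ for $j\neq n'$ and evaluating at $\mathbf{z}^{*}$ gives $H_{n'j}+H_{n'n'}(\partial M_{n'}/\partial\mathbf{z}_{j})=0$. Positive definiteness of $H$ (condition (b) of Lemma \ref{convlemma2}) guarantees that the diagonal blocks $H_{n'n'}$ are invertible, so $\partial M_{n'}/\partial\mathbf{z}_{j}=-H_{n'n'}^{-1}H_{n'j}$. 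Therefore $\grad_{C_{n'}}(\mathbf{z}^{*})$ is the $2n$-block identity matrix with its $n'$-th block row replaced by the row whose entries are $-H_{n'n'}^{-1}H_{n'j}$ for $j\neq n'$ and the zero block at position $n'$.

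The final step is to recognize the ordered product above as a (coordinate-reordered) block Gauss--Seidel iteration matrix for the linear system $H\mathbf{v}=\mathbf{0}$. Splitting $H=L+D+U$ into strictly block-lower, block-diagonal, and strictly block-upper parts relative to the sweep order dictated by $\mathbb{S}$, the telescoping product collapses to the standard form $\grad_{\mathbb{S}}(\mathbf{z}^{*})=-(D+L)^{-1}U$. Since $H$ is symmetric positive definite, the Ostrowski--Reich theorem then yields $\rho(-(D+L)^{-1}U)<1$, which is the conclusion. A self-contained replacement, avoiding an external citation, is the short energy argument: if $\grad_{\mathbb{S}}(\mathbf{z}^{*})\mathbf{v}=\lambda\mathbf{v}$ for some $\mathbf{v}\neq\mathbf{0}$, expanding $\mathbf{v}^{\ast}H\mathbf{v}>0$ in terms of $(D+L)\mathbf{v}$ and $U\mathbf{v}$ gives $|\lambda|^{2}<1$ directly.

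The main obstacle is the block-indexing bookkeeping that turns the chain-rule product into the single expression $-(D+L)^{-1}U$: one must verify carefully that the order in which $\mathbb{S}$ applies $C_{1},\ldots,C_{2n}$ aligns with a forward sweep of block Gauss--Seidel (if it matches the reverse sweep, the roles of $L$ and $U$ swap, but the spectral radius is unchanged). Once that identification is set up, the remainder of the proof is mechanical linear algebra built on condition (b) of Lemma \ref{convlemma2}.
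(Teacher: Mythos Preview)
The paper does not supply a proof of this lemma; it is quoted from \cite{beha03} with the explicit remark that ``the details of the tools are skipped here though and only the main lemmata and theorems are provided.'' So there is no in-paper argument to compare against.

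Your proposal is the standard route used in \cite{beha03}: linearize each $C_{n'}$ at the fixed point via the implicit function theorem, multiply out the Jacobians using the chain rule (valid because every intermediate point is $\mathbf{z}^{*}$), and recognize the result as the block Gauss--Seidel iteration matrix $-(D+L)^{-1}U$ for the positive-definite Hessian. Once that identification is made, Ostrowski--Reich (or the direct energy argument you sketch) finishes the proof. The only place to be careful is exactly where you flag it: the composition order $C_{1}\circ\cdots\circ C_{2n}$ means $C_{2n}$ is applied first, so the sweep updates block $2n$, then $2n-1$, and so on, and the ``lower-triangular'' part in your splitting must be taken with respect to that order. As you note, swapping $L$ and $U$ is harmless for the spectral-radius conclusion, so the argument goes through either way.
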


Before presenting the main theorem from \cite{beha03}, the formal definition of q-linear rate of convergence is provided below. 
The ``q'' in this definition stands for quotient.
\begin{definition}[q-linear rate of convergence]
 A sequence $\{\mathbf{z}^{(t)}\}\rightarrow \mathbf{z}^{\ast}$ q-linearly iff $\exists t_{0}\ge 0$ and $\exists \rho \in [0,1)$ such that 
$\forall t\ge t_{0}$, $||\mathbf{z}^{(t+1)} - \mathbf{z}^{\ast}||\le \rho ||\mathbf{z}^{(t)} - \mathbf{z}^{\ast}||$ 
\end{definition}

\begin{theorem}
\label{convthm}
 Let $\mathbf{z}^{*}$ be a local minimizer of $f:\mathcal{S}^{n}\to \mathbb{R}$ for 
which $\grad^{2}f(\mathbf{z}^{*})$ is positive definite and let $f$ be C$^{2}$ in a neighborhood of 
$\mathbf{z}^{*}$. Also let assumption (c) of Lemma \ref{convlemma3} hold for $\mathbf{z}^{*}$.
Then there is a neighborhood $\mathcal{N}$ of $\mathbf{z}^{*}$ such that for any
$\mathbf{z}^{(0)}\in\mathcal{N}$, the corresponding iteration sequence $\{\mathbf{z}^{(t+l)}=\mathbb{S}(\mathbf{z}^{(t)}):t=0,1,...\}$
converges q-linearly to $\mathbf{z}^{*}$.
\end{theorem}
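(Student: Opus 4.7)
The plan is to treat Theorem \ref{convthm} as a direct application of Ostrowski's classical attracting-fixed-point theorem to the block-coordinate map $\mathbb{S}$ defined in \eqref{eqn:mainmap}. First I would verify that $\mathbf{z}^{*}$ is indeed a fixed point of $\mathbb{S}$. Because $\mathbf{z}^{*}$ is a local minimizer of $f$, restricting $f$ to a single block $\mathbf{z}_{n^{\prime}}$, with the remaining coordinates frozen at the corresponding components of $\mathbf{z}^{*}$, gives a function whose local minimizer is $\mathbf{z}^{*}_{n^{\prime}}$. Assumption (c) of Lemma \ref{convlemma2} then promotes this local minimizer to the \emph{unique} global minimizer, so $M_{n^{\prime}}(\tilde{\mathbf{z}}^{*}_{n^{\prime}}) = \mathbf{z}^{*}_{n^{\prime}}$ for every $n^{\prime}$. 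Iterating the $C_{n^{\prime}}$'s in order yields $\mathbb{S}(\mathbf{z}^{*}) = \mathbf{z}^{*}$.

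Next I would invoke Lemma \ref{convlemma2} to conclude that each $M_{n^{\prime}}$ is continuously differentiable in a neighborhood of $\mathbf{z}^{*}$, hence each $C_{n^{\prime}}$ is $C^{1}$, and their composition $\mathbb{S}$ is $C^{1}$ on some common neighborhood of $\mathbf{z}^{*}$. Lemma \ref{convlemma3} then supplies the critical spectral bound $\rho^{*} := \rho(\grad_{\mathbb{S}}(\mathbf{z}^{*})) < 1$. Fix any $\rho \in (\rho^{*}, 1)$.

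The final step is the standard Ostrowski argument. Because the spectral radius of a matrix equals the infimum of its induced operator norms over all vector norms, there is a norm $\|\cdot\|_{*}$ on $\mathcal{S}^{n}$ in which $\|\grad_{\mathbb{S}}(\mathbf{z}^{*})\|_{*} \le \rho < 1$. Continuity of $\grad_{\mathbb{S}}$ at $\mathbf{z}^{*}$ lets us shrink to a closed $\|\cdot\|_{*}$-ball $\mathcal{N}$ on which $\|\grad_{\mathbb{S}}(\mathbf{z})\|_{*} \le \rho$ for every $\mathbf{z} \in \mathcal{N}$. The mean-value inequality then gives
\begin{equation*}
\|\mathbb{S}(\mathbf{z}) - \mathbf{z}^{*}\|_{*} \;=\; \|\mathbb{S}(\mathbf{z}) - \mathbb{S}(\mathbf{z}^{*})\|_{*} \;\le\; \rho\,\|\mathbf{z} - \mathbf{z}^{*}\|_{*}
\end{equation*}
for every $\mathbf{z}\in\mathcal{N}$. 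In particular $\mathbb{S}$ maps $\mathcal{N}$ into itself, so for any $\mathbf{z}^{(0)}\in\mathcal{N}$ the iterates stay in $\mathcal{N}$ and obey $\|\mathbf{z}^{(t+1)} - \mathbf{z}^{*}\|_{*} \le \rho \|\mathbf{z}^{(t)} - \mathbf{z}^{*}\|_{*}$; norm equivalence on finite-dimensional $\mathcal{S}^{n}$ converts this into the q-linear bound stated in the definition.

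The only delicate point I anticipate is the norm-adaptation step: a spectral radius strictly less than $1$ does not by itself give a contractive \emph{Euclidean} operator norm, so one must explicitly invoke the lemma that, for every $\varepsilon>0$, some vector norm realizes the spectral radius to within $\varepsilon$, and then control the drift of $\grad_{\mathbb{S}}$ away from $\mathbf{z}^{*}$ in that same norm. The remaining ingredients — existence of the fixed point, $C^{1}$ smoothness of $\mathbb{S}$, invariance of the neighborhood $\mathcal{N}$ under iteration, and transfer back to the original norm — are routine once Lemmas \ref{convlemma2} and \ref{convlemma3} are available.
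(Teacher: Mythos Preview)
The paper does not actually prove Theorem \ref{convthm}: it is quoted verbatim as a tool from \cite{beha03} (the opening of Section \ref{rateofconv} says explicitly that ``the details of the tools are skipped here though and only the main lemmata and theorems are provided''). So there is nothing in the paper to compare your argument against. That said, your outline is the standard Ostrowski attracting-fixed-point argument and is exactly how results of this type are established in the source literature: fixed point of $\mathbb{S}$ from assumption (c), $C^{1}$-smoothness of $\mathbb{S}$ from Lemma \ref{convlemma2}, spectral contraction from Lemma \ref{convlemma3}, then an adapted norm plus the mean-value inequality.

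One small slip in your last paragraph: norm equivalence does \emph{not} transfer q-linear convergence from $\|\cdot\|_{*}$ back to an arbitrary norm. From $\|\mathbf{z}^{(t+1)}-\mathbf{z}^{*}\|_{*}\le\rho\,\|\mathbf{z}^{(t)}-\mathbf{z}^{*}\|_{*}$ and $c_{1}\|\cdot\|\le\|\cdot\|_{*}\le c_{2}\|\cdot\|$ you only obtain $\|\mathbf{z}^{(t+1)}-\mathbf{z}^{*}\|\le(\rho\,c_{2}/c_{1})\|\mathbf{z}^{(t)}-\mathbf{z}^{*}\|$, and $\rho\,c_{2}/c_{1}$ need not be below $1$; q-linearity is genuinely norm-dependent (what norm equivalence preserves is only R-linearity). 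Since the paper's definition leaves the norm unspecified, the clean fix is simply to assert the q-linear bound in the adapted norm $\|\cdot\|_{*}$ itself and drop the norm-equivalence remark.
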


\subsection{Hessian Calculation of $J$}
From the theorems and lemmata presented in the previous subsection, one can observe that the Hessian of the objective being positive definite is a 
critical condition. Therefore, we will try to show that $\grad^{2}J$ is positive definite for some of the Bregman divergences.
According to Eq. (\ref{eqn:mainobj}), $\grad J$ involves the following terms:
\begin{eqnarray*}
 \grad_{\mathbf{y}_{i}^{(l)}} J = \alpha\displaystyle\sum_{j=1;j\neq i}^{n}s_{ij}
\left[\grad_{\phi}(\mathbf{y}_{i}^{(l)}) - \grad_{\phi}(\mathbf{y}_{j}^{(r)})\right]
+ \lambda \left[\grad_{\phi}(\mathbf{y}_{i}^{(l)}) - \grad_{\phi}(\mathbf{y}_{i}^{(r)})\right]
\end{eqnarray*}

\begin{eqnarray*}
\grad_{\mathbf{y}_{j}^{(r)}} J = \left[(\grad_{\phi}(\mathbf{y}_{j}^{(r)}) - \grad_{\phi}(\boldsymbol{\pi}_{j}))
+ \alpha\displaystyle\sum_{j=1;j\neq i}^{n}\big(\grad_{\phi}(\mathbf{y}_{j}^{(r)}) - \grad_{\phi}(\mathbf{y}_{i}^{(l)})\big)\right.\\ \nonumber
\left. + \lambda \left[\grad_{\phi}(\mathbf{y}_{j}^{(r)}) - \grad_{\phi}(\mathbf{y}_{i}^{(l)})\right]\right]^{\dagger}\grad_{\phi}^{2}(\mathbf{y}_{j}^{(r)}).
\end{eqnarray*}

$\grad^{2}J$, derived from the above equations, has the following terms:
\begin{eqnarray*}
 \grad^{2}_{\mathbf{y}_{i}^{(l)}, \mathbf{y}_{i}^{(l)}} J = \big(\alpha\displaystyle\sum_{j=1;j\neq i}^{n}s_{ij} + \lambda \big)\grad_{\phi}^{2}(\mathbf{y}_{i}^{(l)})
\end{eqnarray*}
\begin{eqnarray*}
 \grad^{2}_{\mathbf{y}_{j}^{(r)}, \mathbf{y}_{j}^{(r)}} J = \left[\big(1+\alpha\displaystyle\sum_{i=1;j\neq i}^{n}s_{ij} + \lambda\big)\mathbf{y}_{j}^{(r)}
-\boldsymbol{\pi}_{j} -\alpha\displaystyle\sum_{i=1;j\neq i}^{n}s_{ij}\mathbf{y}_{i}^{(l)} -\lambda\mathbf{y}_{j}^{(l)}\right]^{\dagger}\grad_{\phi}^{3} (\mathbf{y}_{j}^{(r)})\\ \nonumber
+ \big(1+\alpha\displaystyle\sum_{i=1;j\neq i}^{n}s_{ij} + \lambda\big)\grad_{\phi}^{2} (\mathbf{y}_{j}^{(r)})
\end{eqnarray*}
\begin{eqnarray*}
 \grad^{2}_{\mathbf{y}_{j}^{(r)}, \mathbf{y}_{i}^{(l)}} J =  \grad^{2}_{\mathbf{y}_{i}^{(l)}, \mathbf{y}_{j}^{(r)}} J 
 = -\alpha s_{ij}\grad^{2}_{\phi}(\mathbf{y}_{j}^{(r)}) (i\neq j)
\end{eqnarray*}
\begin{eqnarray*}
 \grad^{2}_{\mathbf{y}_{i}^{(r)}, \mathbf{y}_{i}^{(l)}} J =  \grad^{2}_{\mathbf{y}_{i}^{(l)}, \mathbf{y}_{i}^{(r)}} J
 = -\lambda \grad^{2}_{\phi}(\mathbf{y}_{i}^{(r)})
\end{eqnarray*}
\begin{equation*}
 \grad^{2}_{\mathbf{y}_{i}^{(l)}, \mathbf{y}_{j}^{(l)}} J = \grad^{2}_{\mathbf{y}_{i}^{(r)}, \mathbf{y}_{j}^{(r)}} J = 0, (i\neq j) 
\end{equation*}
Note that this calculation is valid for any Bregman divergence within the assumed family.  

\subsection{Hessian Calculation for KL and Generalized I divergence}
We are now in a position to show that the Hessian of the objective $J$ is positive definite when KL or 
I-divergence is used as Bregman divergence. Recall from table \ref{table:BregmanDiv} that the generating functions $\phi(.)$'s for KL 
and I-divergence differ only by a linear term and hence the Hessian of the objective $J$ would be the same for these two cases. 
We list different terms of the Hessian here:
\begin{eqnarray}
\label{hessianstart}
 \grad^{2}_{\mathbf{y}_{i}^{(l)}, \mathbf{y}_{i}^{(l)}} J = \left(\alpha\displaystyle\sum_{j=1;j\neq i}^{n}s_{ij} + \lambda\right)
\text{diag}\big((1/\mathbf{y}_{i\ell}^{(l)})_{\ell=1}^{k}\big)
\end{eqnarray}
\begin{eqnarray}
 \grad^{2}_{\mathbf{y}_{j}^{(r)}, \mathbf{y}_{j}^{(r)}} J = \text{diag}\left(\left(\frac{\boldsymbol\pi_{j\ell} + 
\alpha \displaystyle\sum_{i=1;j\neq i}^{n} s_{ij}\mathbf{y}_{i\ell}^{(l)} + \lambda\mathbf{y}_{j\ell}^{(l)}}
{{\mathbf{y}_{j}^{(r\ell)}}^{2}}\right)_{\ell=1}^{k}\right)
\end{eqnarray}
\begin{eqnarray}
 \grad^{2}_{\mathbf{y}_{j}^{(r)}, \mathbf{y}_{i}^{(l)}} J =  \grad^{2}_{\mathbf{y}_{i}^{(l)}, \mathbf{y}_{j}^{(r)}} J 
 = -\alpha s_{ij}\text{diag}\big((1/\mathbf{y}_{j\ell}^{(r)})_{\ell=1}^{k}\big) (i\neq j) 
\end{eqnarray}
\begin{eqnarray}
 \grad^{2}_{\mathbf{y}_{i}^{(r)}, \mathbf{y}_{i}^{(l)}} J =  \grad^{2}_{\mathbf{y}_{i}^{(l)}, \mathbf{y}_{i}^{(r)}} J 
 = -\lambda \text{diag}\big((1/\mathbf{y}_{i\ell}^{(r)})_{\ell=1}^{k}\big)
\end{eqnarray}
\begin{equation}
\label{hessianend}
 \grad^{2}_{\mathbf{y}_{i}^{(l)}, \mathbf{y}_{j}^{(l)}} J = \grad^{2}_{\mathbf{y}_{i}^{(r)}, \mathbf{y}_{j}^{(r)}} J = 0, (i\neq j). 
\end{equation}

Using Eqs. (\ref{hessianstart}) to (\ref{hessianend}) and some simple algebra, the following lemma can be proved. 
\begin{lemma}
\label{convlemma1}
 $\mathcal{H} = \grad^{2}J$ is positive definite over the domain of $J$ under the assumption 
$\displaystyle\sum_{i=1}^{n}\displaystyle\sum_{\ell=1}^{k}\pi_{i\ell}> 0$ when KL or generalized I divergence is used as a 
Bregman divergence.
\end{lemma}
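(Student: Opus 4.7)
My plan is to exploit the block structure of $\mathcal{H}$ together with the convexity of each Bregman-divergence summand of $J$.

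First, I would observe that every block in Eqs.~(\ref{hessianstart})--(\ref{hessianend}) is itself diagonal in the class index $\ell$. So, after a permutation of variables that groups together all $\ell$-th coordinates, $\mathcal{H}$ becomes block diagonal $\bigoplus_{\ell=1}^{k} H_{\ell}$, with each $H_{\ell}\in\mathbb{R}^{2n\times 2n}$ coupling only the $\ell$-th coordinates of the variables $\{\mathbf{y}_{i}^{(l)},\mathbf{y}_{i}^{(r)}\}_{i=1}^{n}$. It therefore suffices to prove that each fixed-$\ell$ block $H_{\ell}$ is positive definite.

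Second, I would write $\mathbf{v}^{\top}H_{\ell}\mathbf{v}$ as a sum of non-negative pieces: one from each classifier term, one from each similarity cross-term, and one from each penalty term. For KL and generalized I-divergence, the $2\times 2$ Hessian of $d_{\phi}(p,q)$ at a single coordinate is rank-one PSD whose null direction is $(p,q)^{\top}$, so each similarity/penalty piece reduces (up to positive scaling) to a square of the form $(y_{j\ell}^{(r)}v_{i\ell}^{(l)}-y_{i\ell}^{(l)}v_{j\ell}^{(r)})^{2}/(y_{i\ell}^{(l)}(y_{j\ell}^{(r)})^{2})$, while the classifier piece is $\pi_{j\ell}(v_{j\ell}^{(r)})^{2}/(y_{j\ell}^{(r)})^{2}$. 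This re-derives PSD-ness summand-wise and, more importantly, exposes the only way $\mathbf{v}^{\top}H_{\ell}\mathbf{v}$ can vanish.

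Third, I would argue strict positivity by showing that $\mathbf{v}^{\top}H_{\ell}\mathbf{v}=0$ forces $\mathbf{v}=0$ on the interior of the domain. Introducing the ratios $c_{i}:=v_{i\ell}^{(l)}/y_{i\ell}^{(l)}$ and $d_{j}:=v_{j\ell}^{(r)}/y_{j\ell}^{(r)}$, vanishing of the penalty squares gives $c_{i}=d_{i}$, vanishing of the similarity squares gives $c_{i}=d_{j}$ along every edge $s_{ij}>0$, and vanishing of the classifier squares gives $d_{j}=0$ wherever $\pi_{j\ell}>0$. Propagating these equalities along the graph induced by $\mathbf{S}$, together with the self-loops supplied by the penalty, forces $c$ and $d$ to be constant on each connected component, and that constant is pinned to zero whenever the component touches an index with $\pi_{j\ell}>0$.

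The main obstacle is bridging the lemma's aggregate assumption $\sum_{i,\ell}\pi_{i\ell}>0$ to the per-coordinate anchor I actually need in the propagation step. A clean argument requires either an explicit connectivity hypothesis on the similarity graph (so that a single positive entry of $\boldsymbol{\pi}$ suffices after propagation) or the standing interpretation of $\boldsymbol{\pi}_{i}$ as a normalized classifier output that is strictly positive in every coordinate. I would surface this dependence explicitly in the write-up rather than bury it in the algebra, since it is the real content of the hypothesis.
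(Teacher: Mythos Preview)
Your approach differs substantially from the paper's and is in fact more careful. The paper does not decompose $\mathcal{H}$ at all: it simply sets the test vector equal to the concatenation of the variables themselves, $\mathbf{z}=\big((\mathbf{y}_{i}^{(l)})_{i},(\mathbf{y}_{i}^{(r)})_{i}\big)$, expands $\mathbf{z}^{\dagger}\mathcal{H}\mathbf{z}$ using Eqs.~(\ref{hessianstart})--(\ref{hessianend}), and after cancellations obtains exactly $\sum_{i,\ell}\pi_{i\ell}$. This explains the otherwise peculiar form of the hypothesis, but as written it checks the quadratic form along a single direction only and therefore does not, by itself, establish $\mathbf{v}^{\dagger}\mathcal{H}\mathbf{v}>0$ for every nonzero $\mathbf{v}$.

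Your route---block-diagonalising over the class index $\ell$, rewriting each summand of $J$ as a rank-one PSD contribution, and then analysing the kernel via the ratios $c_{i}=v_{i\ell}^{(l)}/y_{i\ell}^{(l)}$, $d_{j}=v_{j\ell}^{(r)}/y_{j\ell}^{(r)}$ propagated along the similarity graph---is the correct mechanism for upgrading PSD to PD. The obstacle you surface is genuine and not an artefact of your method: the aggregate assumption $\sum_{i,\ell}\pi_{i\ell}>0$ alone does not pin the kernel to zero in every $\ell$-block unless one also assumes either connectivity of the graph induced by $\mathbf{S}$ (plus $\lambda>0$ to tie left and right copies) or strict positivity of each $\boldsymbol{\pi}_{i}$ coordinate-wise. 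The paper's one-direction computation does not touch this issue, so your plan to make the implicit assumption explicit is the right call.
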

The proof is placed in Appendix \ref{app:rateconv}.

\subsection{Convergence Rate of OAC\textsuperscript{3} with KL and I-divergence}
From Lemma \ref{convlemma1}, we have $\mathcal{H}$ is positive definite if 
$\displaystyle\sum_{i=1}^{n}\displaystyle\sum_{\ell=1}^{k}\pi_{i\ell} > 0$. This is always the 
case as $\boldsymbol{\pi}_{i}$ represents some probability assignment. Also, if generalized I divergence or KL divergence is 
used as the Bregman divergence, $J\in C^{\infty}$ (\emph{i.e.} $J$ is a smooth function). From Lemma \ref{Jlemma}, we have that 
$J$ is jointly strictly 
convex and hence has a unique minimizer. From the same Lemma, $J$ 
is separately strictly convex w.r.t each of its arguments. Therefore, with other variables fixed at some value, 
$J$ has a unique minimizer w.r.t one particular variable. Hence, all the conditions 
mentioned in Lemma \ref{convlemma2} are satisfied for $J$ in its entire domain. 
Therefore, following Theorem \ref{convthm} we can conclude that $J$ converges globally 
(implying that $\mathcal{N} = \text{dom}(J)$) to its unique minimizer q-linearly using \textbf{OAC\textsuperscript{3}}.
Note that when the Bregman divergence is the squared Euclidean distance, variable splitting 
is not required at all. The updates involve only one set of copies (\emph{i.e.} there is no need to maintain
left and right copies) and the q-linear rate of convergence of the objective $J$ can be proved following the same method 
as done in \cite{subi11}. The proof uses Perron-Frobenius theorem to bound the maximum eigen-value of the transformation 
matrix used to update the values of the probability assignments. Thus, \textbf{OAC\textsuperscript{3}} converges q-linearly at least 
when squared Euclidean, KL or I divergence is used as loss function. One needs to compute the Hessian or use some other tricks for other 
Bregman divergences having properties (a) to (f).  
\section{Experimental Evaluation}
\label{sec:exp}

First we provide a simple pedagogical example that illustrates how the supplementary constraints provided by clustering algorithms can be useful for improving the generalization capability of classifiers. Section \ref{Sensitivity_Analysis} reports sensitivity analyses on the \textbf{OAC\textsuperscript{3}} parameters.
Then, in Section \ref{BGCMComparison}, we compare the performance of \textbf{OAC\textsuperscript{3}} with the recently proposed \textbf{BGCM} \cite{galf09,Gao_TKDE}. This comparison is straightforward and fair, since it uses the same datasets, as well as the same outputs of the base models, which were kindly provided by the authors of this paper. For a comparison with other 
semi-supervised methods, the design space  is much larger, since we are now faced with a variety of classification and clustering algorithms to choose from as the base models in \textbf{OAC\textsuperscript{3}}, as well as a variety of semi-supervised methods to 
 compare with. Given the space available, in Section \ref{SVMComparison} we use simple (linear) base methods, and pick the popular Semi-Supervised Linear Support Vector Machine (\textbf{S\textsuperscript{3}VM}) \cite{sind06} for comparison. Finally, in Section \ref{TransferLearning} we report empirical results for transfer learning settings.

\subsection{Pedagogical Example}
\label{PedagogicalExample}

\begin{figure*}[htbp]
\begin{minipage}[b]{0.5\linewidth}
\centering
 \includegraphics[bb=0 0 1012 626, scale=0.2]{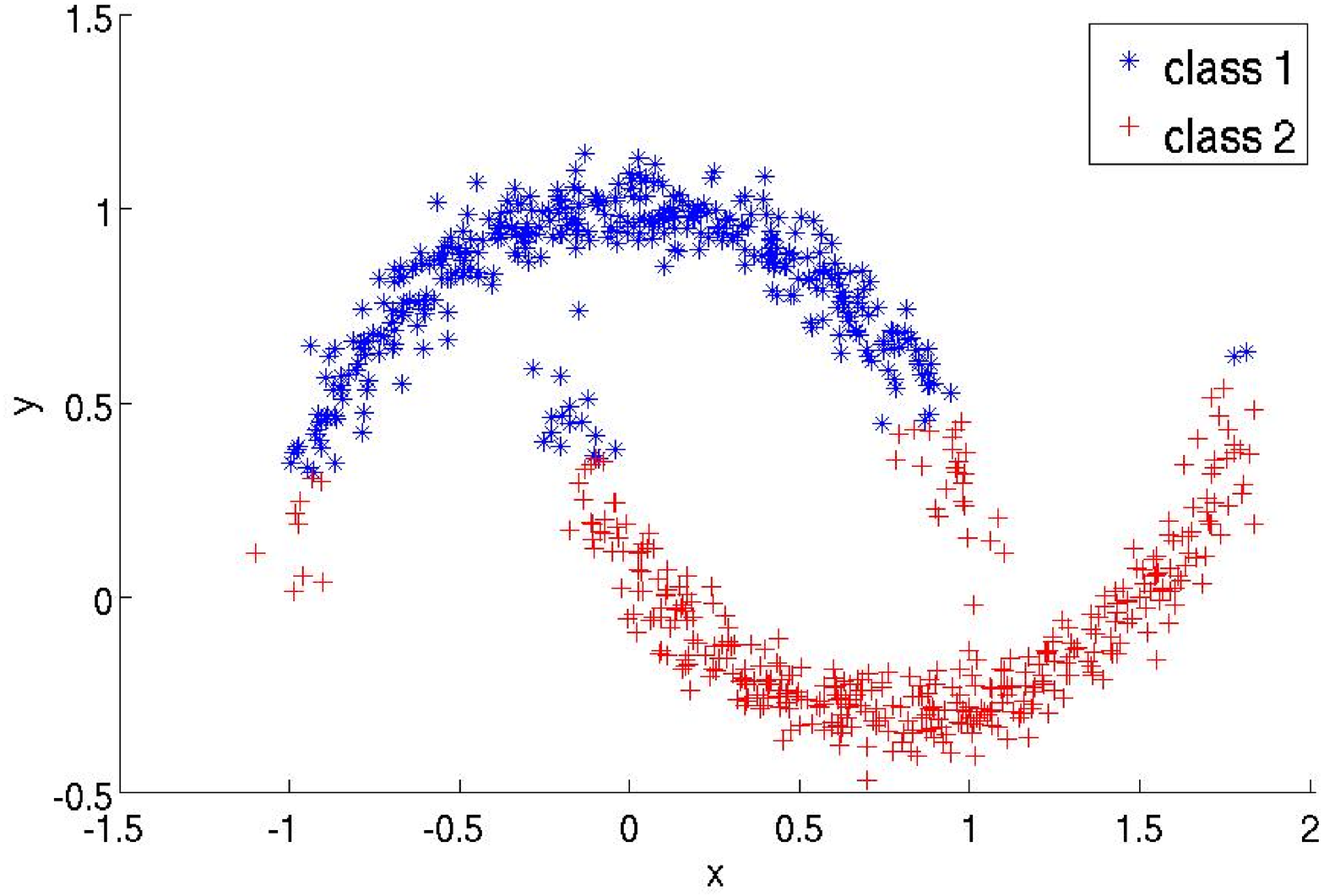}
 \caption{Class Labels from the Classifier Ensemble.}
\label{hm_ensemble}
\end{minipage}
\hspace{0.5cm}
\begin{minipage}[b]{0.5\linewidth}
\centering
 \includegraphics[bb=0 0 1012 626, scale=0.2]{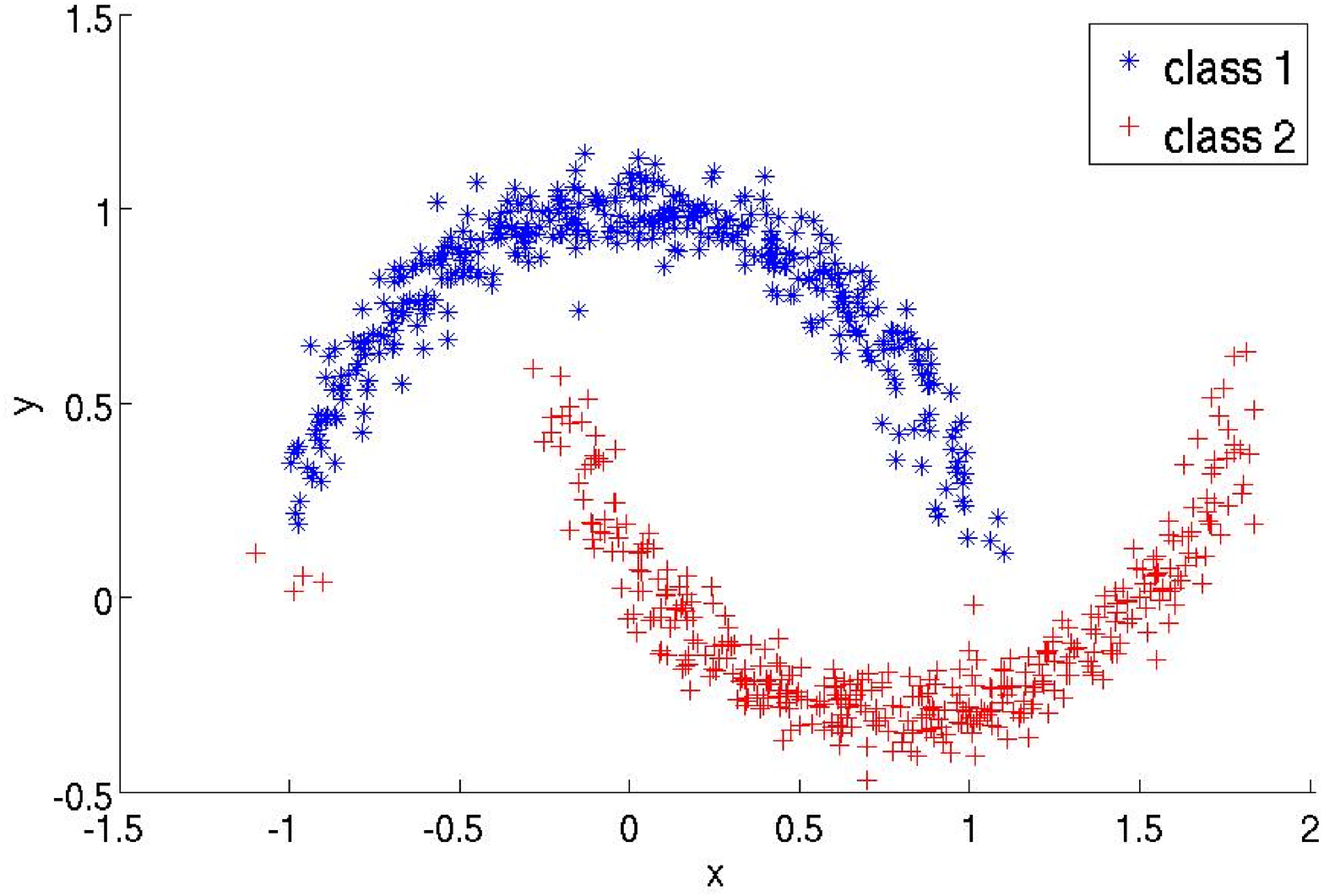}
 \caption{Class Labels from \textbf{OAC\textsuperscript{3}}.}
\label{C3E_hm}
\end{minipage}
\end{figure*}

Consider the two-dimensional dataset known as \textit{Half-Moon}, which
has two classes, each of which represented by 400 instances. From this dataset, 2\% of the instances are used for training,
whereas the remaining instances are used for testing (target set). 
A classifier ensemble formed by three well-known classifiers (Decision Tree, Linear Discriminant, and Generalized Logistic Regression) 
are adopted. In order to get a cluster ensemble, a single linkage (hierarchical) clustering algorithm is chosen. The cluster ensemble is then obtained from five data partitions represented in the dendrogram, which is cut for different number of clusters (from 4 to 8). 
Fig. \ref{hm_ensemble} shows the target data class labels obtained from the standalone use of the classifier ensemble, whereas Fig. \ref{C3E_hm} shows the corresponding results achieved by \textbf{OAC\textsuperscript{3}}. The parameter values were set by using cross-validation. 
In particular, we set $\alpha= 0.0001$ and $\lambda_{i}^{(r)}=\lambda_{i}^{(l)}=\lambda= 0.1$ for all \textit{i}.
Comparing Fig. \ref{hm_ensemble} to Fig. \ref{C3E_hm}, one can see that \textbf{OAC\textsuperscript{3}} does a better job, especially with the most difficult objects to be classified, 
showing that the information provided by the similarity matrix can improve the generalization capability of classifiers.

We also evaluate the performance of \textbf{OAC\textsuperscript{3}} for different proportions (from 1\% to 50\%) of 
training data. Fig. \ref{Comparison} summarizes the average accuracies (over 10 trials) achieved by \textbf{OAC\textsuperscript{3}}. 
The accuracies provided by the classifier ensemble, as well as by its \textit{best} individual component, are also shown for comparison purposes. 
The results obtained by \textbf{OAC\textsuperscript{3}} are consistently better than those achieved by the classifier ensemble. As expected, the curve 
for \textbf{OAC\textsuperscript{3}} shows that the less the amount of labeled objects, the greater are the benefits of using the information provided by 
the cluster ensemble. With 2\% of training data, the accuracies observed are 100\% in nine trials and 95\% in one trial. 
The mean and standard deviation are 99.5 and 1.59 respectively. This explains why the error bar exceeds 100\%.

\begin{figure}[h]
 \centering
 \includegraphics[bb=0 0 1012 626,scale=0.4]{./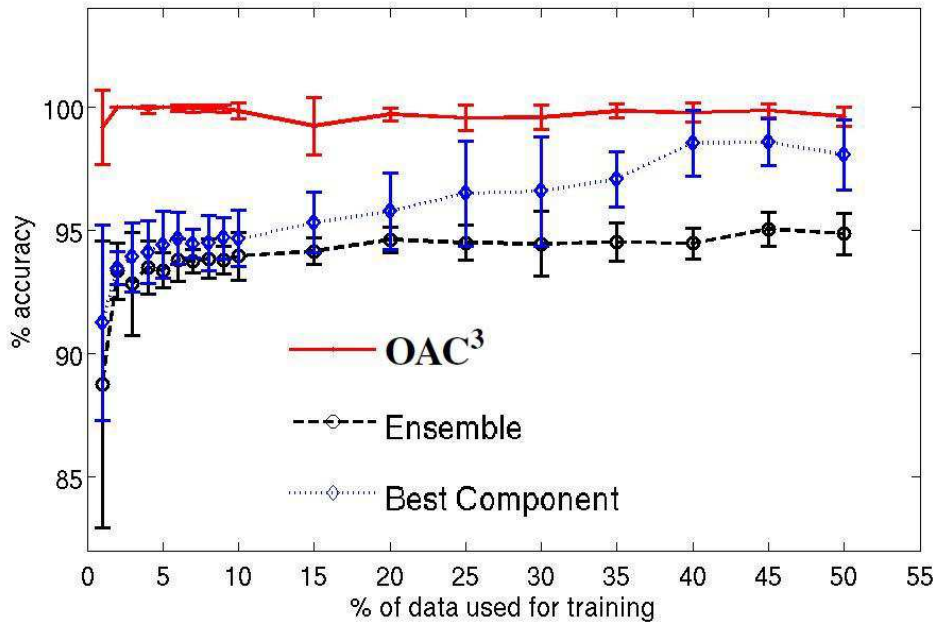}
 \caption{Average Accuracies and Standard Deviations.}
 \label{Comparison}
\end{figure}

\subsection{Sensitivity Analysis}
\label{Sensitivity_Analysis}

We perform a sensitivity analysis on the \textbf{OAC\textsuperscript{3}} parameters by using the same classification datasets employed in \cite{galf09}. 
These datasets represent eleven classification tasks from three real-world applications (\textit{20 Newsgroups}, \textit{Cora}, and \textit{DBLP}).
There are six datasets (News1 --- News6) for \textit{20 Newsgroups} and four datasets (Cora1 --- Cora4) for \textit{Cora}.
In each task, there is a target set on which the class labels should be predicted. 
In \cite{galf09}, two supervised models and two unsupervised models were used to obtain (on the target sets) class and cluster labels, respectively. 
These same class and cluster labels are used as inputs to \textbf{OAC\textsuperscript{3}}.
Then, we vary the \textbf{OAC\textsuperscript{3}} parameters and observe their respective accuracies.

In order to analyze the influence of the parameters $\alpha$ and $\lambda$ (recall that we set $\lambda_{i}^{(r)}=\lambda_{i}^{(l)}=\lambda$ for all \textit{i}), 
we consider that the algorithm converges when the relative difference of the objective function in two consecutive iterations is less than $\varepsilon=10^{-10}$.
By adopting this criterion, \textbf{OAC\textsuperscript{3}} usually converges after nine iterations (on average).
The algorithm has shown to be robust with respect to $\lambda$.
As far as $\alpha$ is concerned, for most of the datasets --- News1, News3, News4, News6, Cora1, Cora3, Cora4, and DBLP --- 
the classification accuracies achieved from \textbf{OAC\textsuperscript{3}} are better than those found by the classifier ensemble --- no matter 
the value chosen for $\alpha$. 
Figure \ref{fig:6} illustrates a typical accuracy surface for different values of $\lambda$ and $\alpha$.
It is worth mentioning that the accuracy surface tends to keep steady for $\alpha>1$ (\textit{i.e.}, the accuracies do not change significantly).
In particular, \textbf{OAC\textsuperscript{3}} was run for $\alpha=\{10;20;...;100;200;...;1000;100000\}$, for which the obtained results are the same as those achieved for $\alpha=1$ for any value of $\lambda$. This same observation holds for all the assessed datasets.
The interpretation for such results is that there is a threshold value for $\alpha$ that makes the second term of the objective function in (\ref{eqn:4}) dominating --- \textit{i.e.}, the information provided by the cluster ensemble is much more important than the information provided by the classifier ensemble.

We observed that for five datasets (News3, News6, Cora1, Cora3, and DBLP) any value of $\alpha>0.30$ provides the best classification accuracy. Thus, the algorithm can be robust with respect to the choice of its parameters for some datasets. For the datasets News2 and News5, some $\alpha$ values yield to accuracy deterioration, thereby suggesting that, depending on the value chosen for $\alpha$, the information provided by the cluster ensemble may hurt --- \textit{e.g.}, 
see Figure \ref{fig:7}. Finally, for Cora2, accuracy improvements were not observed, \textit{i.e.}, the accuracies provided by the classifier ensemble were always the best ones. 
This result suggests that the assumption that classes can be represented by means of clusters does not hold.


\begin{figure}
 \centering
 \includegraphics[scale=0.5]{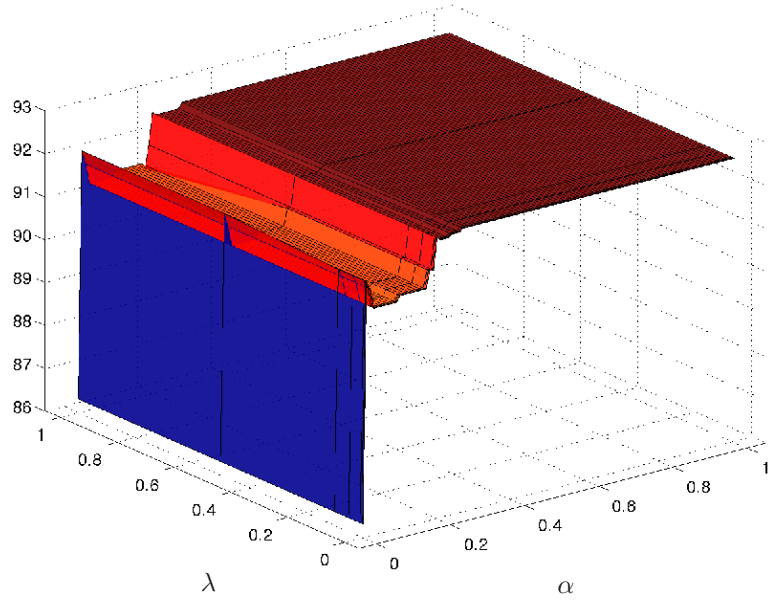}
   \begin{center} \vspace{-0.35cm} \hspace{-5.0cm}
$\lambda$ \end{center}
   \begin{center} \vspace{-0.55cm} \hspace{4.5cm}
$\alpha$ \end{center}
 \caption{Accuracy Surface -- News6}
 \label{fig:6}
\end{figure}

\begin{figure}
 \centering
 \includegraphics[scale=0.5]{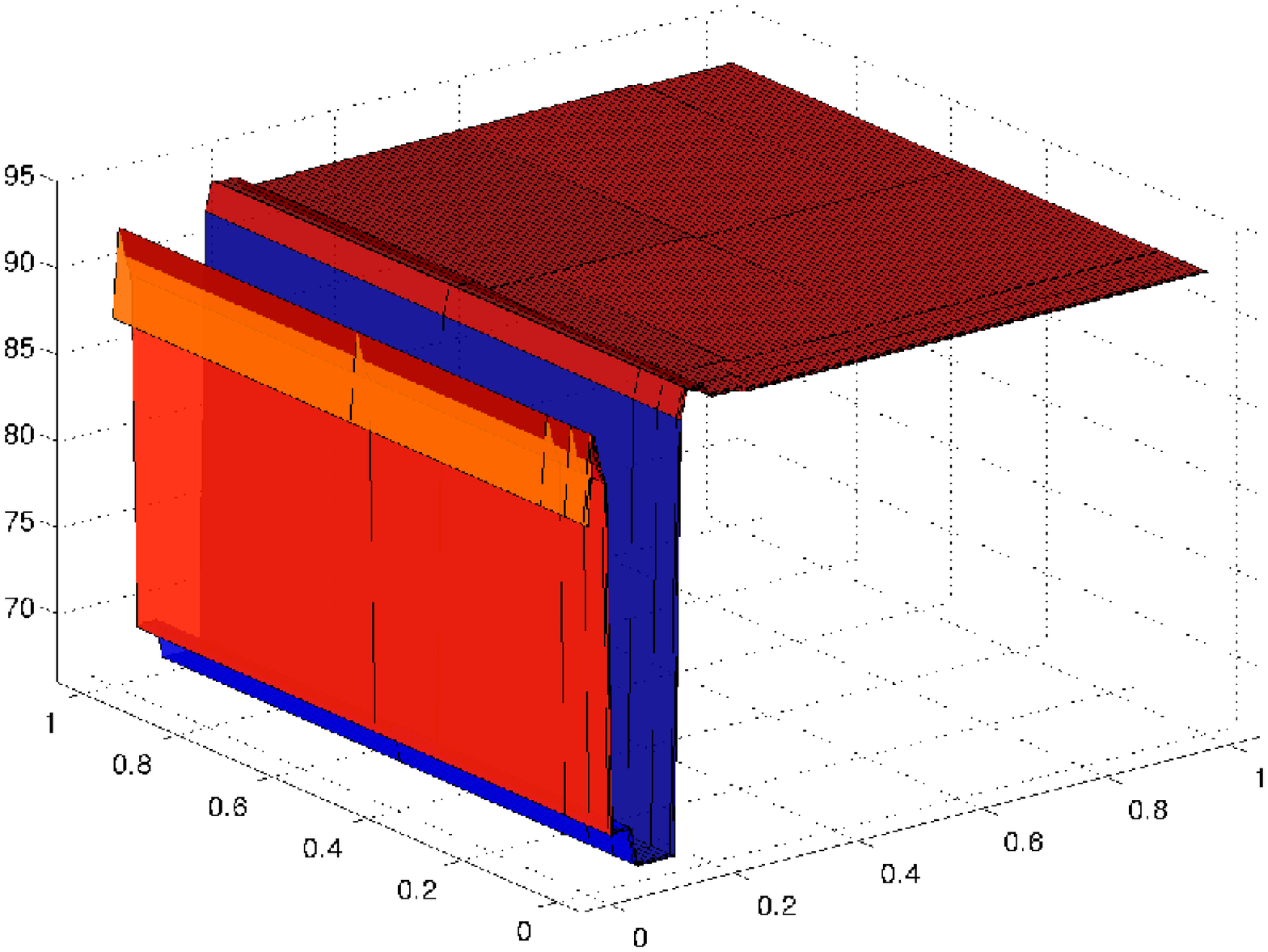}
   \begin{center} \vspace{-0.35cm} \hspace{-5.0cm}
$\lambda$ \end{center}
   \begin{center} \vspace{-0.55cm} \hspace{4.5cm}
$\alpha$ \end{center}
 \caption{Accuracy Surface -- News2}
 \label{fig:7}
\end{figure}

\begin{figure}
 \centering
 \includegraphics[scale=0.5]{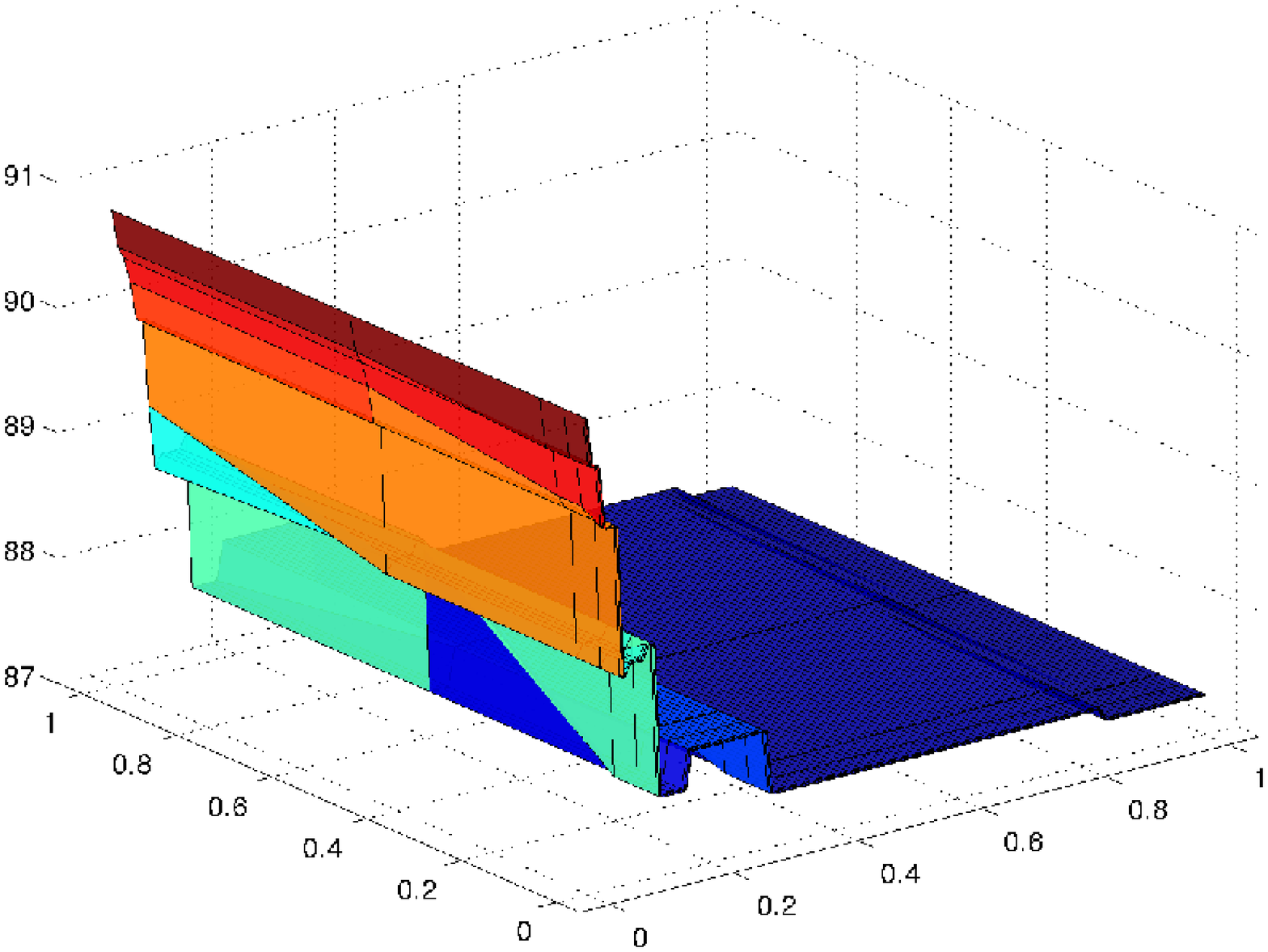}
   \begin{center} \vspace{-0.35cm} \hspace{-5.0cm}
$\lambda$ \end{center}
   \begin{center} \vspace{-0.55cm} \hspace{4.5cm}
$\alpha$ \end{center}
 \caption{Accuracy Surface -- Cora2}
 \label{fig:7}
\end{figure}

As expected, our experiments also show that the number of iterations may influence the performance of the algorithm. In particular, depending on the values chosen for $\alpha$, a high number of iterations may prejudice the obtained accuracies. Considering the best values obtained for $\alpha$ in our sensitivity analysis, we observed that, for all datasets, the best accuracies were achieved for less than 10 iterations.

By taking into account the results obtained in our sensitivity analyses, and recalling that fine tuning of the \textbf{OAC\textsuperscript{3}} parameters can be done by means of cross-validation, in the next section we compare the performance of \textbf{OAC\textsuperscript{3}} with the recently proposed \textbf{BGCM} \cite{galf09,Gao_TKDE}.


\subsection{Comparison with BGCM}
\label{BGCMComparison}

As discussed in Section \ref{sec:relatedwork}, \textbf{BGCM} is the algorithm most closely related to \textbf{OAC\textsuperscript{3}}. We evaluate \textbf{OAC\textsuperscript{3}} on the same classification datasets
employed to assess \textbf {BGCM} \cite{galf09,Gao_TKDE}. 
These datasets are those addressed in Section \ref{Sensitivity_Analysis}.
In \cite{galf09}, two supervised models (\textbf{M$_1$} and \textbf{M$_2$}) and two unsupervised models (\textbf{M$_3$} and \textbf{M$_4$}) were used to obtain (on the target sets) class and cluster labels, respectively. 
These same labels are used as inputs to \textbf{OAC\textsuperscript{3}}. In doing so, comparisons between \textbf{OAC\textsuperscript{3}} and \textbf {BGCM} are performed using exactly the same base models, which were trained in the same datasets\footnote{For these datasets, comparisons with \textbf{S\textsuperscript{3}VM} \cite{sind06} have not been performed because the raw data required for learning is not available.}. In other words, both \textbf{OAC\textsuperscript{3}} and \textbf {BGCM} receive the same inputs with respect to the components of the ensembles, from which consolidated classification solutions for the target sets are generated.

For the sake of compactness, the description of the datasets and learning models used in \cite{galf09} are not reproduced here, and the interested reader is referred to that paper for further details. However, the results for their four base models (\textbf{M$_1$},...,\textbf{M$_4$}), for \textbf{BGCM}, and for two well-known cluster ensemble approaches --- \textbf{MCLA} \cite{stgh02b} and \textbf{HBGF} \cite{febr04} --- are reproduced here for comparison purposes. Being cluster ensemble approaches, \textbf{MCLA} and \textbf{HBGF} ignore the class labels, considering that the four base models provide just cluster labels. Therefore, to evaluate classification accuracy obtained by these ensembles, the cluster labels are matched to the classes through an Hungarian method which favors the best possible class predictions. 
In order to run \textbf{OAC\textsuperscript{3}}, the supervised models (\textbf{M$_1$} and \textbf{M$_2$}) are fused to obtain class probability estimates 
for every instance in the target set. Also, the similarity matrix used by \textbf{OAC\textsuperscript{3}} is calculated by fusing the unsupervised 
models (\textbf{M$_3$} and \textbf{M$_4$}). 

The parameters of \textbf{OAC\textsuperscript{3}} have been chosen from the sensitivity analysis performed in Section \ref{Sensitivity_Analysis}.
However, for the experiments reported in this section we do not set particular values for each of the (eleven) studied datasets.
Instead, we have chosen a set of parameter values that result in good accuracies across related datasets.  
In particular the following pairs of ($\alpha,\lambda$)
are respectively used for the datasets \textit{News}, \textit{Cora}, and \textit{DBLP}:  ($4\times10^{-2}$,$10^{-2}$); ($10^{-4}$,$10^{-2}$); ($10^{-7}$, $10^{-3}$).
Such choices will hopefully show that one can get good results by using \textbf{OAC\textsuperscript{3}} without being (necessarily) picky about its parameter values --- thus these results are also complementary to the ones provided in Section \ref{Sensitivity_Analysis}.

\begin{table}
\tbl{Comparison of \textbf{OAC\textsuperscript{3}} with Other Algorithms --- Classification Accuracies (Best Results in Boldface).}
{
	\begin{tabular}{|c|c|c|c|c|c|c|c|c|c|c|c|}	
	\hline
        \hline 
	Method & News1 & News2 & News3 & News4 & News5 & News6 & Cora1 & Cora2 & Cora3 & Cora4 & DBLP\\
	\hline
	M$_1$ & 79.67 & 88.55 & 85.57 & 88.26 & 87.65 & 88.80 & 77.45 & 88.58 & 86.71 & 88.41 & 93.37 \\
	\hline
	M$_2$ &  77.21 & 86.11 & 81.34 & 86.76 & 83.58 & 85.63 & 77.97 & 85.94 & 85.08 & 88.79 & 87.66\\
	\hline
	M$_3$ &  80.56 & 87.96 & 86.58 & 89.83 & 87.16 & 90.20 & 77.79 & 88.33 & 86.46 & 88.13 & 93.82\\
	\hline
	M$_4$ & 77.70 & 85.71 & 81.49 & 84.67 & 85.43 & 85.78 & 74.76 & 85.94 & 78.10 & 90.16 & 79.49 \\
	\hline
	MCLA &  75.92 & 81.73 & 82.53 & 86.86 & 82.95 & 85.46 & 87.03 & 83.88 & 88.92 & 87.16 & 89.53\\
	\hline
	HBGF & 81.99 & 92.44 & 88.11 & 91.52 & 89.91 & 91.25 & 78.34 & 91.11 & 84.81 & 89.43 & 93.57 \\
	\hline
	BGCM & 81.28 & 91.01 & 86.08 & 91.25 & 88.64 & 90.88 & 86.87 & 91.55 & 89.65 & 90.90 & 94.17 \\
	\hline
	OAC\textsuperscript{3} & \textbf{85.01} & \textbf{93.64} & \textbf{89.64} & \textbf{93.80} & \textbf{91.22} & \textbf{92.59} & \textbf{88.54} & \textbf{90.79} & \textbf{90.60} & \textbf{91.49} & \textbf{94.38} \\
	\hline
        \hline
        \end{tabular}}
	\label{tab:accuracies}
\end{table}

The classification accuracies achieved by the studied methods are summarized in Table \ref{tab:accuracies}, where one can see that \textbf{OAC\textsuperscript{3}} 
shows the best accuracies for all datasets. 
In order to provide some reassurance about the validity and non-randomness of the obtained results, the outcomes 
of statistical tests, following the study in \cite{dems06}, are also reported. In brief, multiple algorithms are compared on multiple datasets by using 
the Friedman test, with a corresponding Nemenyi post-hoc test. 
The Friedman test is a non-parametric statistic test equivalent to the repeated-measures ANOVA. If the null hypothesis, which states that the algorithms under study have similar performances, is rejected, then the Nemenyi post-hoc test is used for pairwise comparisons between algorithms. 
The adopted statistical procedure indicates that the null hypothesis of equal accuracies --- considering the results obtained by the ensembles --- can 
be rejected at 10\% significance level. 
In pairwise comparisons, significant statistical differences are only 
observed between \textbf{OAC\textsuperscript{3}} and the other ensembles, \textit{i.e.},  there is no evidence that the accuracies of \textbf{MCLA}, 
\textbf{HBGF}, and \textbf {BGCM} are statistically different from one another.

\subsection{Comparison with \textbf{S\textsuperscript{3}VM}}
\label{SVMComparison}

We also compare \textbf{OAC\textsuperscript{3}} to a popular semi-supervised algorithm known as \textbf{S\textsuperscript{3}VM} \cite{sind06}. This algorithm is essentially a Transductive Linear Support Vector Machine (\textbf{SVM}) which can be viewed as a large scale implementation of the algorithm introduced in \cite{joac99}. For dealing with unlabeled data, it appends an additional term in the \textbf{SVM} objective function whose role is to drive the classification hyperplane towards low data density regions \cite{sind06}. The default parameter values have been used for \textbf{S\textsuperscript{3}VM}.

Six datasets are used in our experiments: \textit{Half-Moon} (see Section \ref{PedagogicalExample}), \textit{Circles} (which is a synthetic dataset that has two-dimensional instances that form 
two concentric circles --- one for each class), and four datasets from the \textit{Library for Support Vector Machines}\footnote{\url{http://www.csie.ntu.edu.tw/~cjlin/libsvm/}} --- \textit{Pima Indians Diabetes}, \textit{Heart}, \textit{German Numer}, and \textit{Wine}.
In order to simulate real-world classification problems where there is a very limited amount of labeled instances, small percentages (e.g., 2\%) of the instances are randomly selected for training, whereas the remaining instances are used for testing (target set). The amount of instances for training is chosen so that 
the pooled covariance matrix of the training set is positive definite. This \textit{restriction} comes from the use of an \textbf{LDA} classifier in the ensemble, and it imposes a lower bound on the number of training instances (7\% for \textit{Heart} 
and 10\% for \textit{German Numer}). We perform 10 trials for every proportion of instances in the training/target sets. The number of features are
2, 2, 8, 13, 24, 24 for Half-moon, Circles, Pima, Heart, German Numer and Wine respectively.


\begin{table*}[htbp]
\centering
\tbl{Comparison of \textbf{OAC\textsuperscript{3}} with \textbf{BGCM} and \textbf{S\textsuperscript{3}VM} --- Average Accuracies $\pm$(Standard Deviations).}
{
\begin{tabular}{|l|c|c|c|c|c|c|c|}
\hline
\hline
Dataset & \multicolumn{1}{l|}{$|\mathcal{X}|$} & Ensemble & Best Component& S\textsuperscript{3}VM & BGCM & OAC\textsuperscript{3}\\ \hline

Half-moon(2\%) & 784 & 92.53$(\pm1.83)$ & 93.02$(\pm0.82)$  & 99.61$(\pm0.09)$   & 92.16$(\pm1.47)$ & \textbf{99.64}$(\pm0.08)$\\ \hline
Circles(2\%) & 1568 & 60.03$(\pm8.44)$ & 95.74$(\pm5.15)$  & 54.35$(\pm4.47)$ & 78.67$(\pm0.54)$   & \textbf{99.61}$(\pm0.83)$\\ \hline
Pima(2\%)  & 745 & 68.16$(\pm5.05)$ &  69.93$(\pm3.68)$ & 61.67$(\pm3.01)$ & 69.21$(\pm4.83)$  &  \textbf{70.31}$(\pm4.44)$\\ \hline   
Heart(7\%)  &  251 & 77.77$(\pm2.55)$   & 79.22$(\pm2.20)$  & 77.07$(\pm4.77)$ & 82.78$(\pm4.82)$    & \textbf{82.85}$(\pm5.25)$\\ \hline
G. Numer(10\%)  &  900 & 70.96$(\pm1.00)$ & 70.19$(\pm1.52)$ &   73.00$(\pm1.50)$ &   73.70$(\pm1.06)$  &  \textbf{74.44}$(\pm3.44)$\\ \hline
Wine(10\%)  &  900 & 79.87$(\pm5.68)$ & 80.37$(\pm5.47)$ & 80.73$(\pm4.49)$ & 75.37$(\pm13.66)$  & \textbf{83.62}$(\pm6.27)$\\ \hline

\hline
\end{tabular}}
\label{table2}
\end{table*}

Considering \textbf{OAC\textsuperscript{3}}, the components of the classifier ensemble are chosen as previously described in Section \ref{PedagogicalExample}. 
Cluster ensembles are generated by means of multiple runs of $k$-means 
(10 data partitions for the two-dimensional datasets and 50 data partitions for \textit{Pima}, \textit{Heart}, \textit{German Numer}, and \textit{Wine}).

The parameters of \textbf{OAC\textsuperscript{3}} ($\alpha$ and $\lambda$) are optimized for better performance in each 
dataset using 5-fold cross-validation. The optimal values of $(\alpha, \lambda)$ for \textit{Half-moon}, \textit{Circles}, \textit{Pima}, \textit{Heart}, \textit{German Numer}, and \textit{Wine} are 
(0.05,0.1), (0.01,0.1), (0.002,0.1), (0.01,0.2), (0.01,0.1) and (0.01,0.1) respectively.
Table \ref{table2} shows that the accuracies obtained by \textbf{OAC\textsuperscript{3}} are good and consistently better than those achieved by both 
the classifier ensemble and its \textit{best} 
individual component. In addition, \textbf{OAC\textsuperscript{3}} shows better accuracies than both \textbf{S\textsuperscript{3}VM} and \textbf{BGCM} --- 
from the adopted statistical procedure \cite{dems06}, \textbf{OAC\textsuperscript{3}} exhibits significantly better accuracies at a significance level of $10\%$.

\subsection{Transfer Learning}
\label{TransferLearning}

Transfer learning emphasizes the transfer of knowledge across domains, tasks, and distributions that are similar but not the same \cite{sibe08}. We focus on learning scenarios where training and test distributions are different, as they represent (potentially) related but not 
identical tasks. It is assumed that the training and test domains involve the same class labels. The real-world datasets employed in our experiments are:

\textit{a) Text Documents} --- \cite{paya10}: From the well-known text collections \textit{20 newsgroup} and \textit{Reuters-21758}, nine cross-domain learning tasks are generated. The two-level hierarchy in both of these datasets is exploited to frame a learning task involving a top category classification problem with 
training and test data drawn from different sub categories --- \textit{e.g.}, to distinguish documents from two top newsgroup categories (rec and talk), the training set is built from ``rec.autos", ``rec.motorcycles", ``talk.politics", and ``talk.politics.misc", and the test set is formed from the sub-categories ``rec.sport.baseball", ``rec.sport.hockey", ``talk.politics.mideast", and ``talk.religions.misc". 
The \textit{Email spam} data set, released by ECML/PKDD 2006
discovery challenge, contains a training set of publicly available messages and three sets of email messages 
from individual users as test sets. 
The 4000 labeled examples in the training set and the 2500 test examples for each of the three
different users differ in the word distribution. A spam filter learned from public sources are used to 
test transfer capability on each of the users.


\begin{figure*}[htbp]
\centering
 \includegraphics[bb=0 0 1476 256, scale=0.25]{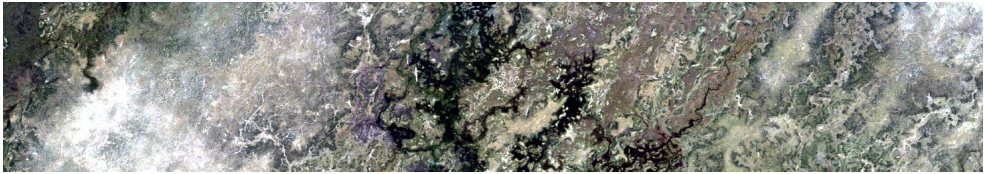}
 \caption{\textit{Botswana} May 2001.}
\label{bots_may}
\end{figure*}

\begin{figure*}[htbp]
\centering
 \includegraphics[bb=0 0 1476 256, scale=0.25]{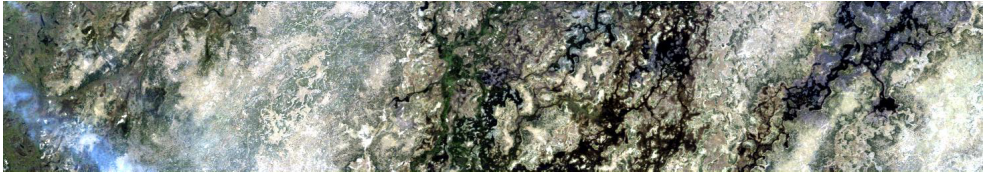}
 \caption{\textit{Botswana} June 2001.}
\label{bots_june}
\end{figure*}

\begin{figure*}[htbp]
\centering
 \includegraphics[bb=0 0 1476 256, scale=0.25]{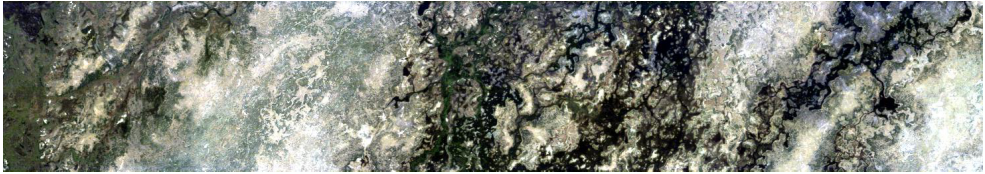}
 \caption{\textit{Botswana} July 2001.}
\label{bots_july}
\end{figure*}

\textit{b) Botswana} --- \cite{ragc06}: This is an application of transfer learning to the pixel-level classification of remotely sensed images, which provides a real-life scenario where such learning will be useful --- in contrast to the contrived setting of text classification, which is chosen as it has been used previously in \cite{daxy07}. It is relatively easy to acquire an image, but expensive to label each pixel manually, where images typically have about a million pixels and represent inaccessible terrain. 
Thus typically only  part of an image gets labeled. Moreover, when the satellite again flies over the same area, the new image can be quite different due to change of season, thus a classifier induced on the previous image becomes significantly degraded for the new task.
These hyperespectral data sets used are from a $1476 \times 256$ pixel study area located
in the Okavango Delta, Botswana. It has nine different land-cover types consisting of seasonal swamps, occasional swamps,
and drier woodlands located in the distal portion of the delta. 
Data from this region for different months (May, June and July) were obtained by the Hyperion sensor of the NASA EO-1 satellite for the calibration/validation portion of the mission in 2001. Data collected for each month was further segregated into two different areas.
While the May scene (Fig. \ref{bots_may}) is characterized by the onset of the annual flooding cycle and some newly burned areas,
the progression of the flood and the corresponding vegetation responses are seen in the June (Fig. \ref{bots_june}) and July (Fig. \ref{bots_july}) scenes.
The acquired raw data was further processed to produce 145 features. 
From each area of Botswana, different transfer learning tasks are generated: the classifiers are trained on either May, June or \{May $\cup$ June\} data and tested on either 
June or July data.  

For text data, we use logistic regression (\textbf{LR}), \textbf{SVM}, and Winnow (\textbf{WIN}) \cite{gafj08} as baseline classifiers. 
The CLUTO package (\url{http://www.cs.umn.edu/~karypis/cluto}) is used for clustering the target data into two clusters. 
We also compare \textbf{OAC\textsuperscript{3}} with two transfer learning algorithms from the literature --- Transductive Support Vector Machines (\textbf{TSVM}) \cite{joac99a} and 
the Locally Weighted Ensemble (\textbf{LWE}) \cite{gafj08}. We use Bayesian Logistic Regression \url{http://www.bayesianregression.org/}
for running the logistic regression classifier, LIBSVM (\url{http://www.csie.ntu.edu.tw/~cjlin/libsvm/}) for \textbf{SVM}, SNoW Learning Architecture 
\url{http://cogcomp.cs.illinois.edu/page/software_view/1} for \textbf{Winnow}, and SVM\textsuperscript{light} \url{http://svmlight.joachims.org/}
for transductive \textbf{SVM}. The posterior class probabilities from \textbf{SVM} are also obtained using the LIBSVM package with linear kernel. 
For SNoW, ``-S 3 -r 5" is used and the remaining parameters of all the packages are set to their default values.
The values of $(\alpha,\lambda)$, obtained by 10-fold cross-validation in source domain, are set as $(0.008,0.1)$ and $(0.11,0.1)$ for 
the transfer learning tasks corresponding to 20 \textit{Newsgroup} and \textit{Spam} datasets, respectively.
For \textit{Reuters}-21578, the best values of the parameters $(\alpha,\lambda)$ are found as $(0.009,0.1)$, $(0.0001,0.1)$, and $(0.08,0.1)$ for 
\textit{O vs Pe}, \textit{O vs Pl}, and \textit{Pe vs Pl}, respectively (see Table \ref{table:result2}).
For the hyperspectral data, we use two baseline classifiers: the well-known Na\"{\i}ve Bayes Wrapper (\textbf{NBW}) and the 
Maximum Likelihood (\textbf{ML}) classifier, which performs well when used with a best bases feature extractor \cite{kugc01}. 
The target set instances are clustered by $k$-means, varying \textit{k} from 50 to 70. \textbf{PCA} is also used for reducing 
the number of features employed by \textbf{ML}. In particular, for the hyperspectral 
data, cross-validation in the source domain does not result in very good performance. Therefore, 
we take 5\% labeled examples from each of the nine classes of the target data and tune the values of $\alpha$ and 
$\lambda$ based on the performance on these examples. The classifiers \textbf{NBW} or \textbf{ML}, however, are not 
retrained with these examples from the target domain and the accuracies reported in Table \ref{table:result2} are on the 
unlabeled examples only from the target domain. 

\begin{table*}[htp]
\centering
\tbl{Classification of \textit{20 Newsgroup}, \textit{Reuters-21758} and \textit{Spam} Data.}
{
\begin{tabular}{|l|c|c|c|c|c|c|c|c|}
\hline
\hline
\multicolumn{1}{|l|}{Dataset} & {Mode} & \multicolumn{1}{c|}{WIN}  & \multicolumn{1}{c|}{LR} & \multicolumn{1}{c|}{SVM} & \multicolumn{1}{c|}{Ensemble} 
& \multicolumn{1}{c|}{TSVM} & \multicolumn{1}{c|}{LWE} & \multicolumn{1}{c|}{OAC\textsuperscript{3}}\\ \hline
\multirow{6}{*}{20 Newsgroup}	
& C vs S &  66.61  & 67.17 &  67.02 &  69.58 & 76.97 & 77.07  & \textbf{91.25}\\  
& R vs T &  60.43  & 68.79 &  63.87 &  65.98 & 89.95 & 87.46  & \textbf{90.11}\\  
& R vs S &  80.11  & 76.51 &  71.40 &  77.39 & 89.96 & 87.81  & \textbf{92.90}\\ 
& S vs T &  73.93  & 72.16 &  71.51 &  75.11 & 85.59 & 81.99  & \textbf{91.83}\\ 
& C vs R &  89.00  & 77.36 &  81.50 &  85.18 & 89.64 & 91.09  & \textbf{93.75}\\ 
& C vs T &  93.41  & 91.76 &  93.89 &  93.48 & 88.26 & \textbf{98.90}  & 98.70\\\hline 
\multirow{3}{*}{Reuters-21758}	
& O vs Pe &  70.57 & 66.19 & 69.25 & 73.30 & 76.94 & 76.77  & \textbf{80.97} \\  
& O vs Pl &  65.10 & 67.87 & 69.88 & 69.21 & \textbf{70.08} & 67.59   & 68.91 \\  
& Pe vs Pl &  56.75 & 56.48 & 56.20 & 57.59  & 59.72 &  59.90  & \textbf{67.46}\\\hline  
\multirow{3}{*}{Spam}	
& spam 1 & 79.15  & 56.92  & 66.28 & 68.64  & 76.92  & 65.60	& \textbf{80.29}\\
& spam 2 & 81.15  & 59.76  & 73.15 & 75.07  & 84.92  & 73.36	& \textbf{87.05}\\
& spam 3 & 88.28  & 64.43  & 78.71 & 81.87  & 90.79  & \textbf{93.79}	& 91.27\\\hline\hline          
\end{tabular}}
\label{table:result1}
\end{table*}

The results for text data are reported in Table \ref{table:result1}. 
The different learning tasks corresponding to different pairs of 
categories are listed as ``Mode''. \textbf{OAC\textsuperscript{3}} improves the performance of the classifier ensemble 
(formed by combining  \textbf{WIN}, \textbf{LR} and \textbf{SVM} via output averaging)  for all learning tasks, except for \textit{O vs Pl}, where apparently the 
training and test distributions are similar. 
Also, the \textbf{OAC\textsuperscript{3}} accuracies are better than those achieved by both \textbf{TSVM} and \textbf{LWE} in 
most of the datasets. Except for \textbf{WIN}, 
the performances of the base classifiers and clustereres (and hence of \textbf{OAC\textsuperscript{3}}) are quite invariant, thereby resulting in very low standard deviations. 
The \textbf{OAC\textsuperscript{3}} accuracies are significantly better than those obtained by both \textbf{TSVM} and \textbf{LWE} (at $10\%$ significance level).

\begin{table*}[htp]
\centering
\tbl{Classification of Hyperspectral Data --- \emph{Botswana}.}
{
\begin{tabular}{|l|l|c|c|c|c|c|c|c|}
\hline
\hline
Dataset & Original to Target & \multicolumn{1}{|c|}{NBW} & \multicolumn{1}{c|}{NBW+OAC\textsuperscript{3}} & \multicolumn{1}{c|}{ML} 
& \multicolumn{1}{c|}{ML+OAC\textsuperscript{3}} & \multicolumn{1}{c|}{$\alpha$} & \multicolumn{1}{c|}{$\lambda$} & \multicolumn{1}{c|}{PCs}\\ \hline
\multirow{4}{*}{Area 1} & may to june & 70.68 & \textbf{72.61} $(\pm0.42)$ & 74.47 & \textbf{81.93} $(\pm0.52)$ & 0.0010 &0.1 & 9\\ 
 & may to july & 61.85 & \textbf{63.11} $(\pm0.29)$ & 58.58 & \textbf{64.32} $(\pm0.53$)& 0.0001 &0.2 & 12\\ 
 & june to july & 70.55 & \textbf{72.47} $(\pm0.17)$ & 79.71 & \textbf{80.06} $(\pm0.26)$& 0.0012 &0.1 & 127\\ 
 & may+june to july & 75.53 & \textbf{80.53} $(\pm0.31)$& 85.78 & \textbf{85.91} $(\pm0.23)$& 0.0008 &0.1 & 123\\ \hline
\multirow{4}{*}{Area 2} & may to june & 66.10 & \textbf{71.02} $(\pm0.28)$ & 70.22 & \textbf{81.48} $(\pm0.43)$& 0.0070 &0.1 & 9\\ 
 & may to july & 61.55 & \textbf{63.74} $(\pm0.14)$& 52.78 & \textbf{64.15} $(\pm0.22)$& 0.0001 &0.2 & 12\\ 
 & june to july & 54.89 & \textbf{57.65} $(\pm0.53)$& 75.62 & \textbf{77.04} $(\pm0.37)$& 0.0060 &0.1 & 80\\ 
 & may+june to july  & 63.79 & \textbf{64.58}  $(\pm0.16)$& 77.33 & \textbf{79.59} $(\pm0.23)$& 0.0040 &0.1 & 122\\ \hline\hline
\end{tabular}}
\label{table:result2}
\end{table*}

Table \ref{table:result2} 
reports the results for the hyperspectral data. 
The parameter values ($\alpha,\lambda$) for best performance of \textbf{OAC\textsuperscript{3}} are also presented alongside. 
Note that \textbf{OAC\textsuperscript{3}} provides consistent accuracy improvements for both \textbf{NBW} and \textbf{ML}\footnote{Standard deviations of the accuracies from \textbf{NBW} and \textbf{ML} are close to $0$ and hence not shown.}. In pairwise comparisons, the accuracies provided by 
\textbf{OAC\textsuperscript{3}} are significantly better than those obtained by both \textbf{NBW} and \textbf{ML} (at $10\%$ significance level). The column ``PCs'' indicates the number of principal 
components used to project the data.
\section{Concluding Remarks}
\label{sec:conc}

We presented a general framework for combining classifiers and clusterers to address  
semi-supervised and transfer learning problems. 
The optimization algorithm assumes closed form updates, facilitates parallelization of the same and, therefore, 
is extremely convenient in handling large scale data -- specially with a linear rate of convergence. The 
proofs for the convergence are quite novel and generalize across a wide variety of Bregman divergences, facilitating 
one to use proper divergence measure based on the application domain and subsuming many other existing graph based semi-supervised learning algorithms as special cases. 
The proposed framework has been empirically shown to outperform a variety 
of algorithms [\citeNP{Gao_TKDE}; \citeNP{sind06}; \citeNP{gafj08}] in both semi-supervised and transfer learning problems.

There are few aspects that can be further explored. For example, the impact of the number of classifiers and clusterers in \textbf{OAC\textsuperscript{3}} deserves further investigation. 
In addition, a more extensive study across a wide variety of problem domains will reveal the capabilities as well as potential limitations of the framework.

\appendix
\section*{APPENDIX}
\section{Proofs for Convergence of \textbf{OAC\textsuperscript{3}}}
\label{app:convergence}

\begin{lemma}
\label{Jlemma}
The objective function $J$ used in Eq. (\ref{eqn:mainobj}) is separately and jointly strictly convex over $\mathcal{S}^{n}\times\mathcal{S}^{n}$.  Also,
$J$ is jointly lower-semi-continuous w.r.t $\mathbf{y}^{(l)}$ and $\mathbf{y}^{(r)}$.   
\end{lemma}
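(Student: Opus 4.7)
\medskip

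The plan is to verify the three claims (separate strict convexity, joint strict convexity, and joint lower-semi-continuity) by decomposing $J$ term-by-term and invoking the properties (a), (b), (d) from Section~\ref{sec:convergence} together with the assumption $\lambda>0$.

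\textbf{Separate strict convexity.} First I would fix $\mathbf{y}^{(r)}$ and examine $J$ as a function of $\mathbf{y}^{(l)}$. The first sum becomes a constant. The middle sum $\alpha\sum_{i,j} s_{ij}d_\phi(\mathbf{y}_i^{(l)},\mathbf{y}_j^{(r)})$ is a nonnegative combination of strictly convex functions (property (a), first-slot version), hence convex. The third sum $\lambda\sum_i d_\phi(\mathbf{y}_i^{(l)},\mathbf{y}_i^{(r)})$, with $\mathbf{y}_i^{(r)}$ fixed, is a function of $\mathbf{y}_i^{(l)}$ alone and is strictly convex by property (a); since $\lambda>0$ and the sum decouples across $i$, this already forces strict convexity of $J(\cdot,\mathbf{y}^{(r)})$ on $\mathcal{S}^n$. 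The symmetric argument, fixing $\mathbf{y}^{(l)}$ and noting that the first sum $\sum_i d_\phi(\boldsymbol{\pi}_i,\mathbf{y}_i^{(r)})$ is strictly convex in each $\mathbf{y}_i^{(r)}$ by property (a), gives strict convexity in $\mathbf{y}^{(r)}$.

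\textbf{Joint strict convexity.} Joint convexity is immediate: each summand $d_\phi(\boldsymbol{\pi}_i,\mathbf{y}_i^{(r)})$, $d_\phi(\mathbf{y}_i^{(l)},\mathbf{y}_j^{(r)})$, $d_\phi(\mathbf{y}_i^{(l)},\mathbf{y}_i^{(r)})$ is jointly convex in its free variables by property (b), and $J$ is a nonnegative linear combination of these. To upgrade to strict inequality, I would argue by a case split. Take two distinct points $(\mathbf{y}^{(l)}_0,\mathbf{y}^{(r)}_0)\neq(\mathbf{y}^{(l)}_1,\mathbf{y}^{(r)}_1)$ in $\mathcal{S}^n\times\mathcal{S}^n$ and $\theta\in(0,1)$.
\begin{itemize}
\item If $\mathbf{y}^{(r)}_0\neq\mathbf{y}^{(r)}_1$, then some coordinate $\mathbf{y}^{(r)}_{i,0}\neq\mathbf{y}^{(r)}_{i,1}$ and the first sum $\sum_i d_\phi(\boldsymbol{\pi}_i,\mathbf{y}_i^{(r)})$, which is strictly convex in $\mathbf{y}^{(r)}$ by property (a), contributes a strict inequality at the midpoint $\theta\mathbf{y}^{(r)}_0+(1-\theta)\mathbf{y}^{(r)}_1$; the other two sums contribute weak inequalities, and the total is strict.
\item If $\mathbf{y}^{(r)}_0=\mathbf{y}^{(r)}_1$, then necessarily $\mathbf{y}^{(l)}_0\neq\mathbf{y}^{(l)}_1$. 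The third sum $\lambda\sum_i d_\phi(\mathbf{y}_i^{(l)},\mathbf{y}_i^{(r)})$ is then a function of $\mathbf{y}^{(l)}$ alone (with frozen second slot), strictly convex by property (a), and $\lambda>0$; this yields the strict inequality.
\end{itemize}
In both cases the convex combination inequality is strict, proving joint strict convexity on $\mathcal{S}^n\times\mathcal{S}^n$.

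\textbf{Joint lower-semi-continuity.} Property (d) states that each $d_\phi(\mathbf{p},\mathbf{q})$ is jointly lower-semi-continuous on $\mathcal{S}\times\mathrm{ri}(\mathcal{S})$. Each summand in $J$ is therefore lower-semi-continuous in its two arguments, hence also lower-semi-continuous as a function of the full vector $(\mathbf{y}^{(l)},\mathbf{y}^{(r)})$ (ignored coordinates do not affect lsc). Since all coefficients $s_{ij},\lambda\ge 0$, and a nonnegative finite sum of lower-semi-continuous functions is lower-semi-continuous, $J$ itself is jointly lsc.

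\textbf{Expected obstacle.} The separate and joint convexity parts and lsc are essentially bookkeeping on top of properties (a), (b), (d). The only subtlety is that property (b) alone gives only \emph{weak} joint convexity of each Bregman term — a Bregman divergence need not be strictly jointly convex (e.g., squared loss degenerates along the diagonal direction). The argument above circumvents this by using the penalty (third) term in the second case, where freezing $\mathbf{y}^{(r)}$ converts it into a strictly convex function of $\mathbf{y}^{(l)}$ alone via property (a). This case analysis, together with the assumption $\lambda>0$, is the key idea; without it, the lemma would fail for divergences such as squared Euclidean distance.
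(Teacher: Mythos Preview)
Your proof is correct and follows essentially the same term-by-term decomposition as the paper, invoking properties (a), (b), (d) for separate strict convexity, joint convexity, and lower-semi-continuity respectively. For joint \emph{strict} convexity, however, you are actually more careful than the paper: the paper simply splits $J=f_1+f_2$ with $f_1(\mathbf{y}^{(r)})=\sum_i d_\phi(\boldsymbol{\pi}_i,\mathbf{y}_i^{(r)})$ strictly convex and $f_2$ jointly convex, and writes the strict inequality for $f_1$ without addressing the case $\mathbf{y}^{1,(r)}=\mathbf{y}^{2,(r)}$, where $f_1$ gives only equality. Your case split, using the $\lambda$-penalty term (with $\lambda>0$) to supply strictness when the right copies coincide, is precisely what is needed to close that gap; the paper's version tacitly relies on the same mechanism but does not spell it out.
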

\begin{proof}
\begin{enumerate}
 \item From the property (a) in Section \ref{sec:convergence}, one can see 
that $J$ is strictly convex w.r.t $\mathbf{y}^{(l)}$ and $\mathbf{y}^{(r)}$ separately. 
From the same property the first 
term $f_{1}(\mathbf{y}^{(r)}) = \displaystyle\sum_{i=1}^{n}d_{\phi}(\boldsymbol{\pi}_{i}, \mathbf{y}_{i}^{(r)})$ in $J$ is strictly convex w.r.t. $\mathbf{y}^{(r)}$.  
The $2^{\text{nd}}$ and $3^{\text{rd}}$ terms in the objective function can collectively be represented by 
$f_{2}(\mathbf{y}^{(l)}, \mathbf{y}^{(r)})$. This function is jointly convex by property (b) but is not 
necessarily jointly strictly convex.
Suppose $(\mathbf{y}^{1, (l)}, \mathbf{y}^{1, (r)}), (\mathbf{y}^{2, (l)}, \mathbf{y}^{2, (r)})\in\mathcal{S}^{n}\times\mathcal{S}^{n}$ and 
$0<w<1$. Then, we have:
\begin{eqnarray*}
f_{1}(w\mathbf{y}^{1, (r)}+(1-w)\mathbf{y}^{2, (r)}) &<& wf_{1}(\mathbf{y}^{1, (r)}) + (1-w)f_{1}(\mathbf{y}^{2, (r)})\\ \nonumber
f_{2}(w(\mathbf{y}^{1, (l)}, \mathbf{y}^{1, (r)}) + (1-w)(\mathbf{y}^{2, (l)}, \mathbf{y}^{2, (r)})) &\le& wf_{2}(\mathbf{y}^{1, (l)}, \mathbf{y}^{1, (r)}) 
+ (1-w)f_{2}(\mathbf{y}^{2, (l)}, \mathbf{y}^{2, (r)}) \nonumber .
\end{eqnarray*}
Now, it follows that:
\begin{eqnarray}
&& J(w(\mathbf{y}^{1, (l)}, \mathbf{y}^{1, (r)})+(1-w)(\mathbf{y}^{2, (l)}, \mathbf{y}^{2, (r)}))\\ \nonumber
&=&  f_{1}(w\mathbf{y}^{1, (r)}+(1-w)\mathbf{y}^{2, (r)}) + f_{2}(w(\mathbf{y}^{1, (l)}, \mathbf{y}^{1, (r)})+(1-w)(\mathbf{y}^{2, (l)}, \mathbf{y}^{2, (r)})) \\ \nonumber
&<& wf_{1}(\mathbf{y}^{1, (r)}) + (1-w)f_{1}(\mathbf{y}^{2, (r)}) + wf_{2}(\mathbf{y}^{1, (l)}, \mathbf{y}^{1, (r)}) + 
(1-w)f_{2}(\mathbf{y}^{2, (l)}, \mathbf{y}^{2, (r)}) \\ \nonumber
&=& wJ(\mathbf{y}^{1, (l)}, \mathbf{y}^{1, (r)})+
(1-w)J(\mathbf{y}^{2, (l)}, \mathbf{y}^{2, (r)}), 
\end{eqnarray}
which implies that $J$ is jointly strictly convex.
 \item To prove that $J(\mathbf{y}^{(l)}, \mathbf{y}^{(r)})$ is lower-semi-continuous in $\mathbf{y}^{(l)}$ 
and $\mathbf{y}^{(r)}$ jointly, we observe that
\begin{eqnarray}
&& \underset{(\mathbf{y}^{(l)}, \mathbf{y}^{(r)}) \to (\mathbf{y}^{0, (l)}, \mathbf{y}^{0, (r)})}
{\operatorname{lim \text{ }inf }} J(\mathbf{y}^{0, (l)}, \mathbf{y}^{0, (r)}) \\ \nonumber
&=& 
\left[\displaystyle\sum_{i=1}^{n} \underset{\mathbf{y}_{i}^{(r)} \to \mathbf{y}_{i}^{0,(r)}}{\operatorname{lim \text{ }inf }} 
d_{\phi}(\boldsymbol{\pi}_{i}, \mathbf{y}_{i}^{(r)})
+ \alpha\displaystyle\sum_{i,j=1}^{n} s_{ij} \underset{(\mathbf{y}_{i}^{(l)}, \mathbf{y}_{j}^{(r)}) \to (\mathbf{y}_{i}^{0,(l)}, \mathbf{y}_{j}^{0,(r)})}
{\operatorname{lim \text{ }inf }} d_{\phi}(\mathbf{y}_{i}^{(l)}, \mathbf{y}_{j}^{(r)}) \right. \\ \nonumber
&+& \left. \lambda\displaystyle\sum_{i=1}^{n}\underset{(\mathbf{y}_{i}^{(l)}, \mathbf{y}_{i}^{(r)}) \to (\mathbf{y}_{i}^{0,(l)}, \mathbf{y}_{i}^{0,(r)})}
{\operatorname{lim \text{ }inf }} d_{\phi}(\mathbf{y}_{i}^{(l)}, \mathbf{y}_{i}^{(r)})\right]\\ \nonumber
&\ge& \left[\displaystyle\sum_{i=1}^{n}d_{\phi}(\boldsymbol{\pi}_{i}, \mathbf{y}_{i}^{0, (r)})
+ \alpha\displaystyle\sum_{i,j=1}^{n} s_{ij}d_{\phi}(\mathbf{y}_{i}^{0, (l)}, \mathbf{y}_{j}^{0, (r)})
+ \lambda\displaystyle\sum_{i=1}^{n}d_{\phi}(\mathbf{y}_{i}^{0, (l)}, \mathbf{y}_{i}^{0, (r)})\right]\\ \nonumber
&=&  J(\mathbf{y}^{0, (l)}, \mathbf{y}^{0, (r)}).
\end{eqnarray}
The inequality in the $3^{\text{rd}}$ step follows from the lower semi continuity of $d_{\phi}(.,.)$ in Section \ref{sec:convergence} 
(Property (d)).
\end{enumerate}
\end{proof}

The following theorem helps prove that the objective function $J$ can be seen as part of a Bregman divergence.
\begin{theorem}[\cite{bamd05}]
\label{iffBregman}
A divergence $d:\mathcal{S}\times\text{ri}(\mathcal{S})\to[0, \infty)$ is a Bregman divergence if and only if 
$\exists\mathbf{a} \in \text{ri}(\mathcal{S})$ such that the function $\phi_{\mathbf{a}}(\mathbf{p}) = d(\mathbf{p}, \mathbf{a})$ satisfies the following 
conditions:
\begin{enumerate}
 \item $\phi_{\mathbf{a}}$ is strictly convex on $\mathcal{S}$. 
 \item $\phi_{\mathbf{a}}$ is differentiable on $\text{ri}(\mathcal{S})$.
 \item $d(\mathbf{p},\mathbf{q}) = d_{\phi_{\mathbf{a}}}(\mathbf{p},\mathbf{q}), \forall \mathbf{p} \in \mathcal{S}, \mathbf{q} \in 
\text{ri}(\mathcal{S})$ where $d_{\phi_{\mathbf{a}}}$ is the Bregman divergence associated with $\phi_{\mathbf{a}}$.
\end{enumerate}
\end{theorem}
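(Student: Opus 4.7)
The plan is to prove both implications separately. The forward direction is the substantive one and relies on the well-known \emph{affine invariance} of Bregman divergences: if two strictly convex generators differ by an affine function, they induce the same Bregman divergence. The reverse direction is nearly immediate from Definition \ref{def:1}.

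For the ``only if'' direction, assume $d = d_\phi$ for some strictly convex $\phi$ differentiable on $\operatorname{ri}(\mathcal{S})$. Since $\operatorname{ri}(\mathcal{S})$ is nonempty, choose any $\mathbf{a}\in\operatorname{ri}(\mathcal{S})$ and set
\begin{equation*}
\phi_{\mathbf{a}}(\mathbf{p}) \;=\; d(\mathbf{p},\mathbf{a}) \;=\; \phi(\mathbf{p}) - \phi(\mathbf{a}) - \langle \mathbf{p}-\mathbf{a},\, \grad_{\phi}(\mathbf{a})\rangle.
\end{equation*}
Then $\phi_{\mathbf{a}}$ differs from $\phi$ by the affine function $\mathbf{p}\mapsto -\phi(\mathbf{a}) - \langle \mathbf{p}-\mathbf{a},\grad_{\phi}(\mathbf{a})\rangle$. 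Since strict convexity is preserved under addition of an affine term, property (1) holds; since differentiability on $\operatorname{ri}(\mathcal{S})$ is likewise preserved, property (2) holds. For property (3), note $\grad_{\phi_{\mathbf{a}}}(\mathbf{q})=\grad_{\phi}(\mathbf{q})-\grad_{\phi}(\mathbf{a})$, and a direct expansion gives
\begin{equation*}
d_{\phi_{\mathbf{a}}}(\mathbf{p},\mathbf{q}) = \phi_{\mathbf{a}}(\mathbf{p}) - \phi_{\mathbf{a}}(\mathbf{q}) - \langle \mathbf{p}-\mathbf{q},\grad_{\phi_{\mathbf{a}}}(\mathbf{q})\rangle = \phi(\mathbf{p})-\phi(\mathbf{q})-\langle \mathbf{p}-\mathbf{q},\grad_{\phi}(\mathbf{q})\rangle = d_{\phi}(\mathbf{p},\mathbf{q}),
\end{equation*}
where the $\phi(\mathbf{a})$ and $\grad_{\phi}(\mathbf{a})$ terms cancel telescopically between the two evaluations of $\phi_{\mathbf{a}}$ and the inner product. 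Thus $d_{\phi_{\mathbf{a}}}=d_{\phi}=d$, establishing (3).

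For the ``if'' direction, suppose some $\mathbf{a}\in\operatorname{ri}(\mathcal{S})$ is given such that $\phi_{\mathbf{a}}(\mathbf{p})=d(\mathbf{p},\mathbf{a})$ satisfies (1)--(3). Then $\phi_{\mathbf{a}}$ is a strictly convex function on $\mathcal{S}$ which is differentiable on $\operatorname{ri}(\mathcal{S})$, so by Definition \ref{def:1} the map $(\mathbf{p},\mathbf{q})\mapsto d_{\phi_{\mathbf{a}}}(\mathbf{p},\mathbf{q})$ is a bona fide Bregman divergence. Property (3) identifies $d$ with $d_{\phi_{\mathbf{a}}}$ on $\mathcal{S}\times\operatorname{ri}(\mathcal{S})$, so $d$ is itself a Bregman divergence, with generating function $\phi_{\mathbf{a}}$.

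The only subtle point, and the place where one must be careful, is the affine-cancellation computation in the forward direction; once that identity is set up, both strict convexity and differentiability of $\phi_{\mathbf{a}}$ follow from the corresponding properties of $\phi$ without any additional work. No topological subtleties arise because $\mathbf{a}$ is chosen in the relative interior, so all gradients and Bregman divergences referenced are well-defined.
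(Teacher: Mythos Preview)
Your proof is correct. The forward direction uses exactly the right observation: adding an affine function to the generator $\phi$ leaves the Bregman divergence unchanged, so $\phi_{\mathbf{a}}(\cdot)=d_\phi(\cdot,\mathbf{a})$ is a legitimate alternative generator, and your cancellation computation verifying $d_{\phi_{\mathbf{a}}}=d_\phi$ is accurate. The reverse direction is, as you note, immediate from Definition~\ref{def:1}.

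One remark on the comparison you were asked to make: the paper does \emph{not} supply its own proof of Theorem~\ref{iffBregman}. It is quoted verbatim from \cite{bamd05} and used as a black box to establish Corollary~\ref{app_additiveBregman}. So there is no in-paper argument to compare against; your write-up simply fills in what the paper takes on citation. The original proof in \cite{bamd05} (their Appendix~A) follows essentially the same affine-shift idea you used, so your approach matches the source as well.
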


We now introduce a function $\tilde{J}:\mathcal{S}^{n}\times \mathcal{S}^{n}\to [0,\infty)$ that is defined as follows:
\begin{equation}
\label{eqn:maintildeobj}
 \tilde{J}(\mathbf{y}^{(r)\prime}, \mathbf{y}^{(r)}) 
= \left[\displaystyle\sum_{i=1}^{n}d_{\phi}(\mathbf{y}_{i}^{(r)\prime}, \mathbf{y}_{i}^{(r)})
+ \alpha\displaystyle\sum_{i,j=1}^{n} s_{ij}d_{\phi}(\mathbf{y}_{i}^{(r)\prime}, \mathbf{y}_{j}^{(r)})
+ \lambda\displaystyle\sum_{i=1}^{n}d_{\phi}(\mathbf{y}_{i}^{(r)\prime}, \mathbf{y}_{i}^{(r)})\right].
\end{equation}
Note that $\tilde{J}$ is different from $J$ defined in Eq. (\ref{eqn:mainobj}). The left arguments in the divergences of the 
first term of $J$ are $\boldsymbol{\pi}_{i}$'s which are assumed to be fixed.
\begin{lemma}
\label{Jtildeprop}
 $\tilde{J}$ satisfies properties (a) and (b) in Section \ref{sec:convergence}.
\end{lemma}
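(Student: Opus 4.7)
The plan is to verify properties (a) and (b) directly from the definition of $\tilde{J}$ in Eq.~(\ref{eqn:maintildeobj}), using the corresponding properties of the underlying Bregman divergence $d_\phi$ and the fact that (strict) convexity is preserved under non-negative linear combinations. Concretely, I would write $\tilde{J}$ as
\begin{equation*}
\tilde{J}(\mathbf{y}^{(r)\prime}, \mathbf{y}^{(r)}) = \sum_{i,j} c_{ij}\, d_\phi(\mathbf{y}_i^{(r)\prime}, \mathbf{y}_j^{(r)}),
\end{equation*}
with non-negative coefficients $c_{ij}$ that collect the contributions of the three sums (note $c_{ii} = 1 + \lambda + \alpha s_{ii}$ and $c_{ij} = \alpha s_{ij}$ for $i\neq j$), and then reduce everything to closure properties of the class of (strictly) convex functions.

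For property (a), I would establish strict convexity of $\tilde{J}$ in $\mathbf{y}^{(r)\prime}$ with $\mathbf{y}^{(r)}$ held fixed, and symmetrically in $\mathbf{y}^{(r)}$ with $\mathbf{y}^{(r)\prime}$ held fixed. For the first, I note that for every index $i$ the variable $\mathbf{y}_i^{(r)\prime}$ appears in the term $(1+\lambda)\,d_\phi(\mathbf{y}_i^{(r)\prime}, \mathbf{y}_i^{(r)})$, which is strictly convex in $\mathbf{y}_i^{(r)\prime}$ by property (a) of $d_\phi$; the remaining terms $\alpha s_{ij} d_\phi(\mathbf{y}_i^{(r)\prime}, \mathbf{y}_j^{(r)})$ for $j\neq i$ are convex in $\mathbf{y}_i^{(r)\prime}$ (again by property (a)). Thus $\tilde{J}$ is strictly convex in each block $\mathbf{y}_i^{(r)\prime}$, and since the blocks over different $i$ are disjoint coordinates, the total function is strictly convex in the concatenated vector $\mathbf{y}^{(r)\prime}\in\mathcal{S}^n$. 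An identical argument, using that $d_\phi(\cdot,\mathbf{q})$ is strictly convex in its second argument by property (a), gives strict convexity in $\mathbf{y}^{(r)}$.

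For property (b), I would observe that each summand $c_{ij} d_\phi(\mathbf{y}_i^{(r)\prime}, \mathbf{y}_j^{(r)})$ is jointly convex in the pair $(\mathbf{y}_i^{(r)\prime}, \mathbf{y}_j^{(r)})$ by property (b) of $d_\phi$, with $c_{ij}\ge 0$. Viewed as a function on the full product space $\mathcal{S}^n\times\mathcal{S}^n$, each summand is convex, since convexity of a function depending only on a projection onto a sub-product lifts to convexity on the full product (it is the composition with a linear map). A non-negative combination of jointly convex functions is jointly convex, so $\tilde{J}$ is jointly convex in $(\mathbf{y}^{(r)\prime}, \mathbf{y}^{(r)})$.

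There is no real obstacle here; the proof mirrors the first half of Lemma~\ref{Jlemma} almost verbatim, with the only minor care required being the block-decomposition argument that lifts separate strict convexity in each coordinate block $\mathbf{y}_i^{(r)\prime}$ to strict convexity in the full concatenated variable. The key fact being exploited is that the coefficients in front of the ``diagonal'' divergences $d_\phi(\mathbf{y}_i^{(r)\prime}, \mathbf{y}_i^{(r)})$ are strictly positive, which guarantees that strict convexity does not degrade to mere convexity when one sums over the index pairs.
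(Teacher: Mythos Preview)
Your proposal is correct and is essentially a detailed expansion of the paper's own proof, which consists of the single sentence ``The proof is direct from the definition of $\tilde{J}$.'' You have supplied exactly the argument the paper leaves implicit: writing $\tilde{J}$ as a non-negative combination of $d_\phi$-terms and invoking properties (a) and (b) of $d_\phi$ together with closure of (strict) convexity under such combinations.
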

\begin{proof}
 The proof is direct from the definition of $\tilde{J}$.
\end{proof}

Further assume: 
\begin{eqnarray*}
\mathbf{p} &=& \big((\boldsymbol{\pi}_{i})_{i=1}^{n}, (\mathbf{y}_{i}^{(l)})_{i=1}^{n}, ((\mathbf{y}_{i}^{(l)})_{j=1}^{n-1})_{i=1}^{n}\big),\\ \nonumber 
\mathbf{q} &=& \big((\mathbf{y}_{i}^{(r)})_{i=1}^{n}, (\mathbf{y}_{i}^{(r)})_{i=1}^{n}, ((\mathbf{y}_{j}^{(r)})_{j=1, j\neq i}^{n-1})_{i=1}^{n}\big),\\ \nonumber 
\mathbf{q}^{\prime} &=& \big((\mathbf{y}_{i}^{(r)\prime})_{i=1}^{n}, (\mathbf{y}_{i}^{(r)\prime})_{i=1}^{n}, ((\mathbf{y}_{j}^{(r)\prime})_{j=1, j\neq i}^{n-1})_{i=1}^{n}\big),
\end{eqnarray*}
with $\mathbf{y}_{i}^{(l)}, \mathbf{y}_{i}^{(r)}, \mathbf{y}_{i}^{(r)\prime}\in\mathcal{S}\text{ }\forall i$.
The vectors $\mathbf{p}$, $\mathbf{q}$ and $\mathbf{q}^{\prime}$ are each of dimension $kn(n+1)$ and formed by concatenating 
vectors from the set $\mathcal{S}$. $(\mathbf{y})_{i=1}^{n}$ implies that a new vector is created by repeating $\mathbf{y}$ 
for $n$ times.
For ease of understanding, we also define $\mathcal{A} = \{\mathbf{p}:\mathbf{y}^{(l)} \in\mathcal{S}^{n}\}$, $\mathcal{B} = \{\mathbf{q}:\mathbf{y}^{(r)} \in\mathcal{S}^{n}\}$.
We will assume that whenever a point $\mathbf{y}^{(l)}\in\mathcal{S}^{n}$ is 
mapped to a point $\mathbf{p}\in\mathcal{A}$, $\mathbf{p} = \mathbb{A}(\mathbf{y}^{(l)})$. Similarly, 
$\mathbf{q} = \mathbb{B}(\mathbf{y}^{(r)})$ whenever $\mathbf{y}^{(r)}\in\mathcal{S}^{n}$ is 
mapped to $\mathbf{q}\in\mathcal{B}$. Indeed, both $\mathbb{A}$ and $\mathbb{B}$ are bijective mappings. 

\begin{example}
To explain the mappings $\mathbb{A}$ and $\mathbb{B}$ more clearly, we consider the following example. 
Let $n=3$ and $\mathbf{y}^{(l)}, \mathbf{y}^{(r)}, \mathbf{y}^{(r)\prime} \in\mathcal{S}^{3}$. Here, 
$\mathbf{y}^{(l)} = \big(\mathbf{y}_{1}^{(l)},\mathbf{y}_{2}^{(l)},\mathbf{y}_{3}^{(l)}\big)$ -- a concatenation of 
three vectors $\mathbf{y}_{1}^{(l)}$, $\mathbf{y}_{2}^{(l)}$ and $\mathbf{y}_{3}^{(l)}$ (corrpesponding to three 
instances) each of which belongs to $\mathcal{S}\subseteq \mathbb{R}^{k}$. Similarly,
$\mathbf{y}^{(r)} = \big(\mathbf{y}_{1}^{(r)},\mathbf{y}_{2}^{(r)},\mathbf{y}_{3}^{(r)}\big)$ and 
$\mathbf{y}^{(r)\prime} = \big(\mathbf{y}_{1}^{(r)\prime},\mathbf{y}_{2}^{(r)\prime},\mathbf{y}_{3}^{(r)\prime}\big)$.
The vector $\mathbf{p}$, formed by the transformation $\mathbb{A}$ on $\mathbf{y}^{(l)}$, takes the following form:
\begin{equation*}
 \mathbf{p} = \big(\boldsymbol{\pi}_{1}, \boldsymbol{\pi}_{2}, \boldsymbol{\pi}_{3}, \mathbf{y}_{1}^{(l)}, \mathbf{y}_{2}^{(l)}, \mathbf{y}_{3}^{(l)},
\mathbf{y}_{1}^{(l)}, \mathbf{y}_{1}^{(l)}, \mathbf{y}_{2}^{(l)}, \mathbf{y}_{2}^{(l)}, \mathbf{y}_{3}^{(l)}, \mathbf{y}_{3}^{(l)}\big)
\end{equation*} 
Note that this vector has $12$ elements each of dimension $k$ and hence the dimension of the whole vector is of the form
$kn(n+1)$. Similarly,
 \begin{equation*}
 \mathbf{q} = \mathbb{B}(\mathbf{y}^{(r)}) = \big(\mathbf{y}_{1}^{(r)}, \mathbf{y}_{2}^{(r)}, \mathbf{y}_{3}^{(r)}, \mathbf{y}_{1}^{(r)}, \mathbf{y}_{2}^{(r)}, \mathbf{y}_{3}^{(r)},
\mathbf{y}_{2}^{(r)}, \mathbf{y}_{3}^{(r)}, \mathbf{y}_{1}^{(r)}, \mathbf{y}_{3}^{(r)}, \mathbf{y}_{1}^{(r)}, \mathbf{y}_{2}^{(r)}\big),
\end{equation*}
and,
\begin{equation*}
 \mathbf{q}^{\prime} = \mathbb{B}(\mathbf{y}^{(r)\prime}) = \big(\mathbf{y}_{1}^{(r)\prime}, \mathbf{y}_{2}^{(r)\prime}, \mathbf{y}_{3}^{(r)\prime}, 
\mathbf{y}_{1}^{(r)\prime}, \mathbf{y}_{2}^{(r)\prime}, \mathbf{y}_{3}^{(r)\prime},
\mathbf{y}_{2}^{(r)\prime}, \mathbf{y}_{3}^{(r)\prime}, \mathbf{y}_{1}^{(r)\prime}, \mathbf{y}_{3}^{(r)\prime}, \mathbf{y}_{1}^{(r)\prime}, 
\mathbf{y}_{2}^{(r)\prime}\big).
\end{equation*} 
\qed
\end{example}

Now, in light of Theorem \ref{iffBregman}, the following corollary is introduced.
\begin{corollary}
\label{app_additiveBregman} 
If a mapping $d: (\mathcal{A}\cup\mathcal{B})\times\mathcal{B}\to [0, \infty)$ is defined as:
\begin{equation}
\label{eqn:ddef}
d(\mathbf{r},\mathbf{q}) = 
\left\{
\begin{array}{ll}
 d(\mathbf{p},\mathbf{q}) = J(\mathbf{y}^{(l)}, \mathbf{y}^{(r)})& \mbox{if } \mathbf{r} = \mathbf{p}\in\mathcal{A}\\
 d(\mathbf{q}^{\prime},\mathbf{q}) = \tilde{J}(\mathbf{y}^{(r)\prime}, \mathbf{y}^{(r)})& \mbox{if } \mathbf{r} = \mathbf{q}^{\prime}\in\mathcal{B}
\end{array}
\right.  
\end{equation}
then $d$ is a Bregman divergence.
\end{corollary}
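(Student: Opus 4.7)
The plan is to apply Theorem \ref{iffBregman} by exhibiting a single strictly convex, differentiable function $\Phi$ on the convex superset $\mathcal{S}^{n(n+1)}$ of $\mathcal{A}\cup\mathcal{B}$, such that its Bregman divergence $d_{\Phi}$ restricts to $d$ on $(\mathcal{A}\cup\mathcal{B})\times\mathcal{B}$. The key observation driving the proof is that both $J$ and $\tilde{J}$ are, after the substitutions $\mathbf{p}=\mathbb{A}(\mathbf{y}^{(l)})$, $\mathbf{q}'=\mathbb{B}(\mathbf{y}^{(r)\prime})$, $\mathbf{q}=\mathbb{B}(\mathbf{y}^{(r)})$, nothing more than a \emph{block-weighted sum} of the underlying scalar Bregman divergence $d_{\phi}$ applied coordinate-block by coordinate-block, where every block is a vector in $\mathbb{R}^{k}$.

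The first step is to identify the weights $\{w_{\ell}\}_{\ell=1}^{n(n+1)}$ that reproduce both $J$ and $\tilde{J}$: $w_{\ell}=1$ for the first $n$ blocks (corresponding to the $\sum_{i}d_{\phi}(\boldsymbol{\pi}_{i},\mathbf{y}_{i}^{(r)})$ or $\sum_{i}d_{\phi}(\mathbf{y}_{i}^{(r)\prime},\mathbf{y}_{i}^{(r)})$ term), $w_{\ell}=\lambda$ for the next $n$ blocks (the penalty coupling left and right copies), and $w_{\ell}=\alpha s_{ij}$ for the remaining $n(n-1)$ blocks indexed by the off-diagonal pairs $(i,j)$ encoded in the layouts of $\mathbb{A}$ and $\mathbb{B}$. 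Because $\mathbb{A}$ and $\mathbb{B}$ use the \emph{same} block layout (with the $\boldsymbol{\pi}_{i}$'s in $\mathbf{p}$ simply replacing the free copy $\mathbf{y}_{i}^{(r)\prime}$ in $\mathbf{q}'$ for the first $n$ slots), one and the same weight sequence reproduces both cases of Eq.~(\ref{eqn:ddef}).

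Next, define $\Phi(\mathbf{r})=\sum_{\ell=1}^{n(n+1)} w_{\ell}\phi(r_{\ell})$ on $\mathcal{S}^{n(n+1)}$, where $r_{\ell}\in\mathcal{S}$ denotes the $\ell$-th block of $\mathbf{r}$. Since $\phi$ is strictly convex on $\mathcal{S}$ and differentiable on $\text{ri}(\mathcal{S})$ by Definition \ref{def:1}, and since every $w_{\ell}>0$, the function $\Phi$ is strictly convex and differentiable on $\text{ri}(\mathcal{S}^{n(n+1)})$, and standard coordinate-wise algebra gives
\begin{equation*}
d_{\Phi}(\mathbf{r},\mathbf{q})=\sum_{\ell=1}^{n(n+1)} w_{\ell}\, d_{\phi}(r_{\ell},q_{\ell}).
\end{equation*}
A direct term-by-term comparison with Eqs.~(\ref{eqn:mainobj}) and (\ref{eqn:maintildeobj}) then shows $d_{\Phi}(\mathbb{A}(\mathbf{y}^{(l)}),\mathbb{B}(\mathbf{y}^{(r)}))=J(\mathbf{y}^{(l)},\mathbf{y}^{(r)})$ and $d_{\Phi}(\mathbb{B}(\mathbf{y}^{(r)\prime}),\mathbb{B}(\mathbf{y}^{(r)}))=\tilde{J}(\mathbf{y}^{(r)\prime},\mathbf{y}^{(r)})$, i.e., $d_{\Phi}$ agrees with $d$ on $(\mathcal{A}\cup\mathcal{B})\times\mathcal{B}$.

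Finally, to instantiate Theorem \ref{iffBregman} itself, pick any $\mathbf{a}\in\text{ri}(\mathcal{B})$ and set $\phi_{\mathbf{a}}(\mathbf{r})=d(\mathbf{r},\mathbf{a})=\Phi(\mathbf{r})-\Phi(\mathbf{a})-\langle\mathbf{r}-\mathbf{a},\grad\Phi(\mathbf{a})\rangle$. As a function of $\mathbf{r}$ this is $\Phi$ plus an affine term, so it inherits strict convexity and differentiability from $\Phi$, and its associated Bregman divergence is precisely $d_{\Phi}$, which we have just shown equals $d$. This verifies the three conditions of Theorem \ref{iffBregman}, proving that $d$ is a Bregman divergence. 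The only delicate point in the plan—and the one I expect to be the main obstacle—is the bookkeeping needed to confirm that the single weight vector $\{w_{\ell}\}$ correctly reproduces both branches of Eq.~(\ref{eqn:ddef}) and, relatedly, that the diagonal terms $i=j$ in the double sum $\sum_{i,j}s_{ij}d_{\phi}(\cdot,\cdot)$ are handled consistently with the layout of $\mathbb{A}$ and $\mathbb{B}$ (either by absorbing them into the $\lambda$ block or by assuming $s_{ii}=0$); once that bookkeeping is settled, the rest is routine application of the additivity of Bregman divergences across product spaces.
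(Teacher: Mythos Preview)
Your proposal is correct and follows essentially the same route as the paper. Both arguments verify Theorem~\ref{iffBregman} by exhibiting a generating function that is a weighted sum of copies of $\phi$ (equivalently, of $d_{\phi}(\cdot,\mathbf{a})$) over the $n(n+1)$ blocks; the paper defines this function implicitly as $\psi_{\mathbf{a}'}(\mathbf{r})=d(\mathbf{r},\mathbf{a}')$ and checks conditions (a)--(c) directly, whereas you make the weighted-sum structure $\Phi(\mathbf{r})=\sum_{\ell}w_{\ell}\phi(r_{\ell})$ explicit first and then note that $\phi_{\mathbf{a}}=\Phi+\text{affine}$. One small point worth tightening: your claim that ``every $w_{\ell}>0$'' requires $s_{ij}>0$ for all $i\neq j$, which need not hold; the paper sidesteps this by asserting strict convexity only on the structured sets $\mathcal{A}\cup\mathcal{B}$ (where each underlying variable $\mathbf{y}_{i}$ appears in at least one positively-weighted block), rather than on all of $\mathcal{S}^{n(n+1)}$.
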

\begin{proof}
 We show that conditions (a), (b) and (c) of Theorem \ref{iffBregman} are satisfied for $d$.
 \begin{enumerate}
  \item  Since $d_{\phi}$ is a Bregman divergence, $\exists\mathbf{a}\in\text{ri}(\mathcal{S})$ such that conditions (a), (b) and (c) are 
satisfied in corollary \ref{iffBregman} pertaining to this divergence. Note that
$\mathbf{p}\in\mathcal{A}$ and $\mathbf{q}^{\prime}, \mathbf{q}\in\mathcal{B}$.
Assume  $\mathbf{a}^{\prime}=\mathbb{B}((\mathbf{a})_{i=1}^{n})\in\mathcal{B}\subset (\mathcal{A}\cup\mathcal{B})$. 
We now define
\begin{equation}
\label{eqn:psidef}
\psi_{\mathbf{a}^{\prime}}(\mathbf{r}) =
\left\{
\begin{array}{ll}
 \psi_{\mathbf{a}^{\prime}}(\mathbf{p}) = d(\mathbf{p},\mathbf{a}^{\prime}) = J(\mathbf{y}^{(l)}, \mathbf{y}^{(r)}) & \mbox{if } \mathbf{r} = \mathbf{p}\in\mathcal{A}\\
 \psi_{\mathbf{a}^{\prime}}(\mathbf{q}^{\prime}) = d(\mathbf{q}^{\prime},\mathbf{a}^{\prime}) = \tilde{J}(\mathbf{y}^{(r)\prime}, \mathbf{y}^{(r)})& 
\mbox{if } \mathbf{r} = \mathbf{q}^{\prime}\in\mathcal{B} 
\end{array}
\right.
\end{equation}
Since each of $d_{\phi}(.,\mathbf{a})$ is strictly convex over $\mathcal{S}^{n}$ in Eq. (\ref{eqn:mainobj}) and Eq. (\ref{eqn:maintildeobj}), 
$\psi_{\mathbf{a}^{\prime}}$ is also strictly convex on $\mathcal{A}\cup\mathcal{B}$. Note the emphasis on $\mathcal{B}\subset (\mathcal{A}\cup\mathcal{B})$ in the definition 
of $\mathbf{a}^{\prime}$ which just ensures that all conditions in Theorem \ref{iffBregman} are satisfied.
 \item Again, this is a direct consequence from Eq. (\ref{eqn:mainobj}) and Eq. (\ref{eqn:maintildeobj}). 
Since, by the strict convexity of $\phi(.)$, each of $d_{\phi}(.,\mathbf{a})$ is differentiable 
over $\text{ri}(\mathcal{S}^{n})$, $\psi_{\mathbf{a}^{\prime}}$ is also differentiable over $\text{ri}(\mathcal{A}\cup \mathcal{B})$. 
Note that we have a bijective mapping of elements from $\mathcal{S}^{n}$ to $\mathcal{A}\cup\mathcal{B}$, 
and hence $\text{ri}(\mathcal{S}^{n})$ gets mapped to $\text{ri}(\mathcal{A}\cup \mathcal{B})$.
\item We have $\forall \text{ }(\mathbf{p}, \mathbf{q})\in \mathcal{A}\times \mathcal{B}$,
\begin{eqnarray*}
&& d_{\psi_{\mathbf{a}^{\prime}}}(\mathbf{p}, \mathbf{q}) = \left[\psi_{\mathbf{a}^{\prime}}(\mathbf{p}) - \psi_{\mathbf{a}^{\prime}}(\mathbf{q}) 
- \langle \grad_{\psi_{\mathbf{a}^{\prime}}}(\mathbf{q}), (\mathbf{p}-\mathbf{q})\rangle \right]\\ \nonumber
&=& \displaystyle\sum_{i=1}^{n}\left[d_{\phi}(\boldsymbol{\pi}_{i}, \mathbf{a})-d_{\phi}(\mathbf{y}_{i}^{(r)}, \mathbf{a})\right] 
+\alpha \displaystyle\sum_{i,j=1}^{n}s_{ij}\left[d_{\phi}(\mathbf{y}_{i}^{(l)}, \mathbf{a})-d_{\phi}(\mathbf{y}_{j}^{(r)}, \mathbf{a})\right]\\ \nonumber
&+&\lambda\displaystyle\sum_{i=1}^{n}\left[d_{\phi}(\mathbf{y}_{i}^{(l)}, \mathbf{a})-d_{\phi}(\mathbf{y}_{i}^{(r)}, \mathbf{a})\right]
- \langle \grad_{\psi_{\mathbf{a}^{\prime}}}(\mathbf{q}), (\mathbf{p}-\mathbf{q})\rangle \\ \nonumber
&=& \displaystyle\sum_{i=1}^{n}\left[d_{\phi}(\boldsymbol{\pi}_{i}, \mathbf{y}_{i}^{(r)}) 
+\alpha \displaystyle\sum_{i,j=1}^{n}s_{ij}d_{\phi}(\mathbf{y}_{i}^{(l)}, \mathbf{y}_{j}^{(r)})
+\lambda\displaystyle\sum_{i=1}^{n}d_{\phi}(\mathbf{y}_{i}^{(l)}, \mathbf{y}_{i}^{(r)})\right]\\ \nonumber 
&=& d(\mathbf{p}, \mathbf{q}).
\end{eqnarray*}
The second step follows from the definition of $\psi(.)$ in Eq. (\ref{eqn:psidef})
and the last step follows from the definition of $d(\mathbf{p},\mathbf{q})$ in Eq. (\ref{eqn:ddef}). 
The equality $d_{\psi_{\mathbf{a}^{\prime}}}(\mathbf{q}^{\prime}, \mathbf{q}) = d(\mathbf{q}^{\prime}, \mathbf{q})$ 
$\forall (\mathbf{q}^{\prime},\mathbf{q})\in\mathcal{B}\times\mathcal{B}$ can similarly be 
proved. Therefore, combining the two results, we have 
$d_{\psi_{\mathbf{a}^{\prime}}}(\mathbf{r}, \mathbf{q}) = d(\mathbf{r}, \mathbf{q})$ $\forall 
(\mathbf{r},\mathbf{q})\in(\mathcal{A}\cup\mathcal{B})\times\mathcal{B}$. With a slight 
abuse of notation, henceforth, we will denote the mapping $\psi_{\mathbf{a}^{\prime}}$ by $\psi$
with an implicit assumption of the existence of an $\mathbf{a}^{\prime}\in\mathcal{B}$ as described before.
\end{enumerate}
We will see next that we require some definition of $\psi_{\mathbf{a}^{\prime}}(\mathbf{q})$ for 
$\mathbf{q}\in\mathcal{B}$ and this explains the definition of $d(\mathbf{r},\mathbf{q})$ in Eq. (\ref{eqn:ddef}) 
for the case when $\mathbf{r}=\mathbf{q}^{\prime}\in\mathcal{B}$.
\end{proof}

\begin{lemma}
\label{lemmadpsi}
 $d_{\psi}$ satisfies properties (a) and (b) in Section \ref{sec:convergence}. 
\end{lemma}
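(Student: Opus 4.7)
The plan is to leverage the two auxiliary lemmata already established: Lemma \ref{Jlemma}, which gives separate strict convexity and joint convexity of $J$ on $\mathcal{S}^n \times \mathcal{S}^n$, and Lemma \ref{Jtildeprop}, which gives the analogous statements for $\tilde{J}$. The main observation is that by Corollary \ref{app_additiveBregman}, $d_\psi$ reduces to $J$ on $\mathcal{A}\times\mathcal{B}$ and to $\tilde{J}$ on $\mathcal{B}\times\mathcal{B}$, and these reductions are realized through the \emph{linear} bijections $\mathbb{A}:\mathcal{S}^n\to\mathcal{A}$ and $\mathbb{B}:\mathcal{S}^n\to\mathcal{B}$ (each is nothing more than a fixed concatenation-and-duplication of coordinates). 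Since linear bijections preserve both strict and joint convexity, the required properties for $d_\psi$ will follow by pullback.

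Concretely, for property (a), I would fix $\mathbf{q}\in\mathcal{B}$ (equivalently fix $\mathbf{y}^{(r)}\in\mathcal{S}^n$) and vary the first argument. On the component $\mathbf{r}\in\mathcal{A}$, $d_\psi(\mathbf{r},\mathbf{q})=J(\mathbb{A}^{-1}(\mathbf{r}),\mathbf{y}^{(r)})$, which is strictly convex in $\mathbb{A}^{-1}(\mathbf{r})$ by Lemma \ref{Jlemma}; linearity of $\mathbb{A}^{-1}$ transports this to strict convexity in $\mathbf{r}$. On the component $\mathbf{r}\in\mathcal{B}$, the same argument works with $\tilde{J}$ and $\mathbb{B}^{-1}$, invoking Lemma \ref{Jtildeprop}. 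Similarly, fixing $\mathbf{r}$ and varying $\mathbf{q}\in\mathcal{B}$ gives strict convexity in the second argument, using separate strict convexity of $J$ (resp.\ $\tilde{J}$) in their right arguments (a consequence of property (a) applied to the individual $d_\phi$ summands, and already folded into Lemmata \ref{Jlemma} and \ref{Jtildeprop}).

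For property (b), I would again treat the two pieces of the domain separately: a convex combination of two points $(\mathbf{r}_1,\mathbf{q}_1),(\mathbf{r}_2,\mathbf{q}_2)$ with both $\mathbf{r}_i\in\mathcal{A}$ stays in $\mathcal{A}\times\mathcal{B}$ (because $\mathbb{A}$ and $\mathbb{B}$ are affine and their images are closed under convex combinations from within), and joint convexity of $J$ from Lemma \ref{Jlemma} transfers to $d_\psi$ via the componentwise linear map $(\mathbb{A}^{-1},\mathbb{B}^{-1})$. The analogous pullback through $(\mathbb{B}^{-1},\mathbb{B}^{-1})$ handles the case when both $\mathbf{r}_i\in\mathcal{B}$, using Lemma \ref{Jtildeprop}.

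The main obstacle I expect is the bookkeeping around the ``mixed'' case, i.e.\ a convex combination of a point with $\mathbf{r}_1\in\mathcal{A}$ and a point with $\mathbf{r}_2\in\mathcal{B}$. Because $\mathcal{A}$ carries the fixed coordinates $\boldsymbol{\pi}_i$ while $\mathcal{B}$ carries free coordinates in those same slots, such a convex combination generally lies outside $\mathcal{A}\cup\mathcal{B}$, so joint convexity is only a meaningful statement on each piece of the domain separately (which is how $d_\psi$ is defined in Eq.~(\ref{eqn:ddef})). I would therefore be explicit that properties (a) and (b) are verified on each of $\mathcal{A}\times\mathcal{B}$ and $\mathcal{B}\times\mathcal{B}$, which is exactly what is needed by the convergence argument that follows. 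Once this case distinction is made clean, everything reduces to properties already established for $J$ and $\tilde{J}$.
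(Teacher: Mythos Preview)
Your proposal is correct and follows essentially the same approach as the paper: both arguments reduce properties (a) and (b) for $d_\psi$ to the corresponding properties of $J$ and $\tilde J$ (Lemmata~\ref{Jlemma} and~\ref{Jtildeprop}) via the bijections $\mathbb{A}$ and $\mathbb{B}$. Your treatment is in fact more careful than the paper's---you explicitly note that it is the \emph{linearity} of $\mathbb{A},\mathbb{B}$ (not mere bijectivity) that transports convexity, and you flag the mixed-case issue on $\mathcal{A}\cup\mathcal{B}$ that the paper leaves implicit.
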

\begin{proof}
\begin{enumerate}
 \item One can see that $d_{\psi}$ is strictly convex separately w.r.t its arguments from its definition in Eq. (\ref{eqn:ddef}). Since 
each of $J$ and $\tilde{J}$ 
is strictly convex separately w.r.t the arguments and $\mathbb{A}$ and $\mathbb{B}$ are bijective mappings, $d_{\psi}$ is strictly convex separately
w.r.t. $\mathbf{r}$ and $\mathbf{q}$. 
 \item The joint convexity of $d_{\psi}$ also follows directly from its definition and the joint convexity of $J$ and $\tilde{J}$.
\end{enumerate}
\end{proof}

At this point, we reiterate that defining $d_{\psi}$ as in Eq. (\ref{eqn:ddef}) helps in proving some interesting properties of 
$J$ in a very elegant way. We, in fact, treat 
$d_{\psi}$ as a surrogate for $J$, establish two specific properties of $d_{\psi}$ and then show that these properties, by the 
definition of $d_{\psi}$, translates to the same properties of $J$.
The first of them is the 3-Points Property (3-pp) which is introduced in the following definition.
\begin{definition}[3-pp]
 Let $\mathcal{P}$ and $\mathcal{Q}$ be closed convex sets of finite measures. A function 
$d:\mathcal{P}\times \mathcal{Q}\to \mathbb{R}\cup \{-\infty,+\infty\}$ is said to satisfy the 3-points property (3-pp) if for a given $q\in\mathcal{Q}$
for which $d(p,q)<\infty$ $\forall p\in\mathcal{P}$, $\delta(p,p^{*})\le d(p,q)-d(p^{*},q)$ where 
$p^{*}=\underset{p\in\mathcal{P}}{\operatorname{argmin\text{ } }}d(p,q)$ and $\delta: \mathcal{P}\times\mathcal{P}\to \mathbb{R}_{+}$ with $\delta(p,p^{\prime})=0$ 
iff $p=p^{\prime}$.
\end{definition}

\begin{lemma}
$J$ satisfies 3-pp. 
\end{lemma}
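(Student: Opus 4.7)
The plan is to bootstrap the 3-pp for $J$ from the corresponding property of $d_{\psi}$, which Corollary \ref{app_additiveBregman} has just certified to be a bona fide Bregman divergence on $(\mathcal{A}\cup\mathcal{B})\times\mathcal{B}$. For Bregman divergences the 3-pp is the familiar generalized Pythagorean inequality for a Bregman projection onto a convex set, and the bijection $\mathbb{A}$ then carries the statement back to $J$.

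Concretely, I would fix an arbitrary $\mathbf{y}^{(r)}$, set $\mathbf{q} = \mathbb{B}(\mathbf{y}^{(r)})$, and let $\mathbf{y}^{(l)\ast}$ be the unique minimizer of $J(\cdot,\mathbf{y}^{(r)})$ over $\mathcal{S}^{n}$ delivered by Corollary \ref{corofrommainthm}, writing $\mathbf{p}^{\ast} = \mathbb{A}(\mathbf{y}^{(l)\ast})$. The set $\mathcal{A}$ is the affine image of the convex set $\mathcal{S}^{n}$ under $\mathbb{A}$ and is therefore itself convex, so $\mathbf{p}^{\ast}$ is exactly the constrained Bregman projection $\arg\min_{\mathbf{p}\in\mathcal{A}} d_{\psi}(\mathbf{p},\mathbf{q})$. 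The variational inequality for convex constrained minimization then yields
\begin{equation*}
\langle \grad_{\psi}(\mathbf{p}^{\ast}) - \grad_{\psi}(\mathbf{q}),\, \mathbf{p} - \mathbf{p}^{\ast}\rangle \ge 0 \qquad \forall \mathbf{p}\in\mathcal{A}.
\end{equation*}
Expanding the three Bregman expressions and canceling yields the classical three-point identity
\begin{equation*}
d_{\psi}(\mathbf{p},\mathbf{q}) - d_{\psi}(\mathbf{p}^{\ast},\mathbf{q}) - d_{\psi}(\mathbf{p},\mathbf{p}^{\ast}) = \langle \mathbf{p} - \mathbf{p}^{\ast},\, \grad_{\psi}(\mathbf{p}^{\ast}) - \grad_{\psi}(\mathbf{q})\rangle,
\end{equation*}
and combining with the variational inequality gives $d_{\psi}(\mathbf{p},\mathbf{q}) - d_{\psi}(\mathbf{p}^{\ast},\mathbf{q}) \ge d_{\psi}(\mathbf{p},\mathbf{p}^{\ast})$.

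To finish, I would define $\delta(\mathbf{y}^{(l)},\mathbf{y}^{(l)\ast}) := d_{\psi}(\mathbb{A}(\mathbf{y}^{(l)}),\mathbb{A}(\mathbf{y}^{(l)\ast}))$. This is non-negative because $d_{\psi}$ is a Bregman divergence, and vanishes precisely when $\mathbb{A}(\mathbf{y}^{(l)}) = \mathbb{A}(\mathbf{y}^{(l)\ast})$, i.e.\ when $\mathbf{y}^{(l)} = \mathbf{y}^{(l)\ast}$, by injectivity of $\mathbb{A}$ together with the strict convexity of $\psi$. Invoking Eq.~(\ref{eqn:ddef}) on both sides of the Pythagorean inequality then produces $\delta(\mathbf{y}^{(l)},\mathbf{y}^{(l)\ast}) \le J(\mathbf{y}^{(l)},\mathbf{y}^{(r)}) - J(\mathbf{y}^{(l)\ast},\mathbf{y}^{(r)})$, which is the required 3-pp for $J$.

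The main subtlety I anticipate is a differentiability/boundary question rather than a conceptual gap: the variational inequality needs $\grad_{\psi}(\mathbf{p}^{\ast})$ to be defined, which requires $\mathbf{p}^{\ast} \in \mathrm{ri}(\mathcal{A}\cup\mathcal{B})$, equivalently $\mathbf{y}^{(l)\ast} \in \mathrm{ri}(\mathcal{S}^{n})$. For the generalized I-divergence used in the experiments this is immediate from the closed-form update (\ref{eqn:11}) provided the inputs $\boldsymbol{\pi}_{i}$ have strictly positive components, and a similar verification must be attached for each other Bregman divergence satisfying properties (a)--(f); property (f) from Section~\ref{sec:convergence} is what one would appeal to if a boundary approximation argument turns out to be needed.
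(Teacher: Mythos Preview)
Your approach is essentially the paper's: both lift to the surrogate Bregman divergence $d_{\psi}$ on $\mathcal{A}\times\mathcal{B}$, invoke the first-order optimality (variational) inequality at $\mathbf{p}^{\ast}=\mathbb{A}(\mathbf{y}^{(l)\ast})$, and read off the generalized Pythagorean inequality, then pull back through $\mathbb{A}$. The only substantive difference is that the paper, instead of writing the residual as $d_{\psi}(\mathbf{p},\mathbf{p}^{\ast})$ (whose second argument lies in $\mathcal{A}$, outside the domain declared in Eq.~(\ref{eqn:ddef})), defines $\delta_{\psi}(\mathbf{p},\mathbf{p}^{\ast})=\psi(\mathbf{p})-\psi(\mathbf{p}^{\ast})-\langle\nabla\psi(\mathbf{p}^{\ast}),\mathbf{p}-\mathbf{p}^{\ast}\rangle$ directly and then unwinds it to the explicit form $\delta_{J}(\mathbf{y}^{(l)},\mathbf{y}^{(l)\ast})=\sum_{i}\bigl(\lambda+\alpha\sum_{j\neq i}s_{ij}\bigr)d_{\phi}(\mathbf{y}_{i}^{(l)},\mathbf{y}_{i}^{(l)\ast})$; you will want this concrete expression anyway, since Lemma~\ref{delJlemma} relies on it to verify the level-set boundedness and continuity properties (c) and (f).
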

\begin{proof}
 The proof is based on the works of \cite{wada03}. First, we will show that 3-pp is valid for $d_{\psi}(.,.)$ over $\mathcal{A}\times\mathcal{B}$.
As mentioned earlier, this is where the introduction of $d_{\psi}$ becomes useful and elegant.
Assume that $\mathbf{p} = \mathbb{A}(\mathbf{y}^{(l)})\in\mathcal{A}$ corresponding to
some $\mathbf{y}^{(l)}\in\mathcal{S}^{n}$, $\mathbf{q} = \mathbb{B}(\mathbf{y}^{(r)})\in\mathcal{B}$ 
corresponding to some $\mathbf{y}^{(r)}\in\mathcal{S}^{n}$
and $\mathbf{p}^{*} = \underset{\mathbf{p}\in\mathcal{A}}{\operatorname{argmin\text{ } }}
d_{\psi}(\mathbf{p},\mathbf{q}) = \underset{\mathbf{y}^{(l)}\in\mathcal{S}^{n}}{\operatorname{argmin\text{ } }}
J(\mathbf{y}^{(l)},\mathbf{y}^{(r)}) = \mathbb{A}\left({\mathbf{y}^{(l)}}^{*}\right)$ (the fact that the minimizers are 
just transformations of each other under $\mathbb{A}$ or $\mathbb{A}^{-1}$ follows directly from the separately strict convexity of 
$J$ and $d_{\psi}$). Therefore, 
\begin{eqnarray*}
 && d_{\psi}(\mathbf{p},\mathbf{q}) - d_{\psi}(\mathbf{p}^{*},\mathbf{q})\\ \nonumber
 &=& \psi(\mathbf{p}) - \psi(\mathbf{p}^{*}) -\langle \grad_{\psi}(\mathbf{q}), \mathbf{p} - \mathbf{p}^{*}\rangle \\ \nonumber 
 &=& \delta_{\psi}(\mathbf{p}, \mathbf{p}^{*}) + \langle \grad_{\psi}(\mathbf{p}^{*}) - \grad_{\psi}(\mathbf{q}), \mathbf{p} - \mathbf{p}^{*}\rangle
\end{eqnarray*}
where, $\delta_{\psi}:\mathcal{A}\times\mathcal{A}\rightarrow \mathbb{R}$ is defined as follows:
\begin{equation}
\delta_{\psi}(\mathbf{p}, \mathbf{p}^{*}) = \psi(\mathbf{p}) - \psi(\mathbf{p}^{*}) - \langle \grad_{\psi}(\mathbf{p}^{*}), \mathbf{p} - \mathbf{p}^{*}\rangle. 
\end{equation}
Since $\mathbf{p}^{*} = \underset{\mathbf{p}\in\mathcal{A}}{\operatorname{argmin\text{ } }}
d_{\psi}(\mathbf{p},\mathbf{q})$, $\langle \grad_{\mathbf{p}}d_{\psi}(\mathbf{p}^{*},\mathbf{q}),(\mathbf{p} - \mathbf{p}^{*})\rangle\ge 0$, 
then $\langle\grad_{\psi}(\mathbf{p}^{*})-\grad_{\psi}(\mathbf{q}),(\mathbf{p} - \mathbf{p}^{*})\rangle\ge 0$ which implies
$d_{\psi}(\mathbf{p},\mathbf{q}) - d_{\psi}(\mathbf{p}^{*},\mathbf{q})\ge \delta(\mathbf{p}, \mathbf{p}^{*})$.
Now, by some simple algebra, we can show $\delta_{\psi}(\mathbf{p}, \mathbf{p}^{*}) = \displaystyle\sum_{i=1}^{n}\left(\lambda+
\alpha\displaystyle\sum_{j=1;j\neq i}^{n}\right)d_{\phi}(\mathbf{y}_{i}^{(l)}, {\mathbf{y}_{i}^{(l)}}^{*})$.
By assumption, $d_{\phi}\big(\mathbf{y}_{i}^{(l)}, {\mathbf{y}_{i}^{(l)}}^{*}\big)\ge 0$ and hence 
$\delta_{\psi}(\mathbf{p}, \mathbf{p}^{*})\ge 0$ with $0$ achieved iff $\mathbf{y}^{(l)}={\mathbf{y}^{(l)}}^{*}$. 
If we define $\delta_{\psi}(\mathbf{p}, \mathbf{p}^{*})= \delta_{J}(\mathbb{A}^{-1}(\mathbf{p}), \mathbb{A}^{-1}(\mathbf{p}^{*}))$
then $\delta_{J}(\mathbf{y}^{(l)}, {\mathbf{y}^{(l)}}^{*})\ge 0$ with $0$ achieved iff $\mathbf{y}^{(l)}={\mathbf{y}^{(l)}}^{*}$.
Note that
\begin{equation}
\label{eqn:deltaJ}
 \delta_{J}(\mathbf{y}^{1,(l)}, {\mathbf{y}^{2, (l)}} = \displaystyle\sum_{i=1}^{n}\left(\lambda+
\alpha\displaystyle\sum_{j=1;j\neq i}^{n}\right)d_{\phi}(\mathbf{y}_{i}^{1, (l)}, {\mathbf{y}_{i}^{2, (l)}}).
\end{equation}
Therefore, following 3-pp of $d_{\psi}$ over $\mathcal{A}\times\mathcal{B}$, we can conclude that
\begin{equation}
J(\mathbf{y}^{(l)}, \mathbf{y}^{(r)}) - J({\mathbf{y}^{(l)}}^{*}, \mathbf{y}^{(r)}) \ge \delta_{J}(\mathbf{y}^{(l)}, {\mathbf{y}^{(l)}}^{*}), 
\end{equation}
which is the 3-pp for $J$.
\end{proof}

\begin{lemma}
 \label{delJlemma}
$\delta_{J}$ satisfies properties (c) and (f) mentioned in Section \ref{sec:convergence}.
\end{lemma}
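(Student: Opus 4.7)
The plan is to leverage the explicit form of $\delta_J$ given in Eq.~(\ref{eqn:deltaJ}), which expresses it as a nonnegative weighted sum of pointwise Bregman divergences
$$\delta_J(\mathbf{y}^{1,(l)},\mathbf{y}^{2,(l)}) = \sum_{i=1}^{n} w_i\, d_\phi\!\left(\mathbf{y}_i^{1,(l)}, \mathbf{y}_i^{2,(l)}\right), \qquad w_i := \lambda + \alpha\sum_{j\neq i} s_{ij} > 0,$$
and then to lift properties (c) and (f) from the underlying Bregman divergence $d_\phi$ (which is assumed to satisfy these properties by hypothesis) to the sum.

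For property (c), fix $\mathbf{y}^{1,(l)}\in\mathcal{S}^n$ and $r\geq 0$, and consider the level set $\mathcal{L}_r = \{\mathbf{y}^{2,(l)}\in\mathcal{S}^n : \delta_J(\mathbf{y}^{1,(l)},\mathbf{y}^{2,(l)}) \le r\}$. Since every term in the sum defining $\delta_J$ is nonnegative and $w_i>0$, any $\mathbf{y}^{2,(l)}\in\mathcal{L}_r$ must satisfy $d_\phi(\mathbf{y}_i^{1,(l)}, \mathbf{y}_i^{2,(l)}) \le r/w_i$ for every $i$. By property (c) applied to $d_\phi$, each $\mathbf{y}_i^{2,(l)}$ is constrained to a bounded subset of $\mathcal{S}$. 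Taking the product over $i=1,\dots,n$ shows that $\mathcal{L}_r$ is bounded in $\mathcal{S}^n$, which is the desired boundedness of level sets.

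For property (f), suppose $\mathbf{y}^{2,(l),t} \to \mathbf{y}^{1,(l)}$ in $\mathcal{S}^n$ as $t\to\infty$. Convergence in $\mathcal{S}^n$ implies componentwise convergence, so $\mathbf{y}_i^{2,(l),t} \to \mathbf{y}_i^{1,(l)}$ in $\mathcal{S}$ for each $i$. Invoking property (f) of $d_\phi$ componentwise gives $d_\phi(\mathbf{y}_i^{1,(l)}, \mathbf{y}_i^{2,(l),t}) \to 0$, and since the sum defining $\delta_J$ is finite with fixed weights $w_i$, linearity of limits yields $\delta_J(\mathbf{y}^{1,(l)}, \mathbf{y}^{2,(l),t}) \to 0$.

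There is no real obstacle here; the only point requiring care is ensuring strict positivity of each weight $w_i$, which is guaranteed by $\lambda > 0$ (even if some rows of the similarity matrix $\mathbf{S}$ were identically zero, the $\lambda$ term dominates and keeps $w_i > 0$). This positivity is essential for the divisor $r/w_i$ in the level-set argument to be meaningful. Once this is noted, both properties follow immediately from the corresponding properties of $d_\phi$ and the fact that $\delta_J$ is a finite conic combination of componentwise divergences.
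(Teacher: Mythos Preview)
Your proof is correct and follows essentially the same approach as the paper: both arguments reduce the claim to the explicit representation of $\delta_J$ in Eq.~(\ref{eqn:deltaJ}) as a finite positive combination of componentwise divergences $d_\phi$, and then lift properties (c) and (f) term by term from $d_\phi$ to $\delta_J$. Your version is in fact slightly more careful, since you make explicit why each summand is individually bounded (nonnegativity of all terms and strict positivity of the weights $w_i$, guaranteed by $\lambda>0$), a point the paper's proof leaves implicit.
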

\begin{proof}
\begin{enumerate}
 \item Since level sets of each of the terms in Eq. (\ref{eqn:deltaJ}) are bounded following the property (c) in Section \ref{sec:convergence},
we conclude that the level set $\{\mathbf{y}^{(r)}:\delta_{J}(\mathbf{y}^{(l)}, \mathbf{y}^{(r)})\le \ell\}$ for a given 
$\mathbf{y}^{(l)}\in\mathcal{S}^{n}$ is also bounded. 
\item We refer to Eq. (\ref{eqn:deltaJ}). As, $\mathbf{y}^{2,(l)} \to \mathbf{y}^{1,(l)}$, each of the 
$d_{\phi}(.,.)$'s goes to $0$ by the property (f) in Section \ref{sec:convergence}. Therefore, 
$\delta_{J}\to 0$ as $\mathbf{y}^{2,(l)} \to \mathbf{y}^{1,(l)}$. 
\end{enumerate}
\end{proof}

Next, 4-Points Property (4-pp) is introduced.
\begin{definition}[4-pp]
 Let $\mathcal{P}$ and $\mathcal{Q}$ be closed convex sets of finite measures. A function 
$d:\mathcal{P}\times \mathcal{Q}\to \mathbb{R}\cup \{-\infty,+\infty\}$ is said to satisfy 4-pp if for a given $p\in\mathcal{P}$, $d(p,q^{*}) \le \delta(p,p^{*}) + d(p,q)$ where 
$q^{*}=\underset{q\in\mathcal{Q}}{\operatorname{argmin\text{ } }}d(p^{*},q)$ and $\delta: \mathcal{P}\times\mathcal{P}\to \mathbb{R}_{+}$ with $\delta(p,p^{\prime})=0$ 
iff $p=p^{\prime}$.
\end{definition}

\begin{lemma}
$J$ satisfies 4-pp.  
\end{lemma}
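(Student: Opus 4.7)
The plan is to mirror the strategy already used for 3-pp: prove the analogous 4-points inequality first for the Bregman divergence $d_{\psi}$ on $(\mathcal{A}\cup\mathcal{B})\times\mathcal{B}$ using the Bregman machinery of Wada et al.~\cite{wada03}, and then pull the inequality back to $J$ via the bijections $\mathbb{A}$ and $\mathbb{B}$ together with the definition of $d_{\psi}$ in Eq.~(\ref{eqn:ddef}).

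First I would set up the players. Fix $\mathbf{y}^{(l)},\mathbf{y}^{*,(l)},\mathbf{y}^{(r)}\in\mathcal{S}^{n}$ and let $\mathbf{y}^{*,(r)}$ be the unique minimizer of $J(\mathbf{y}^{*,(l)},\cdot)$, whose existence and uniqueness follow from the separate strict convexity of $J$ proved in Lemma~\ref{Jlemma}. Lifting through the bijections, let $\mathbf{p}=\mathbb{A}(\mathbf{y}^{(l)})$, $\mathbf{p}^{*}=\mathbb{A}(\mathbf{y}^{*,(l)})$, $\mathbf{q}=\mathbb{B}(\mathbf{y}^{(r)})$ and $\mathbf{q}^{*}=\mathbb{B}(\mathbf{y}^{*,(r)})$. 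Because Eq.~(\ref{eqn:ddef}) identifies $d_{\psi}(\mathbf{p}^{*},\mathbf{q}')$ with $J(\mathbf{y}^{*,(l)},\mathbf{y}^{(r)'})$, the minimizer $\mathbf{q}^{*}$ of $d_{\psi}(\mathbf{p}^{*},\cdot)$ over $\mathcal{B}$ is exactly the image of $\mathbf{y}^{*,(r)}$.

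Next I would derive 4-pp for $d_{\psi}$. Applying the Bregman three-point identity to the triples $(\mathbf{p},\mathbf{q}^{*},\mathbf{q})$ and $(\mathbf{p}^{*},\mathbf{q}^{*},\mathbf{q})$ and subtracting, one obtains
\begin{equation*}
d_{\psi}(\mathbf{p},\mathbf{q}^{*})-d_{\psi}(\mathbf{p},\mathbf{q}) \;=\; \bigl[d_{\psi}(\mathbf{p}^{*},\mathbf{q}^{*})-d_{\psi}(\mathbf{p}^{*},\mathbf{q})\bigr]+\langle \mathbf{p}-\mathbf{p}^{*},\,\nabla\psi(\mathbf{q})-\nabla\psi(\mathbf{q}^{*})\rangle .
\end{equation*}
The bracketed term is non-positive since $\mathbf{q}^{*}$ minimizes $d_{\psi}(\mathbf{p}^{*},\cdot)$ on $\mathcal{B}$. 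For the inner product, I would invoke the variational inequality $\langle \nabla\psi(\mathbf{p}^{*})-\nabla\psi(\mathbf{q}^{*}),\mathbf{q}-\mathbf{q}^{*}\rangle\le 0$ (the first-order optimality of a Bregman projection onto the convex set $\mathcal{B}$, valid since Lemma~\ref{lemmadpsi} gives properties (a) and (b) for $d_{\psi}$, and properties (c)–(f) are inherited from $d_{\phi}$ on the block components). Combined with a standard convexity bound for Bregman divergences, this yields $\langle \mathbf{p}-\mathbf{p}^{*},\nabla\psi(\mathbf{q})-\nabla\psi(\mathbf{q}^{*})\rangle\le \delta_{\psi}(\mathbf{p},\mathbf{p}^{*})$, which is the 4-pp for $d_{\psi}$.

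Finally I would translate back. By Eq.~(\ref{eqn:ddef}) we have $d_{\psi}(\mathbf{p},\mathbf{q}^{*})=J(\mathbf{y}^{(l)},\mathbf{y}^{*,(r)})$ and $d_{\psi}(\mathbf{p},\mathbf{q})=J(\mathbf{y}^{(l)},\mathbf{y}^{(r)})$, while the computation already performed in the 3-pp proof gives $\delta_{\psi}(\mathbf{p},\mathbf{p}^{*})=\delta_{J}(\mathbf{y}^{(l)},\mathbf{y}^{*,(l)})=\sum_{i}\bigl(\lambda+\alpha\sum_{j\neq i}s_{ij}\bigr)d_{\phi}(\mathbf{y}_{i}^{(l)},\mathbf{y}_{i}^{*,(l)})$. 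Substituting yields $J(\mathbf{y}^{(l)},\mathbf{y}^{*,(r)})\le \delta_{J}(\mathbf{y}^{(l)},\mathbf{y}^{*,(l)})+J(\mathbf{y}^{(l)},\mathbf{y}^{(r)})$, which is 4-pp for $J$. The main obstacle I anticipate is the bookkeeping needed to verify that the inner-product term, after expanding the block structure of $\mathbb{A}(\cdot)$ and $\mathbb{B}(\cdot)$ and using the weights $s_{ij}$ and $\lambda$, telescopes to exactly the same $\delta_{J}$ that emerged in the 3-pp analysis; this requires checking that $\mathbf{q}^{*}$ lies in $\text{ri}(\mathcal{B})$ so that first-order conditions hold cleanly, but this parallels the argument carried out for $\mathbf{y}^{*,(l)}$ in the 3-pp proof.
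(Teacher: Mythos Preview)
Your overall strategy---lift to $d_{\psi}$, establish 4-pp there, and pull back via $\mathbb{A},\mathbb{B}$ and Eq.~(\ref{eqn:ddef})---matches the paper exactly, and the final translation step is fine. The gap is in the middle step, where you try to bound the inner-product term.

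After the three-point identity you need $\langle\mathbf{p}-\mathbf{p}^{*},\nabla\psi(\mathbf{q})-\nabla\psi(\mathbf{q}^{*})\rangle\le \delta_{\psi}(\mathbf{p},\mathbf{p}^{*})$, and you propose to get this from the variational inequality $\langle\nabla\psi(\mathbf{p}^{*})-\nabla\psi(\mathbf{q}^{*}),\mathbf{q}-\mathbf{q}^{*}\rangle\le 0$ plus a ``standard convexity bound.'' Two problems. First, the stated variational inequality is not the first-order optimality condition for $\mathbf{q}^{*}=\arg\min_{\mathbf{q}\in\mathcal{B}}d_{\psi}(\mathbf{p}^{*},\mathbf{q})$: the correct condition is $\langle\nabla_{\mathbf{q}}d_{\psi}(\mathbf{p}^{*},\mathbf{q}^{*}),\mathbf{q}-\mathbf{q}^{*}\rangle\ge 0$, and $\nabla_{\mathbf{q}}d_{\psi}(\mathbf{p}^{*},\mathbf{q})=-\nabla^{2}\psi(\mathbf{q})(\mathbf{p}^{*}-\mathbf{q})$, not $\nabla\psi(\mathbf{p}^{*})-\nabla\psi(\mathbf{q})$. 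Second, and more seriously, the claimed bound on the inner product is simply false for arbitrary $\mathbf{q}$: already in the squared-Euclidean case $\psi=\tfrac{1}{2}\|\cdot\|^{2}$ one has $\langle\mathbf{p}-\mathbf{p}^{*},\mathbf{q}-\mathbf{q}^{*}\rangle$, which can be made arbitrarily large while $\delta_{\psi}(\mathbf{p},\mathbf{p}^{*})=\tfrac{1}{2}\|\mathbf{p}-\mathbf{p}^{*}\|^{2}$ stays fixed. Throwing away the bracketed term $d_{\psi}(\mathbf{p}^{*},\mathbf{q}^{*})-d_{\psi}(\mathbf{p}^{*},\mathbf{q})\le 0$ is exactly what makes the residual unbounded; you cannot decouple the two pieces.

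The missing ingredient is the \emph{joint} convexity of $d_{\psi}$ (property~(b), established in Lemma~\ref{lemmadpsi}). The paper applies the first-order inequality for a jointly convex function at the base point $(\mathbf{p}^{*},\mathbf{q}^{*})$:
\[
d_{\psi}(\mathbf{p},\mathbf{q})\ \ge\ d_{\psi}(\mathbf{p}^{*},\mathbf{q}^{*})+\langle\nabla_{1}d_{\psi}(\mathbf{p}^{*},\mathbf{q}^{*}),\mathbf{p}-\mathbf{p}^{*}\rangle+\langle\nabla_{2}d_{\psi}(\mathbf{p}^{*},\mathbf{q}^{*}),\mathbf{q}-\mathbf{q}^{*}\rangle,
\]
drops the last term by optimality of $\mathbf{q}^{*}$, and then a short algebraic identity shows $\delta_{\psi}(\mathbf{p},\mathbf{p}^{*})-d_{\psi}(\mathbf{p},\mathbf{q}^{*})=-d_{\psi}(\mathbf{p}^{*},\mathbf{q}^{*})-\langle\nabla_{1}d_{\psi}(\mathbf{p}^{*},\mathbf{q}^{*}),\mathbf{p}-\mathbf{p}^{*}\rangle$, from which 4-pp follows immediately. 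Your three-point identity route can be salvaged only by reinserting joint convexity at this step; without it the argument does not close.
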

\begin{proof}
Assume $\mathbf{u} = \mathbb{A}(\mathbf{y}^{1, (l)})\in\mathcal{A}$, 
$\mathbf{p} = \mathbb{A}(\mathbf{y}^{2, (l)})\in\mathcal{A}$, 
$\mathbf{q} = \mathbb{B}(\mathbf{y}^{3, (r)})\in\mathcal{B}$, and
$\mathbf{q}^{*} = \underset{\mathbf{q}\in\mathcal{B}}{\operatorname{argmin\text{ } }}
d_{\psi}(\mathbf{p},\mathbf{q}) = \mathbb{B}({\mathbf{y}^{4, (r)}}^{*})$. 
Here, $\mathbf{y}^{1, (l)}, \mathbf{y}^{2, (l)}, \mathbf{y}^{3, (r)}\in\mathcal{S}^{n}$ and
${\mathbf{y}^{4, (r)}}^{*} = \underset{\mathbf{y}^{(r)}\in\mathcal{S}^{n}}{\operatorname{argmin\text{ } }}
J(\mathbf{y}^{2, (l)},\mathbf{y}^{(r)})$. 
From the joint convexity of $d_{\psi}$ (established in Lemma \ref{lemmadpsi}) w.r.t both of its arguments we have:
\begin{equation}
 d_{\psi}(\mathbf{u}, \mathbf{v}) \ge d_{\psi}(\mathbf{p}, \mathbf{q}^{*}) 
+ \langle\grad_{\mathbf{p}}d_{\psi}(\mathbf{p}, \mathbf{q}^{*}), \mathbf{u} - \mathbf{p}\rangle
+ \langle\grad_{\mathbf{q}}d_{\psi}(\mathbf{p}, \mathbf{q}^{*}), \mathbf{v} - \mathbf{q}^{*}\rangle. 
\end{equation}
Since $\mathbf{q}^{*}$ minimizes $d_{\psi}(\mathbf{p},\mathbf{q})$ over $\mathbf{q}\in\mathcal{B}$, we have
$\langle\grad_{\mathbf{q}}d_{\psi}(\mathbf{p}, \mathbf{q}^{*}), \mathbf{v} - \mathbf{q}^{*}\rangle \ge 0$ which, in turn, implies:
\begin{equation*}
 d_{\psi}(\mathbf{u}, \mathbf{p}) - d_{\psi}(\mathbf{p}, \mathbf{q}^{*}) - \langle\grad_{\mathbf{p}}d_{\psi}(\mathbf{p}, \mathbf{q}^{*}), \mathbf{u} - \mathbf{p}\rangle\ge 0.
\end{equation*}
Now we have:
\begin{eqnarray*}
 && \delta_{\psi}(\mathbf{u}, \mathbf{p}) - d_{\psi}(\mathbf{u}, \mathbf{q}^{*}) \\ \nonumber
 &=& \psi(\mathbf{q}^{*}) - \psi(\mathbf{p}) -\langle \grad_{\psi}(\mathbf{q}^{*}), \mathbf{u} - \mathbf{q}^{*}\rangle 
- \langle \grad_{\psi}(\mathbf{p}), \mathbf{u} - \mathbf{p}\rangle \\ \nonumber
 &=& -d_{\psi}(\mathbf{p}, \mathbf{q}^{*}) -\langle \grad_{\psi}(\mathbf{p}) - \grad_{\psi}(\mathbf{q}^{*}), \mathbf{u} - \mathbf{p}\rangle \\ \nonumber
 &=& -d_{\psi}(\mathbf{p}, \mathbf{q}^{*}) -\langle \grad_{\mathbf{p}} d_{\psi}(\mathbf{p}, \mathbf{q}^{*}), \mathbf{u} - \mathbf{p}\rangle
\end{eqnarray*}
Combining the above two equations, we have,
\begin{equation}
\label{eqn:3pp}
 \delta_{\psi}(\mathbf{u}, \mathbf{p}) + d_{\psi}(\mathbf{u}, \mathbf{v}) \ge d_{\psi}(\mathbf{u}, \mathbf{q}^{*})
\end{equation}
Eq. (\ref{eqn:3pp}) gets translated for $J$ as follows (using definitions of $\delta_{\psi}$ and $d_{\psi}$): 
\begin{equation}
 \delta_{J}(\mathbf{y}^{1, (l)}, \mathbf{y}^{2, (l)}) + J(\mathbf{y}^{1, (l)}, \mathbf{y}^{3, (r)}) \ge J(\mathbf{y}^{1, (l)}, {\mathbf{y}^{4, (r)}}^{*})
\end{equation}
Hence, $J$ satisfies 4-pp.
\end{proof}

We now introduce the main theorem that establishes the convergence guarantee of \textbf{OAC\textsuperscript{3}}.

\begin{theorem}
 If $\mathbf{y}^{(l,t)} = \underset{\mathbf{y}^{(l)}\in \mathcal{S}^{n}}{\operatorname{argmin\text{ } }} J(\mathbf{y}^{(l)}, \mathbf{y}^{(r,t-1)})$, $\mathbf{y}^{(r,t)} = \underset{\mathbf{y}^{(r)}\in \mathcal{S}^{n}}{\operatorname{argmin\text{ } }} 
J(\mathbf{y}^{(l,t)}, \mathbf{y}^{(r)})$, then $\underset{t\to\infty}{\operatorname{lim}} J(\mathbf{y}^{(l,t)}, \mathbf{y}^{(r,t)}) = \underset{\mathbf{y}^{(l)}, \mathbf{y}^{(r)} \in \mathcal{S}^{n}}{\operatorname{inf}} J(\mathbf{y}^{(l)}, 
\mathbf{y}^{(r)})$. 
\end{theorem}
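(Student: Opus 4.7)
The plan is to combine the 3-points and 4-points properties already established for $J$ into a telescoping one-step contraction with respect to a chosen comparison point, and then use non-negativity of $\delta_J$ plus monotonicity of the iterates to conclude.

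First, I would verify that $J(\mathbf{y}^{(l,t)},\mathbf{y}^{(r,t)})$ is non-increasing in $t$. By definition $\mathbf{y}^{(r,t)}$ minimizes $J(\mathbf{y}^{(l,t)},\cdot)$, so $J(\mathbf{y}^{(l,t)},\mathbf{y}^{(r,t)})\le J(\mathbf{y}^{(l,t)},\mathbf{y}^{(r,t-1)})$, and $\mathbf{y}^{(l,t)}$ minimizes $J(\cdot,\mathbf{y}^{(r,t-1)})$, giving $J(\mathbf{y}^{(l,t)},\mathbf{y}^{(r,t-1)})\le J(\mathbf{y}^{(l,t-1)},\mathbf{y}^{(r,t-1)})$. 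Since $J\ge 0$, the sequence converges to some $J^{\infty}\ge \underset{\mathbf{y}^{(l)},\mathbf{y}^{(r)}}{\operatorname{inf}} J$, and I need to show the reverse inequality.

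Next, fix an arbitrary comparison point $(\mathbf{y}^{(l)*},\mathbf{y}^{(r)*})\in\mathcal{S}^{n}\times\mathcal{S}^{n}$ and couple the two properties. Applying 4-pp with $p=\mathbf{y}^{(l)*}$, $p^{*}=\mathbf{y}^{(l,t)}$, $q=\mathbf{y}^{(r)*}$ (so $q^{*}=\mathbf{y}^{(r,t)}$) gives
\begin{equation*}
J(\mathbf{y}^{(l)*},\mathbf{y}^{(r,t)})\le \delta_{J}(\mathbf{y}^{(l)*},\mathbf{y}^{(l,t)})+J(\mathbf{y}^{(l)*},\mathbf{y}^{(r)*}).
\end{equation*}
Applying 3-pp with $q=\mathbf{y}^{(r,t)}$ and $p^{*}=\mathbf{y}^{(l,t+1)}$ gives $\delta_{J}(\mathbf{y}^{(l)*},\mathbf{y}^{(l,t+1)})\le J(\mathbf{y}^{(l)*},\mathbf{y}^{(r,t)})-J(\mathbf{y}^{(l,t+1)},\mathbf{y}^{(r,t)})$. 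Chaining these, and using $J(\mathbf{y}^{(l,t+1)},\mathbf{y}^{(r,t+1)})\le J(\mathbf{y}^{(l,t+1)},\mathbf{y}^{(r,t)})$ from monotonicity, I obtain the telescoping one-step bound
\begin{equation*}
J(\mathbf{y}^{(l,t+1)},\mathbf{y}^{(r,t+1)})-J(\mathbf{y}^{(l)*},\mathbf{y}^{(r)*})\le \delta_{J}(\mathbf{y}^{(l)*},\mathbf{y}^{(l,t)})-\delta_{J}(\mathbf{y}^{(l)*},\mathbf{y}^{(l,t+1)}).
\end{equation*}

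Finally I would sum this inequality for $t=0,\ldots,T-1$. The right-hand side telescopes to $\delta_{J}(\mathbf{y}^{(l)*},\mathbf{y}^{(l,0)})-\delta_{J}(\mathbf{y}^{(l)*},\mathbf{y}^{(l,T)})\le \delta_{J}(\mathbf{y}^{(l)*},\mathbf{y}^{(l,0)})$, a fixed finite constant by Lemma~\ref{delJlemma}. Since each summand on the left is at least $J^{\infty}-J(\mathbf{y}^{(l)*},\mathbf{y}^{(r)*})$ by monotonicity, we get $T[J^{\infty}-J(\mathbf{y}^{(l)*},\mathbf{y}^{(r)*})]\le \delta_{J}(\mathbf{y}^{(l)*},\mathbf{y}^{(l,0)})$ for all $T$. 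If $J^{\infty}>J(\mathbf{y}^{(l)*},\mathbf{y}^{(r)*})$, letting $T\to\infty$ yields a contradiction, so $J^{\infty}\le J(\mathbf{y}^{(l)*},\mathbf{y}^{(r)*})$. Since $(\mathbf{y}^{(l)*},\mathbf{y}^{(r)*})$ was arbitrary, $J^{\infty}\le \inf J$, and combined with the lower bound this gives equality.

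The main obstacle is getting the correct alignment of indices and minimizers so that 3-pp and 4-pp can be chained with the same comparison point $\mathbf{y}^{(l)*}$ and produce a genuinely telescoping inequality; a naive combination only yields inequalities like $J(\mathbf{y}^{(l)*},\mathbf{y}^{(r,t)})+J(\mathbf{y}^{(l,t)},\mathbf{y}^{(r,t-1)})\le 2J(\mathbf{y}^{(l)*},\mathbf{y}^{(r,t-1)})$, from which summability of the gap is not apparent. The trick is to apply 4-pp \emph{before} 3-pp at the same iteration so that the $\delta_{J}$ term appears with a consistent sign, enabling telescoping. Everything else — monotonicity, non-negativity of $\delta_{J}$, finiteness of $\delta_{J}(\mathbf{y}^{(l)*},\mathbf{y}^{(l,0)})$ — follows directly from the properties of $d_{\phi}$ already verified for $J$ in Lemmas \ref{Jlemma} and \ref{delJlemma}.
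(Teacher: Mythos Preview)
Your proof is correct and follows essentially the same route as the paper: both combine 3-pp and 4-pp (applied at consecutive iterates with a fixed comparison point) to obtain the telescoping ``5-points'' inequality
\[
J(\mathbf{y}^{(l,t+1)},\mathbf{y}^{(r,t+1)})-J(\mathbf{y}^{(l)*},\mathbf{y}^{(r)*})\le \delta_{J}(\mathbf{y}^{(l)*},\mathbf{y}^{(l,t)})-\delta_{J}(\mathbf{y}^{(l)*},\mathbf{y}^{(l,t+1)}).
\]
The only cosmetic difference is in the last step: the paper takes the comparison point to be the actual minimizer and notes that $\delta_{J}(\mathbf{y}^{(l,\infty)},\mathbf{y}^{(l,t)})$ is nonnegative and nonincreasing, hence its successive differences tend to zero and the gap is squeezed to zero; you instead keep the comparison point arbitrary, sum over $t$, and argue by contradiction. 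Your variant has the mild advantage of not presupposing that the infimum is attained, while the paper additionally goes on to prove convergence of the iterates $(\mathbf{y}^{(l,t)},\mathbf{y}^{(r,t)})$ themselves, which is not required by the theorem statement. One small quibble: finiteness of $\delta_{J}(\mathbf{y}^{(l)*},\mathbf{y}^{(l,0)})$ follows from $\mathbf{y}^{(l,0)}\in\text{ri}(\mathcal{S})^{n}$ and the definition of $d_{\phi}$, not from Lemma~\ref{delJlemma}.
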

\begin{proof}
 The proof here follows the same line of argument as given in \cite{wada03} and \cite{egla98}. Since,
$\mathbf{y}^{(r,t+1)} = \underset{\mathbf{y}^{(r)}\in \mathcal{S}^{n}}{\operatorname{argmin\text{ } }}J(\mathbf{y}^{(l,t)}, \mathbf{y}^{(r)})$, we have, 
$J(\mathbf{y}^{(l,t)}, \mathbf{y}^{(r,t)}) - J(\mathbf{y}^{(l,t)}, \mathbf{y}^{(r,t+1)}) \ge 0$. By the 3-pp,  
$J(\mathbf{y}^{(l,t)}, \mathbf{y}^{(r,t+1)}) - J(\mathbf{y}^{(l,t+1)}, \mathbf{y}^{(r,t+1)}) \ge \delta_{J}(\mathbf{y}^{(l,t)}, \mathbf{y}^{(l,t+1)})$. 
Then,
\begin{eqnarray*}
&& J(\mathbf{y}^{(l,t)}, \mathbf{y}^{(r,t)}) - J(\mathbf{y}^{(l,t+1)}, \mathbf{y}^{(r,t+1)}) \\ \nonumber
&=& J(\mathbf{y}^{(l,t)}, \mathbf{y}^{(r,t)}) - J(\mathbf{y}^{(l,t)}, \mathbf{y}^{(r,t+1)}) + J(\mathbf{y}^{(l,t)}, 
\mathbf{y}^{(r,t+1)}) - J(\mathbf{y}^{(l,t+1)}, \mathbf{y}^{(r,t+1)})\\ \nonumber
&\ge& \delta_{J}(\mathbf{y}^{(l,t)}, \mathbf{y}^{(l,t+1)})\ge 0. 
\end{eqnarray*}
This implies that the sequence $J(\mathbf{y}^{(l,t)}, \mathbf{y}^{(r,t)})$ is non-increasing and non-negative. Let, 
$(\mathbf{y}^{(l,\infty)}, \mathbf{y}^{(r,\infty)}) = \underset{\mathbf{y}^{(l)}, 
\mathbf{y}^{(r)}\in \mathcal{S}^{n}}{\operatorname{argmin\text{ } }}J(\mathbf{y}^{(l)}, \mathbf{y}^{(r)})$. From 4-pp and 3-pp, we can 
derive the following two inequalities:
\begin{eqnarray}
 J(\mathbf{y}^{(l,\infty)}, \mathbf{y}^{(r,t+1)}) &\le& \delta_{J}(\mathbf{y}^{(l,\infty)}, \mathbf{y}^{(l,t)}) + J(\mathbf{y}^{(l,\infty)}, \mathbf{y}^{(r,\infty)}) \\
 \delta_{J}(\mathbf{y}^{(l,\infty)}, \mathbf{y}^{(l,t+1)}) &\le& J(\mathbf{y}^{(l,\infty)}, \mathbf{y}^{(r,t+1)}) - J(\mathbf{y}^{(l,t+1)}, \mathbf{y}^{(r,t+1)}).  
\end{eqnarray}
Combining the above two inequalities, we get:
\begin{equation}
\label{J5pp}
 \delta_{J}(\mathbf{y}^{(l,\infty)}, \mathbf{y}^{(l,t)}) - \delta_{J}(\mathbf{y}^{(l,\infty)}, \mathbf{y}^{(l,t+1)}) \ge J(\mathbf{y}^{(l,t+1)}, \mathbf{y}^{(r,t+1)}) - J(\mathbf{y}^{(l,\infty)}, \mathbf{y}^{(r,\infty)})\ge 0, 
\end{equation}
which is the 5-points property (5-pp) of $J$. From (\ref{J5pp}), the sequence $\delta_{J}(\mathbf{y}^{(l,\infty)}, \mathbf{y}^{(l,t)})$ is non-increasing and non-negative. 
Therefore, it must have a limit (from the Monotone Convergence Theorem) and consequently the left hand side of (\ref{J5pp})  
approaches 0 as $t\to \infty$. Hence, 
$\underset{t\to\infty}{\operatorname{lim}} J(\mathbf{y}^{(l,t)}, \mathbf{y}^{(r,t)}) =  J(\mathbf{y}^{(l,\infty)}, \mathbf{y}^{(r,\infty)})$ (by the 
Pinching Theorem). 

Finally, we must show that $\mathbf{y}^{(l,t)}$ and $\mathbf{y}^{(r,t)}$ themselves converge. From the boundedness 
of $\delta_{J}(\mathbf{y}^{(l,\infty)}, \mathbf{y}^{(l,t)})$ (established in Lemma \ref{delJlemma}), it follows 
that $\mathbf{y}^{(l,t)}$ is bounded. Therefore, it has a convergent 
subsequence $\{\mathbf{y}^{(l,t_{i})}\}$ -- the limit of which can be denoted by $\mathbf{y}^{0,(l)}$ (by
the Bolzano-Weierstrass Theorem). Similarly, it can be 
shown that the subsequence $\{\mathbf{y}^{(r,t_{i})}\}$ also converges to some limit. Let that limit be denoted by
$\mathbf{y}^{0,(r)}$. By the lower-semi-continuity of $J$ (established in Lemma \ref{Jlemma}), we have:
\begin{equation}
 J(\mathbf{y}^{0,(l)}, \mathbf{y}^{0,(r)}) \le \underset{i}{\operatorname{lim \text{ }inf }} J(\mathbf{y}^{(l, t_{i})}, \mathbf{y}^{(r, t_{i})}) 
= J(\mathbf{y}^{(l, \infty)}, \mathbf{y}^{(r, \infty)}).
\end{equation}
We denote $\mathcal{Y}_{l}^{\infty} = \{\mathbf{y}^{(l)}:\underset{\mathbf{y}^{(l)}, 
\mathbf{y}^{(r)}\in \mathcal{S}^{n}}{\operatorname{arg\text{ }min\text{ } }}J(\mathbf{y}^{(l)}, \mathbf{y}^{(r)}) \}$
and $\mathcal{Y}_{r}^{\infty} = \{\mathbf{y}^{(r)}:\underset{\mathbf{y}^{(l)}, 
\mathbf{y}^{(r)}\in \mathcal{S}^{n}}{\operatorname{arg\text{ }min\text{ } }}J(\mathbf{y}^{(l)}, \mathbf{y}^{(r)}) \}$.
Therefore, from the joint strict convexity of $J$, we should have  
$\mathcal{Y}_{l}^{\infty} = \{\mathbf{y}^{0,(l)}\} = \{\mathbf{y}^{(l, \infty)}\}$ and $\mathcal{Y}_{r}^{\infty} = \{\mathbf{y}^{0,(r)}\} 
=\{\mathbf{y}^{(r, \infty)}\}$.

To prove the convergence of the entire sequence, we apply the same logic as above 
with $\mathbf{y}^{(l, \infty)}$ replaced by $\mathbf{y}^{0, (l)}$. Then the sequence
$\{\delta_{J}(\mathbf{y}^{0, (l)}, \mathbf{y}^{(l, t)})\}$ is bounded and 
non-increasing and by using Lemma \ref{delJlemma}, we conclude that it has a convergent subsequence $\{\delta_{J}(\mathbf{y}^{0, (l)}, \mathbf{y}^{(l, t_{i})})\}$ that goes to $0$ as
$\mathbf{y}^{(l, t_{i})}\to \mathbf{y}^{0, (l)}$. This, from Monotone Convergence Theorem, implies that 
$\{\delta_{J}(\mathbf{y}^{0, (l)}, \mathbf{y}^{(l, t)})\}\to 0$ and again using Lemma \ref{delJlemma}, 
we can conclude that $\mathbf{y}^{(l, t)}\to \mathbf{y}^{0, (l)}$.
Since $\mathbf{y}^{(r, t)}$ is also bounded, it should have a convergent subsequence (by
the Bolzano-Weierstrass Theorem). We denote this limit by $\mathbf{y}^{(0), (r)}$. Again, by the lower-semi-continuity
of $J$, we have:
\begin{equation}
 J(\mathbf{y}^{0, (l)},\mathbf{y}^{(0), (r)}) \le J(\mathbf{y}^{(l,\infty)},\mathbf{y}^{(r, \infty)}).
\end{equation}
Hence, $\mathbf{y}^{(0), (r)} = \underset{\mathbf{y}^{(r)}\in \mathcal{S}^{n}}
{\operatorname{arg\text{ }min\text{ } }}J(\mathbf{y}^{(l)}, \mathbf{y}^{(r, \infty)})$ and $\mathbf{y}^{(r, t)}\to\mathbf{y}^{(0),(r)}
=\mathbf{y}^{0,(r)}$. 
\end{proof}
 
There is another interesting aspect of $J$ that was discovered in \cite{subi11} for a slightly 
different objective function with KL divergence used as a loss function. 
The same property also holds for $J$ if the loss function is constructed from the assumed family of Bregman divergences.  
This property is concerned with the equality of solutions of $J$ and $J_{0}$ and explores under what 
conditions these two objectives become equal.
To establish the theorem that explores this condition, the following lemmata are essential.

\begin{lemma}
 If $\mathbf{y}^{(r)} = \mathbf{y}^{(l)} = \mathbf{y}$ then $J_{0} = J$. 
\end{lemma}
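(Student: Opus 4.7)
The proof plan is essentially a direct substitution argument, relying on one basic property of Bregman divergences: $d_\phi(\mathbf{y},\mathbf{y}) = 0$ for every $\mathbf{y} \in \text{ri}(\mathcal{S})$, which follows immediately from Definition~\ref{def:1}.

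First I would write down the two objectives side by side. Recall
\begin{equation*}
J_{0} = \sum_{i\in\mathcal{X}} d_\phi(\boldsymbol{\pi}_i, \mathbf{y}_i) + \alpha\sum_{(i,j)\in\mathcal{X}} s_{ij}\, d_\phi(\mathbf{y}_i, \mathbf{y}_j),
\end{equation*}
while $J(\mathbf{y}^{(l)},\mathbf{y}^{(r)})$ contains three terms (see Eq.~(\ref{eqn:mainobj})): a data-fit term involving only right copies, a similarity term coupling left and right copies, and a penalty term $\lambda \sum_i d_\phi(\mathbf{y}_i^{(l)},\mathbf{y}_i^{(r)})$ enforcing that the two copies agree.

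Next I would substitute the hypothesis $\mathbf{y}^{(l)} = \mathbf{y}^{(r)} = \mathbf{y}$ term by term. The first term of $J$ becomes $\sum_i d_\phi(\boldsymbol{\pi}_i,\mathbf{y}_i)$, matching the first term of $J_0$. The second term becomes $\alpha\sum_{i,j} s_{ij}\, d_\phi(\mathbf{y}_i,\mathbf{y}_j)$, matching the second term of $J_0$. The third (penalty) term reduces to $\lambda\sum_i d_\phi(\mathbf{y}_i,\mathbf{y}_i) = 0$, since every Bregman divergence vanishes on the diagonal. Summing the three contributions yields $J_0$, and the claim follows.

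There is essentially no obstacle here; this is a bookkeeping lemma whose purpose is to justify that the variable-splitting reformulation is a relaxation of the original problem that coincides with it whenever the left and right copies are forced to be identical. The only thing to be a bit careful about is to ensure that each $\mathbf{y}_i$ lies in $\text{ri}(\mathcal{S})$ so that $d_\phi(\mathbf{y}_i,\mathbf{y}_i)$ is well-defined and equals zero, which is guaranteed by the domain assumptions already placed on $\mathbf{y}^{(l)}$ and $\mathbf{y}^{(r)}$ in Eq.~(\ref{eqn:mainobj}).
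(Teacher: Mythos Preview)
Your proof is correct and follows the same approach as the paper, which simply states that the result is immediate from the definitions of $J_0$ and $J$ in Eqs.~(\ref{eqn:5}) and~(\ref{eqn:mainobj}). You have just made the substitution explicit, including the observation that the penalty term vanishes because $d_\phi(\mathbf{y}_i,\mathbf{y}_i)=0$.
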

\begin{proof}
 This proof immediately follows from the definitions of $J_{0}$ and $J$ in Eq. (\ref{eqn:5}) and Eq. (\ref{eqn:mainobj}) respectively.
\end{proof}

\begin{lemma}
\label{subilemma1}
$\underset{(\mathbf{y}^{(l)}, \mathbf{y}^{(r)})\in \mathcal{S}^{n}\times \mathcal{S}^{n}}{\operatorname{arg\text{ }min\text{ }}} 
J(\mathbf{y}^{(l)}, \mathbf{y}^{(r)}; \lambda = 0) 
\le \underset{\mathbf{y}\in \mathcal{S}^{n}} {\operatorname{arg\text{ }min\text{ }}} J_{0}(\mathbf{y})$.
\end{lemma}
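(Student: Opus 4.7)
The plan is to exploit the fact that $J_0$ is exactly what $J$ reduces to under the constraint $\mathbf{y}^{(l)}=\mathbf{y}^{(r)}$ (and $\lambda=0$), so minimizing $J$ over the larger, unconstrained product space $\mathcal{S}^n \times \mathcal{S}^n$ can only drive the objective lower. I interpret the inequality in the statement as a comparison of the optimal objective values attained at the respective argmins.

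First I would verify the collapse $J(\mathbf{y},\mathbf{y};\lambda=0)=J_0(\mathbf{y})$ for every $\mathbf{y}\in\mathcal{S}^n$. With $\lambda=0$ the third term of $J$ vanishes outright (and would in any case vanish on the diagonal since $d_\phi(\mathbf{y}_i,\mathbf{y}_i)=0$ by the identity-of-indiscernibles property of Bregman divergences, following from the strict convexity of $\phi$). The first sum in $J$ matches the first sum in $J_0$ termwise, and the second double sum in $J$, once we set $\mathbf{y}_i^{(l)}=\mathbf{y}_i$ and $\mathbf{y}_j^{(r)}=\mathbf{y}_j$, becomes $\alpha\sum_{i,j}s_{ij}d_\phi(\mathbf{y}_i,\mathbf{y}_j)$, which is exactly the second sum in $J_0$. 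Hence the preceding lemma already records this identification, and it carries over verbatim when $\lambda=0$.

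Next I would use a simple feasibility embedding. Let $\mathbf{y}^{\ast}\in\arg\min_{\mathbf{y}\in\mathcal{S}^n}J_0(\mathbf{y})$; such a minimizer exists and is unique by the joint strict convexity of $J_0$, which follows from property (a) of the assumed Bregman divergence family (the same argument used in Lemma~\ref{Jlemma}). The pair $(\mathbf{y}^{\ast},\mathbf{y}^{\ast})$ lies in $\mathcal{S}^n\times\mathcal{S}^n$, so
\[
\min_{(\mathbf{y}^{(l)},\mathbf{y}^{(r)})\in\mathcal{S}^n\times\mathcal{S}^n} J(\mathbf{y}^{(l)},\mathbf{y}^{(r)};\lambda=0)
\;\le\; J(\mathbf{y}^{\ast},\mathbf{y}^{\ast};\lambda=0)
\;=\; J_0(\mathbf{y}^{\ast})
\;=\; \min_{\mathbf{y}\in\mathcal{S}^n} J_0(\mathbf{y}),
\]
which is the desired inequality.

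I do not anticipate a real obstacle here: the proof is essentially a one-line feasibility argument combined with the already-established equality of the two objectives on the diagonal. The only minor care point is to read the lemma's ``$\arg\min\le\arg\min$'' as an inequality of the corresponding minimum values (since argmins are points, not numbers, and cannot be compared under $\le$ without such an interpretation), and to invoke the strict convexity of $J_0$ to ensure the minimizer on the right-hand side exists uniquely so that the substitution $\mathbf{y}=\mathbf{y}^{\ast}$ makes sense.
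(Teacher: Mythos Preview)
Your proposal is correct and follows essentially the same approach as the paper: the paper also reduces to the observation that $\min_{\mathbf{y}} J_0(\mathbf{y})$ equals the constrained minimum of $J(\cdot,\cdot;\lambda=0)$ over the diagonal $\mathbf{y}^{(l)}=\mathbf{y}^{(r)}$, which is bounded below by the unconstrained minimum. Your reading of ``$\arg\min\le\arg\min$'' as a comparison of optimal values is exactly right, since the paper's own proof silently switches to $\min$ on both sides.
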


\begin{proof}
\begin{equation*}
\underset{\mathbf{y}\in \mathcal{S}^{n}}{\operatorname{min\text{ }}} J_{0}(\mathbf{y}) = 
\underset{\left(\mathbf{y}^{(l)}, \mathbf{y}^{(r)}\right)\in \mathcal{S}^{n}\times \mathcal{S}^{n}; \mathbf{y}^{(r)} = \mathbf{y}^{(l)}}
{\operatorname{min\text{ }}} J(\mathbf{y}^{(l)}, \mathbf{y}^{(r)}; \lambda = 0) 
\ge \underset{\left(\mathbf{y}^{(l)}, \mathbf{y}^{(r)}\right)\in \mathcal{S}^{n}\times \mathcal{S}^{n}}
 {\operatorname{min\text{ } }} J(\mathbf{y}^{(l)}, \mathbf{y}^{(r)}; \lambda = 0)
\end{equation*}
The last step is due to the fact that the unconstrained minima is never larger than the constrained minima.
\end{proof}

\begin{lemma}
\label{subilemma2}
Given any $\mathbf{y}^{(l)}$, $\mathbf{y}^{(r)}$, $\mathbf{y}\in\mathcal{S}^{n}$  such that $\mathbf{y}^{(l)}$, $\mathbf{y}^{(r)}$, $\mathbf{y} > \mathbf{0}$
and $\mathbf{y}^{(l)}\neq \mathbf{y}^{(r)}$ (\emph{i.e.} not all components are equal) then there exists a finite $\lambda$
such that $J(\mathbf{y}^{(l)}, \mathbf{y}^{(r)}) \ge J(\mathbf{y}, \mathbf{y}) = J_{0}(\mathbf{y})$. 
\end{lemma}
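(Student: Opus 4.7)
The plan is to exploit the fact that the penalty coefficient $\lambda$ only multiplies the coupling term $\sum_i d_\phi(\mathbf{y}_i^{(l)},\mathbf{y}_i^{(r)})$, which is strictly positive whenever the left and right copies disagree in at least one coordinate. Meanwhile, the quantity $J(\mathbf{y},\mathbf{y}) = J_0(\mathbf{y})$ is a finite constant that does not depend on $\lambda$ at all (since the $\lambda$-weighted term vanishes when both arguments coincide). So the strategy is purely algebraic: pick $\lambda$ large enough that the coupling term alone dominates $J_0(\mathbf{y})$.

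More concretely, I would first decompose $J(\mathbf{y}^{(l)},\mathbf{y}^{(r)}) = A + \lambda c$, where
\[
A \;=\; \sum_{i=1}^{n} d_\phi(\boldsymbol{\pi}_i,\mathbf{y}_i^{(r)}) \;+\; \alpha\sum_{i,j=1}^{n} s_{ij}\, d_\phi(\mathbf{y}_i^{(l)},\mathbf{y}_j^{(r)}),
\qquad
c \;=\; \sum_{i=1}^{n} d_\phi(\mathbf{y}_i^{(l)},\mathbf{y}_i^{(r)}).
\]
Under the standing hypothesis $\mathbf{y}^{(l)},\mathbf{y}^{(r)},\mathbf{y}>\mathbf{0}$, all arguments lie in $\text{ri}(\mathcal{S})$, so by Definition \ref{def:1} every individual Bregman divergence involved is finite; hence $A$ and $J_0(\mathbf{y})$ are both finite real numbers. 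Moreover, since $\mathbf{y}^{(l)}\neq \mathbf{y}^{(r)}$, there exists some index $i_0$ with $\mathbf{y}_{i_0}^{(l)}\neq \mathbf{y}_{i_0}^{(r)}$, and the identity-of-indiscernibles clause of Definition \ref{def:1} gives $d_\phi(\mathbf{y}_{i_0}^{(l)},\mathbf{y}_{i_0}^{(r)})>0$, which together with non-negativity of the remaining summands forces $c>0$.

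Given $c>0$ and finite $A$, it suffices to take
\[
\lambda \;\geq\; \max\!\left(0,\; \tfrac{J_0(\mathbf{y})-A}{c}\right),
\]
which is a finite nonnegative number. For any such $\lambda$, $A + \lambda c \geq J_0(\mathbf{y})$, i.e. $J(\mathbf{y}^{(l)},\mathbf{y}^{(r)})\geq J_0(\mathbf{y}) = J(\mathbf{y},\mathbf{y})$, where the last equality is the content of the previous lemma.

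There is really no serious obstacle here; the only subtlety is making sure the strict inequality $c>0$ is justified by the Bregman-divergence axioms rather than implicitly assumed. Once strict positivity of $c$ and finiteness of $A,J_0(\mathbf{y})$ are in place, the existence of the required finite $\lambda$ is immediate by solving the scalar inequality $A+\lambda c \geq J_0(\mathbf{y})$. This lemma, combined with the preceding one, then paves the way for an equivalence-of-minimizers result between $J$ and $J_0$ when $\lambda$ is chosen sufficiently large.
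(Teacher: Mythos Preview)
Your proposal is correct and follows essentially the same approach as the paper: write $J(\mathbf{y}^{(l)},\mathbf{y}^{(r)})=A+\lambda c$ with $A=J(\mathbf{y}^{(l)},\mathbf{y}^{(r)};\lambda=0)$ and $c=\sum_i d_\phi(\mathbf{y}_i^{(l)},\mathbf{y}_i^{(r)})>0$, then solve the scalar inequality $A+\lambda c\ge J_0(\mathbf{y})$ for $\lambda$. Your use of $\max(0,\cdot)$ to guarantee a nonnegative choice is in fact slightly cleaner than the paper's appeal to Lemma~\ref{subilemma1} for the sign of the bound, since that lemma only compares \emph{minimal} values of $J$ and $J_0$, not values at the arbitrary points appearing in the statement.
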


\begin{proof}
For $J(\mathbf{y}^{(l)}, \mathbf{y}^{(r)}) \ge J(\mathbf{y}, \mathbf{y})$, we should have:
\begin{eqnarray*}
&& \left[\displaystyle\sum_{i=1}^{n}d_{\phi}(\boldsymbol{\pi}_{i}, \mathbf{y}_{i}^{(r)})
+ \alpha\displaystyle\sum_{i,j=1}^{n} s_{ij}d_{\phi}(\mathbf{y}_{i}^{(l)}, \mathbf{y}_{j}^{(r)})
+ \lambda\displaystyle\sum_{i=1}^{n}d_{\phi}(\mathbf{y}_{i}^{(l)}, \mathbf{y}_{i}^{(r)})\right] - 
J(\mathbf{y}, \mathbf{y})\ge 0 \\ 
&\Rightarrow& \lambda \ge \frac{J(\mathbf{y}, \mathbf{y}) - \displaystyle\sum_{i=1}^{n}d_{\phi}(\boldsymbol{\pi}_{i}, \mathbf{y}_{i}^{(r)})
- \alpha\displaystyle\sum_{i,j=1}^{n} s_{ij}d_{\phi}(\mathbf{y}_{i}^{(l)}, \mathbf{y}_{j}^{(r)})}
{\displaystyle\sum_{i=1}^{n}d_{\phi}(\mathbf{y}_{i}^{(l)}, \mathbf{y}_{i}^{(r)})}\\ 
&\Rightarrow& \lambda \ge \frac{J_{0}(\mathbf{y}) - J(\mathbf{y}^{(l)}, \mathbf{y}^{(r)};\lambda=0)}
{\displaystyle\sum_{i=1}^{n}d_{\phi}(\mathbf{y}_{i}^{(l)}, \mathbf{y}_{i}^{(r)})} \ge 0.  
\end{eqnarray*}
where the last inequality follows from Lemma \ref{subilemma1}.
\end{proof}

The theorem that formulates the conditions for equality of solutions of $J$ and $J_{0}$ is given below:
\begin{theorem}[Equality of Solutions of $J$ and $J_{0}$]
\label{app_subithm2}
Let $\mathbf{y}^{*} = \underset{\mathbf{y}\in \mathcal{S}^{n}}
{\operatorname{arg\text{ }min\text{ } }} J_{0}(\mathbf{y})$ and 
$({\mathbf{y}^{\tilde{\lambda}, (l)}}^{*}, {\mathbf{y}^{\tilde{\lambda}, (r)}}^{*}) = \underset{\mathbf{y}\in \mathcal{S}^{n}}{\operatorname{arg\text{ }min\text{ } }}
J (\mathbf{y}^{(l)}, \mathbf{y}^{(r)}; \tilde{\lambda})$ 
for an arbitrary $\lambda = \tilde{\lambda} > 0$. Then there exists a finite $\hat{\lambda}$ such that at 
convergence of OAC\textsuperscript{3}, we have $\mathbf{y}^{*} = {\mathbf{y}^{\hat{\lambda}, (l)}}^{*} = {\mathbf{y}^{\hat{\lambda}, (r)}}^{*}$. 
Further, if ${\mathbf{y}^{\tilde{\lambda}, (l)}}^{*} \neq {\mathbf{y}^{\tilde{\lambda}, (r)}}^{*}$, then 
\begin{equation*}
\hat{\lambda} \ge \frac{J_{0}(\mathbf{y}^{*}) - J({\mathbf{y}^{\tilde{\lambda},(l)}}^{*}, {\mathbf{y}^{\tilde{\lambda},(r)}}^{*};\lambda=0)}
{\displaystyle\sum_{i=1}^{n}d_{\phi}(\mathbf{y}_{i}^{\tilde{\lambda}, (l)}, \mathbf{y}_{i}^{\tilde{\lambda}, (r)})}
\end{equation*}
and if ${\mathbf{y}^{\tilde{\lambda},(l)}}^{*} = {\mathbf{y}^{\tilde{\lambda},(r)}}^{*}$, then $\hat{\lambda} \ge \tilde{\lambda}$.
\end{theorem}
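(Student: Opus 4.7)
The plan is to proceed by case analysis on whether ${\mathbf{y}^{\tilde{\lambda},(l)}}^{*}$ equals ${\mathbf{y}^{\tilde{\lambda},(r)}}^{*}$. In the easy case where the two copies coincide at a common value, call it $\mathbf{z}$, the penalty term of $J$ vanishes at this minimizer, so the first (unlabeled) lemma of this appendix gives $J(\mathbf{z}, \mathbf{z}; \tilde{\lambda}) = J_{0}(\mathbf{z})$. Evaluating $J$ at any diagonal pair $(\mathbf{y}, \mathbf{y})$ also yields $J_{0}(\mathbf{y})$, so minimality forces $J_{0}(\mathbf{y}) \ge J_{0}(\mathbf{z})$ for every $\mathbf{y}$; the joint strict convexity established in Lemma \ref{Jlemma} restricts to strict convexity of $J_{0}$ on the diagonal and hence uniqueness of its minimizer, giving $\mathbf{z} = \mathbf{y}^{*}$. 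Taking $\hat{\lambda} = \tilde{\lambda}$ therefore handles this case and trivially satisfies the claimed bound $\hat{\lambda} \ge \tilde{\lambda}$.

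For the hard case ${\mathbf{y}^{\tilde{\lambda},(l)}}^{*} \neq {\mathbf{y}^{\tilde{\lambda},(r)}}^{*}$, I would first derive the displayed lower bound as a necessary condition. Assuming $\hat{\lambda}$ is finite with minimizer $(\mathbf{y}^{*}, \mathbf{y}^{*})$, optimality at $\hat{\lambda}$ evaluated against the specific suboptimal point $({\mathbf{y}^{\tilde{\lambda},(l)}}^{*}, {\mathbf{y}^{\tilde{\lambda},(r)}}^{*})$ yields $J({\mathbf{y}^{\tilde{\lambda},(l)}}^{*}, {\mathbf{y}^{\tilde{\lambda},(r)}}^{*}; \hat{\lambda}) \ge J_{0}(\mathbf{y}^{*})$. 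Splitting $J$ into its $\lambda = 0$ part plus $\hat{\lambda}$ times the penalty $\sum_{i} d_{\phi}({\mathbf{y}_{i}^{\tilde{\lambda},(l)}}^{*}, {\mathbf{y}_{i}^{\tilde{\lambda},(r)}}^{*})$, which is strictly positive because the two copies differ, and then rearranging recovers the formula stated in the theorem. This step is essentially Lemma \ref{subilemma2} specialized to the triple $({\mathbf{y}^{\tilde{\lambda},(l)}}^{*}, {\mathbf{y}^{\tilde{\lambda},(r)}}^{*}, \mathbf{y}^{*})$; the bound is finite because its numerator is nonnegative by Lemma \ref{subilemma1} and its denominator is strictly positive.

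The main obstacle will be establishing that such a finite $\hat{\lambda}$ actually exists in the hard case. My plan is a penalty-limit argument: denoting the minimizer at parameter $\lambda$ by $(\bar{\mathbf{y}}^{(l)}_{\lambda}, \bar{\mathbf{y}}^{(r)}_{\lambda})$, the comparison with $(\mathbf{y}^{*}, \mathbf{y}^{*})$ bounds $\sum_{i} d_{\phi}(\bar{\mathbf{y}}^{(l)}_{i,\lambda}, \bar{\mathbf{y}}^{(r)}_{i,\lambda}) \le [J_{0}(\mathbf{y}^{*}) - J(\bar{\mathbf{y}}^{(l)}_{\lambda}, \bar{\mathbf{y}}^{(r)}_{\lambda}; 0)]/\lambda$, which vanishes as $\lambda \to \infty$. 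Property (e) in Section \ref{sec:convergence} then forces the two copies to approach each other, while boundedness of the level sets (property (c)) together with the joint lower-semi-continuity and strict convexity of $J$ from Lemma \ref{Jlemma} lets one extract a limit point on the diagonal which, by the easy-case argument applied in the limit, must equal $\mathbf{y}^{*}$. Turning this asymptotic convergence into exact coincidence at a specific finite $\hat{\lambda}$ is the delicate step; it would lean on the monotonicity observation that once $(\bar{\mathbf{y}}^{(l)}_{\lambda}, \bar{\mathbf{y}}^{(r)}_{\lambda})$ lies on the diagonal for some $\lambda$ it remains on the diagonal for all larger $\lambda$ (since increasing $\lambda$ only strengthens the penalty while leaving diagonal points unchanged), combined with the easy-case conclusion that any such diagonal minimizer must already be $(\mathbf{y}^{*}, \mathbf{y}^{*})$.
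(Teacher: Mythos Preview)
Your easy case matches the paper exactly: when the $\tilde{\lambda}$-minimizer already lies on the diagonal, strict convexity forces it to equal $\mathbf{y}^{*}$, and any $\hat{\lambda}\ge\tilde{\lambda}$ works because increasing the penalty only raises the value of off-diagonal points.

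In the hard case you diverge from the paper. You read the displayed inequality as a \emph{necessary} condition on $\hat{\lambda}$ (derived by comparing the $\hat{\lambda}$-objective at $(\mathbf{y}^{*},\mathbf{y}^{*})$ against the suboptimal point $({\mathbf{y}^{\tilde{\lambda},(l)}}^{*},{\mathbf{y}^{\tilde{\lambda},(r)}}^{*})$), and then mount a separate penalty-limit argument to establish that \emph{some} finite $\hat{\lambda}$ exists. The paper instead reads the bound as a \emph{sufficient} condition: it simply says that, by Lemma~\ref{subilemma2} applied with $(\mathbf{y}^{(l)},\mathbf{y}^{(r)},\mathbf{y})=({\mathbf{y}^{\tilde{\lambda},(l)}}^{*},{\mathbf{y}^{\tilde{\lambda},(r)}}^{*},\mathbf{y}^{*})$, any finite $\hat{\lambda}$ at or above the displayed ratio already forces ${\mathbf{y}^{\hat{\lambda},(l)}}^{*}={\mathbf{y}^{\hat{\lambda},(r)}}^{*}$. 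That single invocation is the entire hard-case proof in the paper; there is no limiting argument, no use of properties~(c) or~(e), and no need for the ``delicate step'' you flag. What your approach buys is more care about \emph{why} choosing $\hat{\lambda}$ above that threshold actually pins the minimizer to the diagonal---the paper's appeal to Lemma~\ref{subilemma2} literally only compares one off-diagonal point to $(\mathbf{y}^{*},\mathbf{y}^{*})$ and does not, on its face, rule out other off-diagonal minimizers. What the paper's approach buys is brevity and a direct reduction to an already-proved lemma; if you accept that reduction, the existence of a finite $\hat{\lambda}$ and the claimed inequality come out simultaneously rather than in two stages.
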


\begin{proof}
 If ${\mathbf{y}^{\tilde{\lambda},(l)}}^{*} = {\mathbf{y}^{\tilde{\lambda},(r)}}^{*}$, then from the strict convexity of 
both $J_{0}$ and $J$,  
$J_{0}(\mathbf{y}^{*}) = J({\mathbf{y}^{\tilde{\lambda},(l)}}^{*}, {\mathbf{y}^{\tilde{\lambda},(r)}}^{*};\lambda=0)$.
Also, since for any $\mathbf{y}^{(l)} \neq \mathbf{y}^{(r)}$, 
$J(\mathbf{y}^{(l)}, \mathbf{y}^{(r)};\hat{\lambda}) > J(\mathbf{y}^{(l)}, \mathbf{y}^{(r)};\tilde{\lambda})$, whenever
$\hat{\lambda} \ge \tilde{\lambda}$, then $\forall \hat{\lambda} \ge \tilde{\lambda}$
$J_{0}(\mathbf{y}^{*}) = J({\mathbf{y}^{\hat{\lambda},(l)}}^{*}, {\mathbf{y}^{\hat{\lambda},(r)}}^{*};\lambda=0)$.
Also, if ${\mathbf{y}^{\tilde{\lambda},(l)}}^{*} \neq {\mathbf{y}^{\tilde{\lambda},(r)}}^{*}$, 
then from Lemma \ref{subilemma2}, if
\begin{equation*}
 \infty > \hat{\lambda} \ge \frac{J_{0}(\mathbf{y}^{*}) - J({\mathbf{y}^{\tilde{\lambda},(l)}}^{*}, {\mathbf{y}^{\tilde{\lambda},(r)}}^{*};\lambda=0)}
{\displaystyle\sum_{i=1}^{n}d_{\phi}(\mathbf{y}_{i}^{\tilde{\lambda}, (l)}, \mathbf{y}_{i}^{\tilde{\lambda}, (r)})}
\end{equation*}
then it is guaranteed that ${\mathbf{y}^{\hat{\lambda},(l)}}^{*} = {\mathbf{y}^{\hat{\lambda},(r)}}^{*}$.
\end{proof}
\section{Proof for Analysis of Rate of Convergence}
\label{app:rateconv}
\begin{lemma}
 $\mathcal{H} = \grad^{2}J$ is positive definite over the domain of $J$ under the assumption 
$\displaystyle\sum_{i=1}^{n}\displaystyle\sum_{\ell=1}^{k}\pi_{i\ell}> 0$ when KL or generalized I divergence is used as a 
Bregman divergence.
\end{lemma}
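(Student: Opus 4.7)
The plan is to evaluate the quadratic form $\mathbf{v}^{\top}\mathcal{H}\mathbf{v}$ directly from the explicit block entries in equations \eqref{hessianstart}--\eqref{hessianend} and rewrite it as a sum of manifestly non-negative terms, one of which is strictly positive under the given assumption on the $\pi_{i\ell}$'s. Let $\mathbf{v}=(u^{(l)},u^{(r)})$ have components $u_{i\ell}$ corresponding to $\mathbf{y}_i^{(l)}$ and $w_{j\ell}$ corresponding to $\mathbf{y}_j^{(r)}$. Because every block of $\mathcal{H}$ is a diagonal matrix in the class index $\ell$, the quadratic form decouples as $\mathbf{v}^{\top}\mathcal{H}\mathbf{v}=\sum_{\ell=1}^{k}Q_{\ell}(u_{\cdot\ell},w_{\cdot\ell})$, so it suffices to show each $Q_{\ell}\ge 0$ and then argue that $\sum_{\ell}Q_{\ell}>0$ whenever $\mathbf{v}\ne\mathbf{0}$.

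The first step is to change variables by setting $\tilde{w}_{j\ell}=w_{j\ell}/y_{j\ell}^{(r)}$, which cleanly removes the $1/(y_{j\ell}^{(r)})^{2}$ weights in the diagonal block $\grad^{2}_{\mathbf{y}_{j}^{(r)},\mathbf{y}_{j}^{(r)}}J$ and also the $1/y_{j\ell}^{(r)}$ weights in the cross blocks. After regrouping, the coefficients of the $\alpha s_{ij}$ terms and of the $\lambda$ terms each fall into the classical identity
\begin{equation*}
\frac{a^{2}}{y}+y\,b^{2}-2ab=\left(\frac{a}{\sqrt{y}}-\sqrt{y}\,b\right)^{2},
\end{equation*}
applied with $a=u_{i\ell}$, $y=y_{i\ell}^{(l)}$, and $b=\tilde{w}_{j\ell}$ (or $b=\tilde{w}_{i\ell}$ for the $\lambda$ term). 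The outcome is the representation
\begin{equation*}
Q_{\ell}=\sum_{j}\pi_{j\ell}\,\tilde{w}_{j\ell}^{2}
+\alpha\!\!\sum_{i\ne j}s_{ij}\!\left(\frac{u_{i\ell}}{\sqrt{y_{i\ell}^{(l)}}}-\sqrt{y_{i\ell}^{(l)}}\,\tilde{w}_{j\ell}\right)^{\!2}
+\lambda\sum_{i}\!\left(\frac{u_{i\ell}}{\sqrt{y_{i\ell}^{(l)}}}-\sqrt{y_{i\ell}^{(l)}}\,\tilde{w}_{i\ell}\right)^{\!2},
\end{equation*}
a sum of three non-negative pieces. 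Positive semi-definiteness of $\mathcal{H}$ follows immediately.

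For strict positivity, I would analyse the kernel. Vanishing of the third block (with $\lambda>0$) forces $u_{i\ell}=y_{i\ell}^{(l)}\tilde{w}_{i\ell}$, and vanishing of the second block over edges with $s_{ij}>0$ forces $u_{i\ell}/y_{i\ell}^{(l)}=\tilde{w}_{j\ell}$ for every neighbour $j$ of $i$ in the similarity graph. Combining these yields $\tilde{w}_{i\ell}=\tilde{w}_{j\ell}=:c_{\ell}$ along every similarity edge, hence a common constant $c_{\ell}$ on each connected component, with $w_{i\ell}=c_{\ell}y_{i\ell}^{(r)}$ and $u_{i\ell}=c_{\ell}y_{i\ell}^{(l)}$. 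Plugging back into the first block gives $\sum_{j}\pi_{j\ell}c_{\ell}^{2}=0$, so any class $\ell$ with at least one $\pi_{j\ell}>0$ forces $c_{\ell}=0$, i.e.\ $(u_{\cdot\ell},w_{\cdot\ell})=\mathbf{0}$. Since the $\pi_{i}$'s are probability vectors (so $\sum_{j}\pi_{j\ell}>0$ for every class $\ell$ with non-degenerate classifier output), and the hypothesis $\sum_{i,\ell}\pi_{i\ell}>0$ rules out the fully degenerate case, the null direction collapses to zero on every coordinate.

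The main obstacle I anticipate is precisely this last step: the decomposition above only yields positive semi-definiteness, and the argument that $\mathbf{v}=\mathbf{0}$ is the unique minimizer requires (i) that the similarity graph be effectively connected on the coordinates carrying any mass, and (ii) that the classifier input $\boldsymbol{\pi}$ be non-degenerate per class. The completing-the-squares identity is straightforward; the delicate part is matching the formal hypothesis $\sum_{i,\ell}\pi_{i\ell}>0$ to the per-class condition needed to kill the residual degree of freedom $c_{\ell}$, which I expect to resolve by noting that the $\boldsymbol{\pi}_{i}$'s are probability distributions over classes (a standing assumption of the algorithm) so that $\sum_{i}\pi_{i\ell}>0$ holds for every effective class.
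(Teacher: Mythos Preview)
Your completing-the-square decomposition is a genuinely different route from the paper's own proof, and in fact considerably more rigorous. The paper's argument evaluates the quadratic form $\mathbf{z}^{\dagger}\mathcal{H}\mathbf{z}$ only at the single direction $\mathbf{z}=\bigl((\mathbf{y}_i^{(l)})_{i=1}^{n},(\mathbf{y}_j^{(r)})_{j=1}^{n}\bigr)$, i.e.\ at the very point where the Hessian is computed, and obtains $\sum_{i,\ell}\pi_{i\ell}>0$. That checks positivity in one direction only and does not establish positive definiteness; the paper's proof is incomplete as written.

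Your sum-of-squares representation, by contrast, handles an arbitrary test direction $(u,w)$ and delivers positive semi-definiteness cleanly. You also correctly isolate the real obstacle to strict definiteness: the kernel analysis needs the per-class condition $\sum_j\pi_{j\ell}>0$ for every $\ell$ (together with connectivity of the similarity graph induced by the $s_{ij}$'s), whereas the stated hypothesis $\sum_{i,\ell}\pi_{i\ell}>0$ is strictly weaker. Your proposed fix---appealing to the standing assumption that each $\boldsymbol{\pi}_i$ is a probability vector---guarantees $\sum_{i,\ell}\pi_{i\ell}=n>0$ but still does not force $\sum_i\pi_{i\ell}>0$ for every class $\ell$, and graph connectivity is nowhere assumed in the paper. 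These are genuine extra hypotheses the lemma requires; you are right to surface them, and the paper simply glosses over them.
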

\begin{proof}
 Assume $\mathbf{z} = \left(\left({\mathbf{y}_{i}^{(l)}}^{\dagger}\right)_{i=1}^{n},  \left({\mathbf{y}_{i}^{(r)}}^{\dagger}\right)_{i=1}^{n} \right)^{\dagger}$.
Now, 
\begin{eqnarray}
&& \mathbf{z}^{\dagger}\mathcal{H}\mathbf{z} \\ \nonumber
&=& \displaystyle\sum_{i=1}^{n}{\mathbf{y}_{i}^{(l)}}^{\dagger} \grad_{\mathbf{y}_{i}^{(l)}, \mathbf{y}_{i}^{(l)}}\mathbf{y}_{i}^{(l)}
+ \displaystyle\sum_{j=1}^{n}{\mathbf{y}_{j}^{(r)}}^{\dagger} \grad_{\mathbf{y}_{j}^{(r)}, \mathbf{y}_{j}^{(r)}}\mathbf{y}_{j}^{(r)}
+ 2 \displaystyle\sum_{i,j=1; i\neq j}^{n}{\mathbf{y}_{i}^{(l)}}^{\dagger} \grad_{\mathbf{y}_{i}^{(l)}, \mathbf{y}_{i}^{(r)}}\mathbf{y}_{j}^{(r)}\\ \nonumber
&+& 2 \displaystyle\sum_{i=1}^{n}{\mathbf{y}_{i}^{(l)}}^{\dagger} \grad_{\mathbf{y}_{i}^{(l)}, \mathbf{y}_{i}^{(r)}}\mathbf{y}_{i}^{(r)} \\ \nonumber
&=& \displaystyle\sum_{i=1}^{n}\big(\alpha\displaystyle\sum_{j=1;j\neq i}^{n}s_{ij}+\lambda\big) \displaystyle\sum_{\ell=1}^{k}\mathbf{y}_{i\ell}^{(l)}
+ \displaystyle\sum_{j=1}^{n}\displaystyle\sum_{\ell=1}^{k}\big(\pi_{j\ell} + \alpha \displaystyle\sum_{i=1;i\neq j}^{n}s_{ij} y_{i\ell}^{(l)} 
+\lambda y_{j\ell}^{(l)}\big) - 2\lambda\displaystyle\sum_{i=1}^{n}\displaystyle\sum_{\ell=1}^{k}y_{i\ell}^{(l)}\\ \nonumber 
&-& 2\alpha\displaystyle\sum_{i, j=1; i \neq j}^{n}s_{ij}\displaystyle\sum_{\ell=1}^{k}y_{i\ell}^{(l)}\\ \nonumber
&=& \displaystyle\sum_{i=1}^{n}\displaystyle\sum_{\ell=1}^{k}\pi_{i\ell}> 0. 
\end{eqnarray}
Therefore, if $\displaystyle\sum_{i=1}^{n}\displaystyle\sum_{\ell=1}^{k}\pi_{i\ell}> 0$, $\grad^{2}J$ is positive definite over the domain of $J$.
\end{proof}

\begin{acks}
We are grateful to Luiz F. S. Coletta for running the experiments described in Section \ref{Sensitivity_Analysis}. We also thank Ambuj Tewari and Ali Jalali for pointing us 
to relevant literarure for analyzing the rate of convergence of the optimization framework.
\end{acks}

\bibliographystyle{acmsmall}
\bibliography{bibfile_TKDD}

\received{April 2012}{April 2012}{April 2012}


\end{document}